\newtheorem{theorem}{Theorem}
\newtheorem{lemma}[theorem]{Lemma}
\newtheorem{definition}[theorem]{Definition}
\newcommand{\rev}[1]{{\color{black}{#1}}}
\newcommand{\mathi}{\mathrm{i}}
\newcommand{\dt}{\delta t}
\newcommand{\FPto}{\longrightarrow}
\begin{document}

\title[]{Local Diffusion Models and Phases of Data Distributions}


\makeatletter

\author{Fangjun Hu}
\email{fhu@quera.com}
\affiliation{Department of Electrical and Computer Engineering, Princeton University, Princeton, NJ 08544, USA}
\affiliation{QuEra Computing Inc., 1284 Soldiers Field Road, Boston, MA 02135, USA}

\author{Guangkuo Liu}
\email{guangkuo.liu@colorado.edu}
\affiliation{JILA and Department of Physics, University of Colorado Boulder, Boulder, CO 80309, USA}

\author{Yifan F. Zhang}
\email{yz4281@princeton.edu}
\affiliation{Department of Electrical and Computer Engineering, Princeton University, Princeton, NJ 08544, USA}

\author{Xun Gao}
\email{xun.gao@colorado.edu}
\affiliation{JILA and Department of Physics, University of Colorado Boulder, Boulder, CO 80309, USA}

\date{\today}

\begin{abstract}

As a class of generative artificial intelligence frameworks inspired by statistical physics, diffusion models have shown extraordinary performance in synthesizing complicated data distributions through a denoising process gradually guided by score functions.
Real-life data, like images, is often spatially structured in low-dimensional spaces. However, ordinary diffusion models ignore this local structure and learn spatially global score functions, which are often computationally expensive.
\rev{In this work, motivated by recent advances in non-equilibrium statistical physics, we develop a generic framework for defining phases of data distributions and use it to analyze the locality requirements of denoisers in diffusion models.}
We define two distributions as belonging to the same data distribution phase if they can be mutually connected via spatially local operations such as local denoisers, \rev{along the same evolution path as the diffusion}.
We demonstrate that the reverse denoising process consists of an early trivial phase and a late data phase, sandwiching a rapid phase transition where local denoisers must fail.
\rev{We further demonstrate that the performance of local denoisers is closely tied to spatial Markovianity, which provides an operational criterion for diagnosing such phase transitions. We validate this criterion through numerical experiments on real-world datasets.}
Our work suggests guidance for simpler and more efficient architectures of diffusion models: far from the phase transition point, we can use small local neural networks to compute the score function; global neural networks are only necessary around the narrow time interval of phase transitions.
This result also opens up new directions for studying phases of data distributions, the broader science of generative artificial intelligence, and guiding the design of neural networks inspired by physics concepts.

\end{abstract}

\maketitle

\section{Introduction}

Inspired by the analogy to diffusion processes in non-equilibrium thermodynamics, diffusion models offer a physically intuitive framework for learning and generating complex data distributions \cite{sohl_deep_2015, song_generative_2019, ho_denoising_2020, song_ddim_2021, song_sde_2021}. After numerous testaments in practice, the denoising diffusion probabilistic model (DDPM) \cite{ho_denoising_2020} and its variants, like the denoising diffusion implicit model (DDIM) \cite{song_ddim_2021} and flow matching \cite{lipman2022flow}, have performed excellently in generating high-quality and diverse images and videos.
These advantages have made diffusion models cornerstones of many recent breakthroughs in text-to-image and text-to-video generation \cite{midjourney, stable2022, openai_dalle, sora_openai, imagen_google_web}. 

Although diffusion models have achieved huge successes in image and video generation, their training cost is also tremendous.
In general, diffusion models generate complicated data distributions by a diffusion process that evolves the desired distribution to another simple distribution (usually white noise obeying a pixel-wise independent Gaussian distribution); and then denoising from the white noise to the desired distributions (see Fig.\,\ref{fig:schematic}a).
The denoising process is constructed by introducing a distribution-dependent drift term -- called the \textit{score function}.
While the forward diffusion is usually performed locally in each pixel, the time-reversal denoiser in practice acts globally on the entire image. Therefore, score functions usually have to be learned by training a complicated neural network on a large dataset, such as score matching methods \cite{hyvarinen_score_2005, song_generative_2019}. Training and generation of these scores are computationally expensive, which constitutes a bottleneck in saving the overhead of diffusion models.

However, real-life data often exhibits a structure of \textit{spatial locality}. In images, for instance, the position of a pixel and its correlation with its neighborhood carry meaningful information.
As ordinary diffusion models neglect this locality information, diffusion models incorporating local denoising mechanisms have attracted growing interest in the machine learning community. 
This idea of computing score functions locally -- referred to as \emph{local diffusion models} (also known as patch diffusion models) -- has shown empirical success \cite{wang2023patch, ding2023patched, kamb2024analytic, niedoba2024towards}. Nevertheless, a thorough theoretical understanding of such models remains underdeveloped, and it is still unclear under what conditions this local approximation is valid.

This work aims to understand the locality of denoisers by introducing a new perspective -- the \textit{phases of data distributions} \cite{chen2010local, coser_classification_2019, sang2024mixed, sang_statbility_2025}.
\rev{Our work is motivated by the historical evolution of phase definitions in physics, which have undergone repeated upgrades toward greater generality and operational meaning.
Many traditional notions of phases are not well-suited for modern machine learning settings.
For example, thermodynamic definitions based on non-analyticities of the free energy do not apply to non-equilibrium diffusion dynamics.
Similarly, in diffusion models with pixel-wise noise, two-point correlations are constrained by the data-processing inequality to decay monotonically along the forward process, preventing them from diagnosing phase transitions in DDPM or DDIM.
Symmetry-breaking or clustering bifurcation has therefore been explored as an alternative indicator of transitions
\cite{biroli_dynamical_2024, raya_spontaneous_2023, li_critical_2024, sclocchi_phase_2024, li_blink_2025, bachtis_2025_cascade}.
However, symmetry-breaking-based analyses typically require the identification of appropriate order parameters, which can be challenging outside of highly structured settings. In many practical cases, such order parameters are not known a prior and must be introduced in a data-specific manner, often relying on deliberately selected or preprocessed datasets.
This makes symmetry-breaking-based approaches difficult to generalize to arbitrary real-world data.
}

\rev{Until very recently, frontier developments in the theory of quantum mixed states proposed a recovery-based notion of phases, in which two mixed states are said to belong to the same phase if they can be connected by a finite-depth sequence of local quantum channels along the same evolution path \cite{coser_classification_2019, sang2024mixed, sang_statbility_2025, sang_2025_mixed}.
This recovery-based viewpoint traces back to the locality-based classification of quantum pure state phases initiated by Chen, Gu, and Wen  \cite{chen2010local} and has since developed extensively within a broader and active research field known as \textit{topological order} \cite{wen_1989_vacuum, wen_1990_topological, wen_2007_quantum, chen_2011_two, chen_2013_symmetry, haah_2011_local, haah_2016_invariant, zeng_2019_quantum}.
This operational definition avoids equilibrium assumptions and does not rely on symmetry or correlation length.
Only with these recent advances has such a framework reached sufficient generality and robustness to be meaningfully applied to data distributions arising in modern machine learning.
}

\rev{Building on this perspective, the main contribution of this paper is to introduce a recovery-based, operational definition of phases for classical probability distributions and to demonstrate its relevance for diffusion models.}
Informally, we define two classical distributions as belonging to the same phase if they can be transformed into each other through a series of local channels \rev{along the same evolution path in probability distribution space}.
And in the context of diffusion models, channels in the forward and backward processes correspond to the forward diffusion operations and the backward denoisers, respectively.
\rev{Such a recovery-based definition applies to arbitrary data distributions and does not rely on symmetry assumptions, making it particularly well suited for generative modeling in high-dimensional, unstructured data.}

By analyzing the minimal sizes of the denoisers, we reveal a phase transition from the \textit{trivial phase} to the \textit{data phase} during denoising. In both the early and late stages of denoising, the transient distributions reside in the trivial and data phases, respectively, and the score functions can be computed locally. However, there exists a narrow intermediate time window during which a phase transition occurs, requiring global information to accurately compute the score function.

This perspective provides important guidance for designing neural networks in diffusion models.
Specifically, during the diffusion process, whenever the data distribution is inside a phase, we can always design a local denoiser to reverse the diffusion process at this time step.
Although a global denoiser is necessary during the phase transition, the transition typically occurs over a short time span, suggesting a net reduction in overall computational cost.

We investigate and validate data phase transitions through multiple approaches. First, we diagnose the phase transition by using the \rev{classical} conditional mutual information (CMI) along the diffusion path. CMI quantifies the amount of non-local information needed to compute the score function, and we prove that local denoising is possible if CMI decays exponentially with distance, along the whole diffusion path.
Additionally, we train local denoisers of varying sizes (namely, \textit{receptive fields}) and benchmark their output score functions. 
When applied to the \emph{MNIST} and \emph{Fashion-MNIST} database \cite{lecun1998mnist, xiao2017fashionmnist}, simple datasets of handwritten digit images and fashion images, both techniques reveal a phase transition at roughly the same time point, marked by the emergence of long-range CMI and the failure of small-sized local denoisers.

We remark that our proof techniques are inspired by the recent advances in understanding the local recovery channels and phases of matters in open quantum systems \cite{sang_statbility_2025}.
Unsurprisingly, we can show that the local reversibility of classical data distributions can be derived by taking the decoherence limit of the local reversibility of quantum mixed states.
This establishes a fundamental classical-quantum correspondence between these two concepts.


The discovery of data phase transitions opens up new directions for both theoretical understanding and practical engineering of diffusion models. From a physics perspective, this introduces a new domain to explore phases of matter, classification of phases, and universality classes of phase transitions. From a machine learning perspective, locality and phase transitions emerge as intrinsic structures of data that neural networks can utilize. This also may help explain attributes such as creativity and generalization \cite{kamb2024analytic}, which underlie the success of diffusion models. Looking forward, we also hope that this work could stimulate more discussions around the physical principles behind generative artificial intelligence.

\begin{figure}[t]
    \centering
    \includegraphics[width=\linewidth]{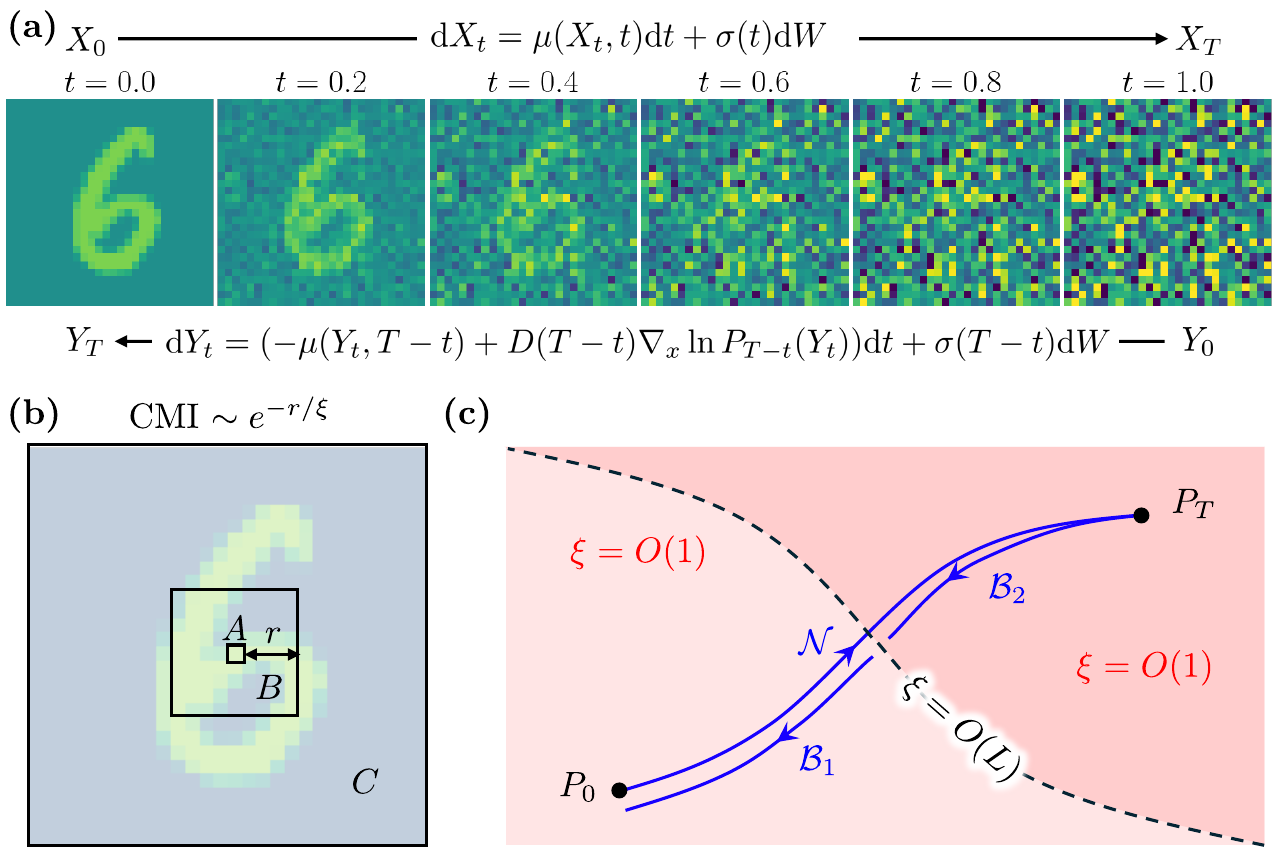}
    \caption{Schematic of diffusion models and phases of data distributions. Panel (b, c) is modified from Fig.\,1 of Ref.\,\cite{sang_statbility_2025}.
    (a) Diffusion models for image generation, presenting noisy images at different time steps. The forward SDE diffuses data to white noise, and the backward SDE denoises white noise to data. (b) Tripartition of data $X$ (sampled from $P$) into $A,B$ and $C$. Region $A$ has a constant diameter $k$. The width $r = \mathrm{dist}(A, C)$ of $B$ characterizes the separation between $A$ and $C$. Data distribution $P$ has a Markov length $\xi$ if CMI $I(X_A:X_C|X_B)_P \sim e^{-r/\xi}$. (c) During the diffusion process $\mathcal{N}$, Markov length is finite on both sides of the phase boundary, so there exist local denoisers $\mathcal{B}_1$ and $\mathcal{B}_2$. However, Markov length diverges near the critical time so global denoisers are required there. For the dataset of the handwritten digits, the phase transition during the diffusion occurs roughly at $t_c = 0.3 \sim 0.4.$
    }
    \label{fig:schematic}
\end{figure}

\section{Diffusion Models}

Consider a $d$-dimensional lattice $\Lambda$ of linear size $L$. Each site supports a continuous random variable in $\mathbb R$ so the sample space is $\mathcal{X} = \mathbb{R}^K$ with data space dimension $K=L^d$.
A dataset $\{X^{(i)}\}_{i \in [N_{\mathrm{data}}]}$ is randomly sampled from some target distribution $P_0(x)$. 
Here, $[N_{\mathrm{data}}]$ represents the integer set $\{1, \cdots, N_{\mathrm{data}} \}$ where $N_{\mathrm{data}}$ is the number of samples in the dataset. Each $x \in \mathcal{X}$ represents an image or a video embedded in a $K$-dimensional space.
For instance, in the MNIST dataset, images are grayscale and 2D (i.e., $d=2$) with a width and height of $L=28$. Hence, all these images consist of $K=784$ pixels (see Fig.\,\ref{fig:schematic}a) and each image is sampled from a desired distribution $P_0$.

In general, the transformation between different distributions is realized through \textit{noisy channels}. Let $P$ be any probability distribution, and a noisy channel $\mathcal{N} (y |  x) $ is a conditional probability that induces the transformation $\mathcal{N} (P) (y) = \int \dd x \, \mathcal{N} (y | x) P (x)$. According to Bayes' theorem, we define the recovery channel $\mathcal{B}_{\mathcal{N}, P}$ that maps $\mathcal{N} (P)$ to $P$ as 
\begin{equation}
    \mathcal{B}_{\mathcal{N}, P} (x |  y) = \frac{\mathcal{N} (y |  x) P (x)}{\mathcal{N} (P) (y)} . \label{eq:bayes}
\end{equation}
One can verify that $(\mathcal{B}_{\mathcal{N}, P} \circ \mathcal{N})P(x) = P(x)$.

In the general DDPM formalism, diffusion models can be formulated by the evolution of data distributions.
Given the desired dataset $\{X^{(i)}\}_{i \in [N_{\mathrm{data}}]}$, we first sample input data $X_{t=0}$ from $P_0$ and evolve it through a series of infinitesimally weak noisy channels. If the evolution time $\dt$ of the channel $\mathcal{N}$ is infinitesimal, then the operation acting on the random variable $X_t$ can be characterized by a stochastic differential equation (SDE) $\dd X_t = \mu (X_t, t) \dd t + \sigma (t) \dd W$, where $\mu \in \mathbb{R}^{K}$ is the drift vector, $\dd W \in \mathbb{R}^K$ is a standard Wiener increment vector, and $\sigma \in \mathbb{R}^{K \times K}$ is a matrix characterizing the diffusion strength. 
The dynamics of the probability distribution $P_t(x)$ is given by the \textit{Fokker-Planck equation} $\partial_t P= \mathcal{L}_{\mathrm{FP}}P$, under the continuous-time limit $\dt \to 0$. More concretely,
\begin{equation}
    \partial_t P = - \nabla_x \cdot (\mu P) + \frac{1}{2} \nabla_x \cdot (D \nabla_x P), \label{eq:forward}
\end{equation}
where $D(t) = \sigma(t) \sigma (t)^T \in \mathbb{R}^{K \times K}$ is the diffusion matrix. We say a Fokker-Planck equation is $k$-local if it  holds that $\mathcal{L}_{\mathrm{FP}}(t) = \sum_l \mathcal{L}_{\mathrm{FP},l}(t)$, where each $ \mathcal{L}_{\mathrm{FP},l}$ is a differential operator acting on a support indexed by $l$. Each support has a linear size at most constant $k$. Governed by this local Fokker-Planck equation, $P_t(x)$ ultimately evolves to a distribution $P_{t=T}(x)$ which is usually very close to the steady distribution $P_\infty$.
In the simplest form of diffusion models, $\mu(x) = - (x_1, \cdots, x_K)$ and $\sigma(t) \equiv I$ is a constant matrix. This SDE describes an Ornstein-Uhlenbeck process, whose steady distribution is a pixel-wise independent Gaussian distribution.

The core idea of the diffusion models is to denoise from the steady distribution backward to the desired distribution along the same path in the diffusion process.
Concretely, we generate a random sample $Y_{t=0} \sim Q_{0}=P_\infty$ and then evolve it to $Y_{t=T} \sim Q_T$, such that $Q_T$ is (approximately) the same as the target $P_0$.
This time-reversal evolution can be implemented by the backward denoising Fokker-Planck equation
\begin{equation}
    \partial_t Q = - \nabla_x \cdot ( (-\mu + D s) Q) + \frac{1}{2} \nabla_x \cdot (D \nabla_x Q), \label{eq:backward}
\end{equation}
where an extra drift term $s (x,t) = \nabla_x \ln P_t (x)$ called \textit{score function} was introduced in the literature \cite{anderson_reverse_1982, song_sde_2021}. 
This reverse evolution was derived from Bayes' theorem \cite{sohl_deep_2015, ho_denoising_2020}.
For completeness, we also provide a derivation of Eq.\,(\ref{eq:backward}) in SM \ref{apxsec:cv}, by directly taking the limit $\dt \to 0$, and computing the generator of $\mathcal{B}_{\mathcal{N}, P}$ associated with Eq.\,(\ref{eq:forward}).
The corresponding SDE of Eq.\,(\ref{eq:backward}) is $\dd Y_t = (- \mu (Y_t, T - t) + D(T - t) s(Y_t, T - t) ) \dd t + \sigma (T - t) \dd W$. 
Since $s(x, t)$ depends on $P_t(x)$, whose value is unknown, we need to use a neural network to learn it through methods like score matching.

In practice, the forward process is decomposed into $N = T/\dt$ discrete time points $0=t_0<t_1<\cdots<t_N=T$. We will always use $n$ as the discrete labels of the time step instead of $t$ if no confusion is caused. In each time interval $[t_{n-1}, t_n]$, we use a noisy channel $\mathcal{N}_n$ generated by the local Fokker-Planck equation evolving for short duration $\dt$, such that the overall channel is $\mathcal{N}_{\mathrm{tot}} = \mathcal{N}_{N} \circ \cdots \circ \mathcal{N}_{2} \circ \mathcal{N}_{1}$. The recovery of each $\mathcal{N}_n$ can be done via $\mathcal{B}_{n} = \mathcal{B}_{\mathcal{N}_n, P_{n-1}}$.
Here, the \textit{denoiser} $\mathcal{B}_{\mathcal{N}_n, P_{n-1}}$ is the Bayes recovery channel defined in Eq.\,(\ref{eq:bayes}) and $P_{n} = \mathcal{N}_{n} \circ \cdots \circ \mathcal{N}_{1} (P_0)$ is the distribution at time $t_{n}$. The overall denoiser can be expressed as $\mathcal{B}_{\mathrm{tot}} = \mathcal{B}_{1} \circ \mathcal{B}_{2} \circ \cdots \circ \mathcal{B}_{N}$ and $\mathcal{B}_{\mathrm{tot}}(P_\infty) \approx P_0$.

Typically, learning this score requires a global neural network that operates over the entire $K$-dimensional data space, which makes both training and inference computationally expensive.
We refer to the denoiser of a diffusion model at time $t$ as a local denoiser if its corresponding Bayes recovery channel is governed by the backward Fokker–Planck equation in Eq.\,(\ref{eq:backward}) whose associated score function is local at that time. Otherwise, we call it a global denoiser.

\section{Theoretical Results}
\label{sec:TR}

The theoretical contributions of this paper consist of two main parts.
First, we reveal that the existence of local denoisers in diffusion models is closely related to \rev{the spatial Markovianity, which is usually quantified in literature by an information-theoretic quantity -- conditional mutual information (CMI).}

Heuristically, local reversibility is ensured when and only when the CMI of the data distribution is small. We provide quantitative analyses of the relationship between local reversibility and CMI in Secs.\,\ref{sec:cmi}–\ref{sec:local-multiple}.
Second, building on the concept of local reversibility, we formally define the phases of data distributions in Sec.\,\ref{sec:phase}. This new perspective on data structure offers valuable insights and guiding principles for neural network design in diffusion models, which we further discuss in Sec.\,\ref{sec:guidance}.

\subsection{Conditional mutual information and Markov length}
\label{sec:cmi}

Suppose the underlying lattice $\Lambda$ is spatially partitioned into three regions $A,B,C$ (see Fig.\,\ref{fig:schematic}b). Here, $A$ is a local region -- supporting the forward diffusion operation -- with constant linear size $k$; $B$ is an annulus-shaped buffer region surrounding $A$, and $C$ is the remaining region outside $B$.
The distance between $A$ and $C$ is denoted as $r = \mathrm{dist}(A,C)$, which is also the width of $B$. For the data $X_t$ taking values $x = (x_A, x_B, x_C)$, we provide a criterion of local reversibility by introducing the CMI, defined as $I(X_A:X_C|X_B) = H(X_{AB}) + H(X_{BC}) - H(X_{ABC}) - H(X_{B})$
for the tripartition regions, where $H$ is the Shannon entropy. 
We also say distribution $P$ has approximate \textit{spatial Markovianity} with a finite \textit{Markov length} $\xi$, if its CMI decays exponentially as 
\begin{equation}
    I (X_A : X_C |  X_B)_P \leq \gamma \, e^{-r/\xi}, \label{eq:fml}
\end{equation}
for some constant $\gamma$.
Our main theoretical result is that having a constant Markov length implies local reversibility.
\begin{theorem}[\textit{Informal}]
    Let $P_n$ be the distribution during the diffusion at time-step $n$. Suppose $P_n$ has a constant Markov length for all introduced $A, B, C$ tripartitions, then the denoising operation at time $n$ can be performed with a local denoiser.
\end{theorem}
We prove this theorem by first giving an intuition of the local reversibility of a single-step denoiser in Sec.\,\ref{sec:local-single}, and then generalizing the result to the general case of multiple-step denoisers in Sec.\,\ref{sec:local-multiple}.

\subsection{Local reversibility of a single-step denoiser}
\label{sec:local-single}

Let us start by providing an intuition that for any time-step $n$, a short-range CMI is equivalent to the approximate locality of the score function, indicating that CMI is a natural indicator of the existence of local denoisers.
In general, the decomposition $P_{ABC} = P_{AB} P_{C|AB}$ always holds, where $P_{AB}$ is the marginal distribution on $AB$ and $P_{C|AB}$ is the conditional distribution on $C$ given $AB$.
If $P_{ABC}$ has a small CMI, this approximate conditional independence implies $P_{ABC} \approx P_{AB} P_{C|B}$. Recall that the score function is defined as $\partial_x \ln P$. Taking the logarithm and $x_A$-derivative on both sides, and noting that $\partial_{x_A} \ln P_{C|B} \equiv 0$, we find $\partial_{x_A} \ln P \approx \partial_{x_A} \ln P_{AB}$, which depends only on the local region $A \cup B$.

The intuition above is for the special case of $\dt \to 0$. Moreover, we can rigorously generalize the idea to a broader class of any finite-time $\mathcal{N}$ and $\mathcal{B}_{\mathcal{N}, P}$.
Suppose an arbitrary noisy channel $\mathcal{N} (y_A | x_A)$ acting only locally on $A$ with constant linear size $k$. We find that we can reverse the effect of $\mathcal{N}$ by only applying an approximated recovery channel on $A \cup B$, as long as the CMI $I (X_A : X_C |  X_B)_P$ is small. In fact, for any $\mathcal{N} (y_A | x_A)$ on $A$ and marginal distribution $P_{A B} (x_A, x_B)$, we can construct a local Bayes recovery channel
\begin{equation}
    \mathcal{B}_{\mathcal{N}, P_{A B}} (x_A, x_B |  y_A, x_B) \!=\! \frac{\mathcal{N} (y_A | x_A) P_{A B} (x_A, x_B)}{\int \! \dd x_A \mathcal{N} (y_A |  x_A) P_{A B} (x_A, x_B)} . \label{eq:local_bayes_ch}
\end{equation}
According to the classical Fawzi-Renner inequality \cite{li_squashed_2018, fawzi_quantum_2015} (see also SM \ref{apxsec:local_bayes_cmi}), we can bound the recovery error between $P$ and $\hat{P} = \mathcal{B}_{\mathcal{N}, P_{A B}} \circ \mathcal{N} (P)$, by the CMI of $P$
\begin{equation}
    \mathrm{TV} (P, \hat{P} )^2 \leq D_{\mathrm{KL}} (P \| \hat{P} ) \leq I (X_A : X_C |  X_B)_P , \label{eq:local_bayes}
\end{equation}
where $\mathrm{TV} (P, \hat{P}) = \int \dd x\, |P(x) - \hat{P}(x)| /2$ is the \textit{total variation distance} and $D_{\mathrm{KL}} (P \| \hat{P} ) = \int \dd x P(x) \ln \small( P(x)/\hat{P}(x) \small)$ is the \textit{Kullback-Leibler (KL) divergence}.

We remark that $\mathcal{B}_{\mathcal{N}, P_{A B}} (x_A, x_B |  y_A, x_B)$ in Eq.\,(\ref{eq:local_bayes_ch}) requires the knowledge $(y_A, x_B)$ on $A \cup B$ but its operation $y_A \to x_A$ is only executed locally on $A$.
By taking the $\dt \to 0$ limit, the backward drift term in the SDE of $\mathcal{B}_{\mathcal{N}, P_{A B}}$ is exactly the local score function $\partial_{x_A} \ln P_{A B} \approx \partial_{x_A} \ln P$ (see Eq.\,(\ref{apxeq:local_sde}) of SM \ref{apxsec:local_bayes_sde}). The corresponding SDE acting on $A$ is explicitly
\begin{equation}
    \dd Y_A = \left( - \mu + \sigma^2 \partial_{x_A} \ln P_{AB} \right) \dd t + \sigma \dd W_A.
\end{equation}
Even though all the results above are derived under the DDPM formalism, they are also rigorously applicable to DDIM and flow matching, because it is well-known that DDIM and flow matching have exactly the same score form of the backward drift term as that of DDPM \cite{song_ddim_2021, lipman2022flow}.

\subsection{Local reversibility of multiple-step denoisers}
\label{sec:local-multiple}

So far, the connection between CMI decay and approximate local reversibility that we established is only for a single-step denoiser. Furthermore, we can generalize the conclusion to the scenario of multi-step denoisers.

Specifically, let us consider the forward diffusion channel at each time $\mathcal{N}_{n} = \prod_l \mathcal{N}_{n, l}$.
Each $\mathcal{N}_{n, l}$ acts on a region $A_{n,l}$ whose linear size is at most a constant $k$. Here, we use $l$ as the spatial labels of region $A_{n,l}$.
We denote the Markov length at time $n$ as $\xi_n$.
In SM \ref{apxsec:local_bayes_reorg}, we prove that there exists local denoisers $\mathcal{B}_{n} = \prod_l \mathcal{B}_{n, l}$, such that the overall total variation error is bounded by $\mathrm{TV} (P_0, \hat{P}_0) < \varepsilon$ where $\hat{P}_0 = \mathcal{B}_{\mathrm{tot}} \circ \mathcal{N}_{\mathrm{tot}} (P_0)$.
Here, each denoiser $\mathcal{B}_{n, l}$ is supported on $A_{n, l} \cup B_{n,l}$ where $B_{n,l}$ (an annulus-shaped region surrounding $A_{n, l}$, see Fig.\,\ref{fig:schematic}b or Fig.\,\ref{fig:guidance}) has a width $r_n$ as long as:
\begin{equation}
    r_n \gtrsim \xi_n \ln \left(N K / \varepsilon \right). \label{eq:bayes_radius}
\end{equation}
When all $\xi_n$ are finite, this implies a series of local denoising channels evolving the white noise to the desired data distribution.
The proof of the condition Eq.\,(\ref{eq:bayes_radius}) utilizes a reorganization trick (see SM \ref{apxsec:local_bayes_reorg}) that was initially proposed in Ref.\,\cite{sang_statbility_2025} for proving quantum mixed state local recoverability.
We remark that the term $K$ arises from the total number of local channels $\{\mathcal{N}_{n,l}\}$ at each time step $n$; and the factor $N$ in Eq.\,(\ref{eq:bayes_radius}) is kept merely for some technical reason, and we believe this factor is not essential (see comments in SM \ref{apxsec:local_bayes_reorg}).

\subsection{Phases of data distributions}
\label{sec:phase} 

The local reversibility result shown above provides a completely new way to understand data distribution. In analogy to the phases of quantum mixed states \cite{coser_classification_2019, sang_statbility_2025}, we can define those data distributions as being in the same phase if they can be mutually connected via \rev{a diffusion path of (quasi)-local Fokker-Planck equations, whose corresponding denoiser is also (quasi)-local.} 
Here, for a Fokker-Planck equation $\partial_t P = \mathcal{L}_{\mathrm{FP}} P$, the operator $\mathcal{L}_{\mathrm{FP}}(t) = \sum_l \mathcal{L}_{\mathrm{FP},l}(t)$ being \rev{local (or quasi-local)} means each $\mathcal{L}_{\mathrm{FP},l}(t)$ has \rev{constant (or $O(\mathrm{polylog} \, L)$)} spatial support and operator norm at any time $t$.
\rev{The distinction between locality and quasi-locality only for technical reason \cite{coser_classification_2019, sang_statbility_2025}, thus we will only use term ``local'' in this work, when no confusion is caused.}

\rev{To be more specific, suppose there exists a time-dependent local Fokker-Planck equation $\partial_t P = \mathcal{L}_{\mathrm{FP}} P$ whose solution $P_t(x)$ evolves $P_0$ to the $\varepsilon$-neighborhood of $Q_0$ within a unit time duration $t \in [0, 1]$ for an arbitrarily given $\varepsilon$, namely $\mathrm{TV}(P_{t=1}, Q_0) \leq \varepsilon$.
We denote this property of local diffusion as $P_0 \FPto Q_0$.
Then, we say that two distributions $P_0$ and $Q_0$ are in the same phase if and only if there \textit{exists} at least one diffusion path along which $P_0 \FPto Q_0$ holds, and its corresponding denoising process $Q_t$ is also governed by another \textit{local} Fokker-Planck equation $\partial_t Q = \tilde{\mathcal{L}}_{\mathrm{FP}} Q$ such that:
\begin{equation}
    \mathrm{TV}(Q_{t=1}, P_0) \leq \varepsilon~\text{and}~\mathrm{TV}(Q_t, P_{1-t}) \leq \varepsilon \label{eq:LR}
\end{equation}
for all time $t$.
The latter condition in Eq.\,(\ref{eq:LR}) emphasizes the fact that denoising and diffusion in diffusion models are approximately on the same path in the space of probability.
This definition is generic for any distributions, and does not depend on any symmetry structures or equilibrium assumptions of the distributions.
}

\rev{
Under this definition, phases are determined by the existence of a two-way connection along a single noise path via local channels, rather than by the properties of any particular noise realization.
In this sense, the phase equivalence relation depends only on whether such a local connecting path exists between the two distributions.
At the same time, we emphasize that while phase equivalence is defined at the level of existence, the manifestation of a phase transition -- such as the emergence of a diverging recovery length -- can depend strongly on the specific choice of connecting noise path.
This distinction between path-independent phase equivalence and path-dependent transition behavior will be further discussed in Sec.\,\ref{sec:discussion}.
}


One may also ask whether this definition of phases agrees with the thermodynamic phases. Some progress has been made to address this question.
For example, it was shown in Ref.\,\cite{zhang2025conditional} that above a threshold temperature, all Gibbs distributions can be mutually connected via local channels. It is also strongly believed that at low temperatures and for finite-dimensional systems, two distributions being in the same thermodynamic phase implies mutual local connectivity \cite{ma2025circuit}.

\begin{figure}[t]
    \centering
    \includegraphics[width=\linewidth]{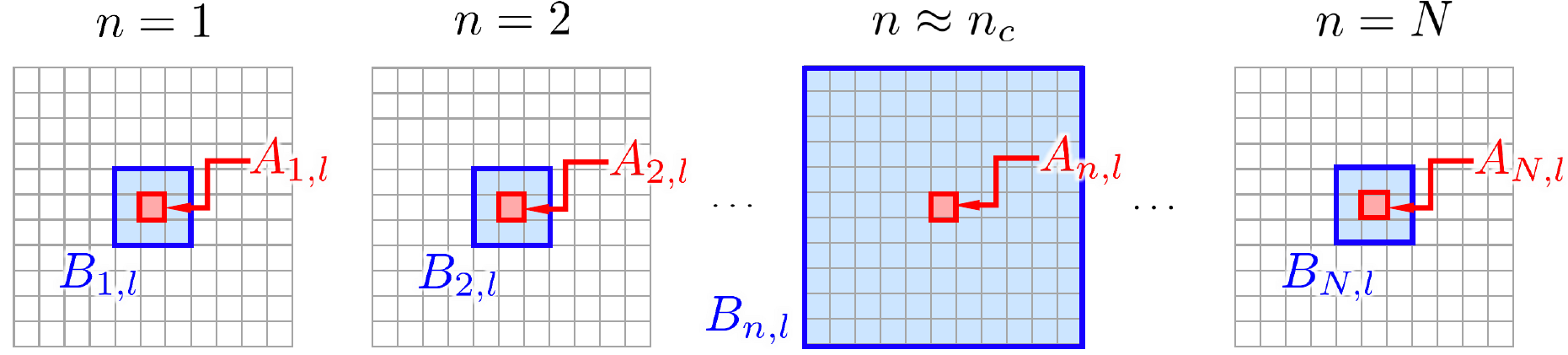}
    \caption{Schematics of designing local denoisers. For time step $n$ being far from the phase transition step $n_c$, denoising the forward channel acting on $A_{n, l}$ (in red, in examples of images, $A_{n, l}$ is a pixel whose coordinate is labeled by $l$), requires only a local denoiser acting on a small neighbouthood $A_{n, l} \cup B_{n, l}$ (in blue). Global denoisers are necessary when $n \approx n_c$.}
    \label{fig:guidance}
\end{figure}

\section{Guidance of Neural Network Design in Diffusion Models}
\label{sec:guidance}


According to the definition of the phase of data distributions based on the local recoverability, we can provide three guiding principles of designing neural networks for learning score functions in diffusion models.

First, we only need a small neural network to learn the score function when the data distribution $P_n$ is far from the phase boundary, and use a large neural network when $P_n$ is close to the phase boundary (see Fig.\,\ref{fig:guidance}). This is because local denoisers can connect two distributions in the same phase by definition.
Second, in the practice of diffusion models, the time step length $\dt$ is not fixed over the whole diffusion process. One can choose arbitrary step-dependent $\{\dt_n\}$, and the series $\{\dt_n\}$ is called the \textit{noise schedule} in diffusion models. 
The perspective of data distribution phases suggests that, for those commonly used schedules, we may insert more time steps when $P_n$ is close to the phase boundary to increase the quality of the denoised images.
Third, in the case where $P_n$ is far from the phase boundary, if the distance $r_n$ is sufficiently small, we can even learn the score function directly from the data distribution without using any neural networks. Because the local denoiser only requires the information of a small region due to Eq.\,(\ref{eq:local_bayes_ch}), the corresponding marginal probability value can be estimated with not too many samples of data $X_n$, e.g., through kernel density estimation \cite{rosenblatt1956remarks, parzen1962estimation}.

In this work, we focus on the first guiding principle mentioned above, depicted in Fig.\,\ref{fig:guidance}. 
If the Markov length $\xi_{n_c} = O(L)$ at some step $n_c \in [N]$, a phase transition occurs. In other words, the CMI at time step $n$ near $n_c$ becomes large even at a large $r_n$. There are two possible cases along the forward diffusion process. 
The first case is that $n$ is far from $n_c$ and the Markov length $\xi_n = O(1)$ is small for $P_n$. It means that $P_n$ is inside one phase. It massively mitigates the hardness and cost to learn the score at this time step. 
According to Eq.\,(\ref{eq:local_bayes_ch}), we only need to learn the score function $\partial_{x_A} \ln P_{A_{n,l}B_{n,l}}$ based on the information on the local region $A_{n,l} \cup B_{n,l}$ of image or video. Thus, learning this score function could be done patch-by-patch, which should be much less expensive.
The other case is when $n \approx n_c$, that is, close to the phase transition. In this case, we will need to set $r_{n \approx n_c} = L$ and carry out the ordinary score learning algorithms on the whole data.

\begin{figure}[t]
    \centering
    \includegraphics[width=\linewidth]{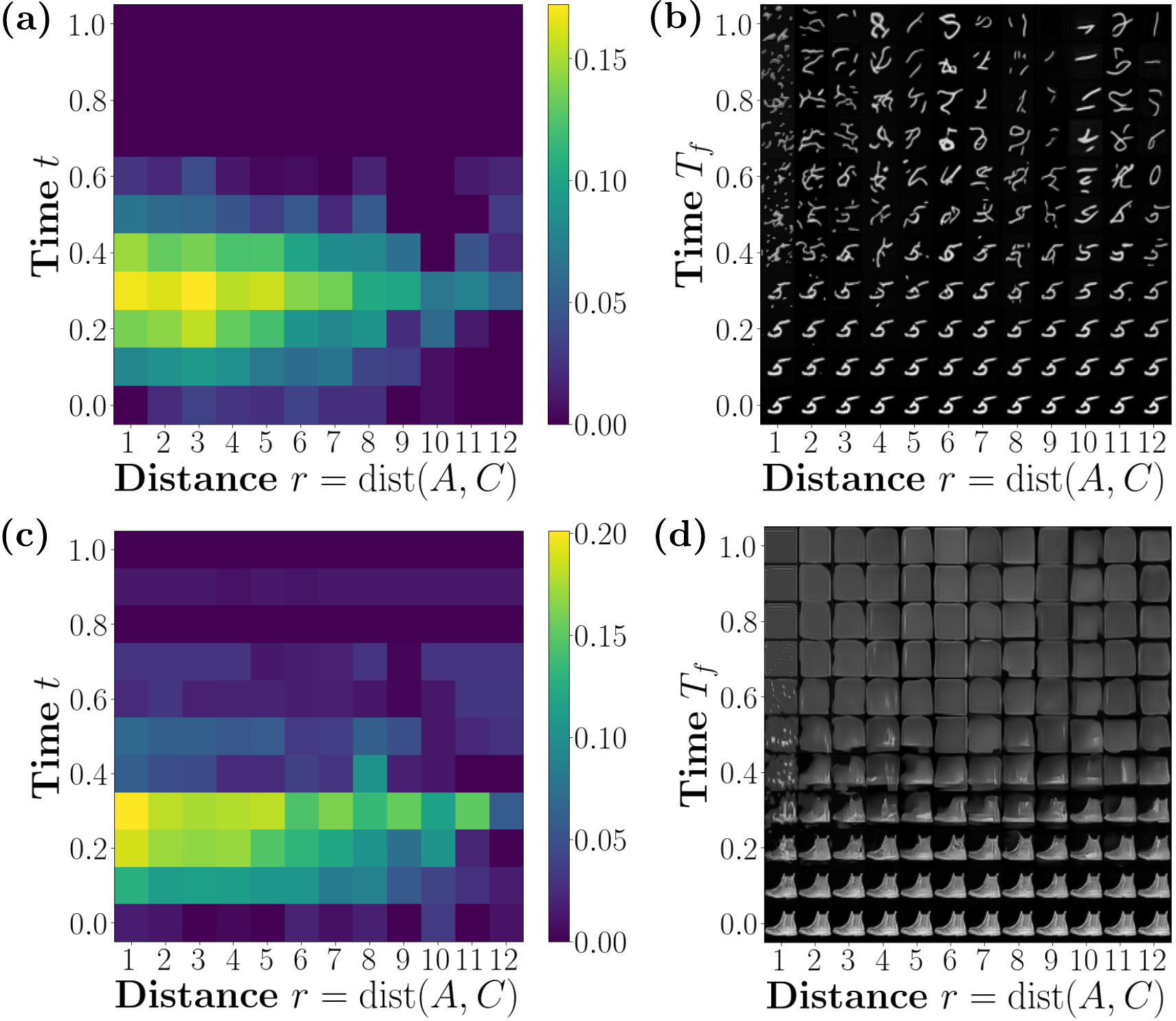}
    \caption{
    (a) CMI $I (X_A : X_C |  X_B)_{P_t}$ of MNIST as a function of distance $r = \mathrm{dist} (A, C)$ at different time $t$. 
    (b) The images locally denoised from corrupted images at $t=T_f$. Each local denoiser acts on region $A \cup B$ with diameter $2r+1$. Local denoisers with any $r$ perform badly if $T_f > 0.4$. 
    \rev{(c, d) The same plots of CMI heatmaps and denoised images for Fashion-MNIST dataset.}
    }
    \label{fig:mnist}
\end{figure}

\section{Numerical Results}
\label{sec:numerics}

We show that generating real-world data distributions using diffusion models may exhibit a phase transition that influences the network design. In our analysis, we focus on the MNIST dataset, and indeed, it exhibits a phase transition. At each time step, we apply diffusion by independently mixing every pixel with a standard Gaussian noise. In DDIM and flow matching, this process can be described by \cite{lipman2022flow, song_ddim_2021, Heitz2023alphaBlending}
\begin{equation}
    X_t = (1 - \alpha_t) X_0 + \alpha_t Z, \label{eq:gaussian}
\end{equation}
where $Z \in \mathbb{R}^{28 \times 28}$ represents pixel-wise independent Gaussian noise with zero mean and unit variance. The function $\alpha_t \in [0, 1]$, which governs the time dependence of the noise level, is the schedule. In this work, we adopt a linear schedule given by $\alpha_t = t$.

We numerically evaluate the CMI of the distribution of $X_t$ throughout the whole diffusion process. We rewrite the CMI into the form of $I (X_A : X_C |  X_B) = I (X_A : X_B X_C) - I (X_A : X_B)$, and then we utilize the mutual information neural estimation (MINE) method \cite{belghazi_mutual_2015} to train neural networks for estimating mutual information respectively (see details in SM \ref{apxsec:mine}).
We select the central pixels of the images to be $A$ so that $k=1$. Then, the CMI as a function of distance $r = \mathrm{dist} (A, C)$ at different time steps $t$ is shown in Fig.\,\ref{fig:mnist}a. 
In the limit case of $t=1$ and $t=0$, we observed that both CMI values are small even for a small distance $r$.
At $t = 1$ (trivial phase), the CMI is trivially zero because $X_{t = 1}$ is a pixel-wise independent Gaussian noise. For noiseless data at $t = 0$ (data phase), the reason for their small CMI is as follows. In general, the CMI can be upper-bounded by the conditional entropy $I (X_A : X_C | X_B) \leq H (X_A | X_B) $. For a noiseless image, when $B$ -- neighborhood surroundings of $A$ -- is given, $A$ is almost determined. Therefore, $H (X_A |  X_B)$ is small enough so that the CMI is also suppressed.
At $t_c \approx 0.3 \sim 0.4$, we observe a significant CMI barrier in our numerics, which indicates that there is a phase transition around this time step.
\rev{This CMI peak is not tied to a special spatial location. As shown in Fig.\,\ref{fig:mnist-apx}c of SM \ref{apxsec:mnist}, non-central pixels in MNIST exhibit qualitatively similar CMI behavior, demonstrating that the phenomenon is robust to the choice of pixel.}

\rev{We further emphasize that this transition is not captured by traditional two-point correlation measures. In DDPM and DDIM, Gaussian noise is added independently at each pixel, and the data-processing inequality implies that the mutual information between any two pixels $A_1$ and $A_2$ decreases monotonically along the diffusion process:
\begin{equation}
    I(X_{A_1} : X_{A_2})_{P_{t'}} \le I(X_{A_1} : X_{A_2})_{P_t},
\end{equation}
for any $t' > t$.
We verify this monotonic decay numerically in Fig.\,\ref{fig:cov} (see SM \ref{apxsec:cmi-2pc}). As a result, two-point correlations remain insensitive to the recovery-based phase transition revealed by CMI.
To further elucidate the distinction between CMI and two-point correlation, in SM \ref{apxsec:cmi-2pc}, we present several examples of distributions in which one of these two quantities is large while the other remains small.
}

\begin{figure}[t]
    \centering
    \includegraphics[width=\linewidth]{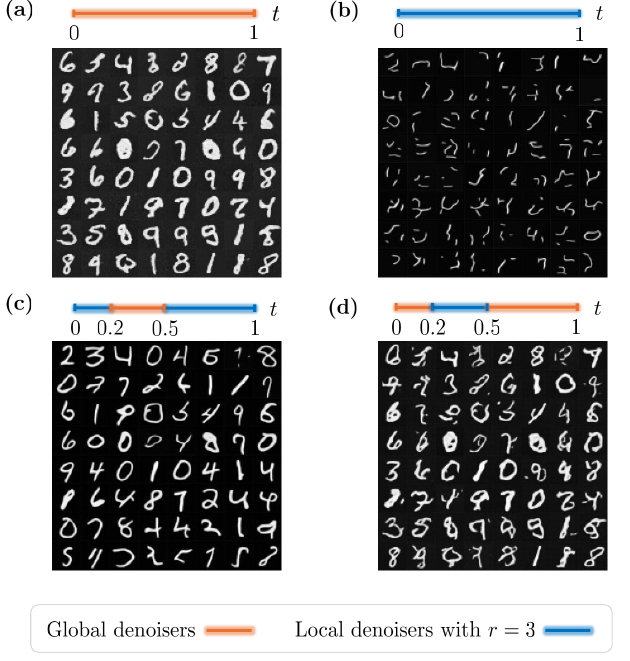}
    \caption{
    64 samples of denoised images, with local denoisers ($r=3$, in blue) within different time intervals during the denoising process. $t=0$ and $t=1$ correspond to the data phase and the trivial phase, respectively. (a) Ordinary diffusion models with global denoisers (in orange), no local denoisers used. (b) Only using local denoisers, consistent with the patterns of $( T_f, r)=(1, 3)$ in Fig.\,\ref{fig:mnist}b. (c) Using global denoisers only when around the phase transition (essentially Fig.\,\ref{fig:guidance}), the performance is as good as (a). (d) Using global denoisers only when far from the phase transition, many digits are hardly recognizable compared to (c). 
    }
    \label{fig:recovery}
\end{figure}


We validate the phase transition, probed via the CMI, by testing the efficacy of local denoisers. 
We sample clean data $X_0$ from the original dataset, and we evolve the data to $X_{T_f}$ under Eq.\,(\ref{eq:gaussian}) for a duration $T_f \in [0, 1]$. Then, we use flow matching to train local denoisers for recovering $X_0$. To get these local denoisers, we train a series of modified U-nets with small sizes (see details in SM \ref{apxsec:unet}).
For the denoiser acting on the pixel $A_{n,l}$ (i.e., $k=1$) at time step $n$, the small U-Net learns a score function whose input is a region $A_{n,l} \cup B_{n,l}$ where $B_{n,l}$ has a width $r$. We denote this denoised image as $Y_{T_f, r}$. 
All denoised images $Y_{T_f, r}$ with different $T_f$ and $r$ are depicted in Fig.\,\ref{fig:mnist}b. We observe that all local denoisers perform badly when $T_f > 0.4$, demonstrating that local denoisers always fail after the phase transition occurs. 
Local denoisers with smaller $r$ fail earlier than those with larger $r$. However, for different $r$, the deviation of the time when such failure occurs is small, consistent with the rapid growth of the CMI near the phase transition.

We also visualize the efficacy of our design principle in Fig.\,\ref{fig:recovery} by applying local denoisers in different stages of denoising.
Since the MNIST images have a finite size, the large CMI values in Fig.\,\ref{fig:mnist}a are concentrated within a finite interval instead of suddenly peaking at a single time step.
The intuition of this phenomenon can be explained through the analogy to the phase transition in statistical physics: around the phase transition, the Markov length scales as $\xi \propto 1/|t-t_c|^\nu$ where $\nu$ is some constant called the critical exponent and $t_c$ is the phase transition time \cite{sang_statbility_2025}.
If $t$ is close to but not $t_c$, the Markov length $\xi$ is comparable to the finite system size $L$, although $\xi$ is still finite.
This phase transition of data distributions is ubiquitous for many datasets. We provide more demonstrations of phase structures and phase transition for the diffusive generation of the Fashion-MNIST, a database of fashion images \rev{in Fig.\,\ref{fig:mnist}c and Fig.\,\ref{fig:mnist}d}, see the numerical details in SM \ref{apxsec:fashion-mnit}.

In the following numerics, we select $t \in [0.2, 0.5]$ as the interval around the phase transition.
For benchmark, in Fig.\,\ref{fig:recovery}a, we plot the denoised images using global denoisers over the whole denoising process $t \in [0, 1]$. The global denoisers are standard U-Nets \cite{ronneberger2015unet} (see details in SM \ref{apxsec:unet}). This is essentially the ordinary diffusion models \cite{ho_denoising_2020}.
As a sanity check, we show in Fig.\,\ref{fig:recovery}b that using local denoisers with $r=3$ over the whole denoising process $t \in [0, 1]$ fails to generate any recognizable digit images.
To verify our first guiding principle, in Fig.\,\ref{fig:recovery}c, we use the global denoisers within the interval $[0.2, 0.5]$ but the local denoisers with $r=3$ in $[0, 0.2] \cup [0.5, 1]$. We find the denoising performance is as good as the ordinary diffusion models. This is exactly the denoising scheme we proposed based on our perspective of the phases of data distributions.
Finally, in Fig.\,\ref{fig:recovery}d, we replace the global denoisers within the time interval $[0.2, 0.5]$ with the local denoisers with $r=3$; while keeping the denoisers in the rest of the time $[0, 0.2] \cup [0.5, 1]$ global. The denoising performance decreases dramatically, and many digits are not recognizable. This shows that local denoisers must fail around the phase transition.

\rev{
\section{Relation to Phases of Mixed States}

The technique of local diffusion models and the definition of phases for data distributions is inspired by the very recent study of Lindbladian local reversibility and mix-state phases in open quantum systems \cite{sang_statbility_2025}, which we introduce in this section.

To make this connection concrete, we briefly recall the notion of local reversibility in quantum mixed states and how it leads to a generic phase definition. In this setting, classical probability distributions and noisy channels are generalized to quantum mixed state $\rho$ and quantum channel $\mathcal{N}$.
A central result is that when a channel $\mathcal{N}$ acts locally on a region $A$, it can be approximately reversed by another channel acting on a slightly enlarged region $A B$, provided the quantum conditional mutual information $I(A:C|B)$ is small for the tripartition $\Lambda=ABC$ of a qubit lattice, with distance $r=\mathrm{dist}(A, C)$.
The corresponding recovery map admits an explicit construction, called \textit{twirled Petz map} \cite{junge_universal_2018, sang_statbility_2025} (see also Eq.\,(\ref{apxeq:TPRM}) in SM \ref{apxsec:cPetz}), which can be viewed as the quantum analogue of classical Bayesian recovery.

In the quantum setting, the analogue of the diffusion process is given by dissipative dynamics described by a \textit{Lindblad equation} $\dot{\rho} = \mathcal{D} [a] \rho = a \rho a^{\dagger} - (a^{\dagger} a \rho + \rho a^{\dagger} a) / 2$ with the jump operator $a$ acting locally.
The corresponding recovery dynamics is generated by the continuous-time limit of the twirled Petz map, which plays the role of a quantum generalization of denoisers and approximately reconstructs the initial state (see Theorem \ref{thm:lqd} of SM \ref{apxsec:cPetz}).
Within this framework, mixed-state phases can be defined as follows \cite{sang_statbility_2025, sang_2025_mixed}: two mixed states $\rho$ and $\sigma$ are said to belong to the same phase if $\rho$ can be transformed into $\sigma$ via a local Lindblad evolution, and the associated twirled Petz recovery maps remain local throughout the process.

Finally, the correspondence between quantum twirled Petz maps and classical diffusion models can be made explicit in the classical limit.
In fact, considering full decoherence in the quantum dynamics, diagonal mixed states reduce to classical probability distributions $\rho = \int \dd x \, P (x) \ket{x} \bra{x}$.
The quantum evolution generated by momentum jump operator $p$ reduces to the standard diffusion processes. This is because $\mathcal{D} [p] \rho = \int \dd x \, (\partial^2_x P/2) \ket{x} \bra{x} $, where the momentum jump operator does not cause the off-diagonal terms transition (see SM \ref{apxsec:diag_forward}). 
In this limit, the associated recovery dynamics given by the continuous-time twirled Petz map reduces to the classical denoising operation used in diffusion models: $\int \dd x ( - \partial_x ((\partial_x \ln P) P) + \partial^2_x P / 2 )  \ket{x} \bra{x}$ (see SM \ref{apxsec:diag_rotated}).
This establishes a direct connection between recovery-based phase definitions in quantum mixed states and the structure of classical diffusion models.
}

\section{Discussions}
\label{sec:discussion}

In this work, we use the approximated spatial Markovianity as a criterion for constructing local denoisers in diffusion models, and propose a definition of phases for different data distributions in machine learning. 
We verify that the phase transitions occur in the diffusion models of the real-world dataset by using different methods, including monitoring the CMI and recovery errors of local denoisers along the diffusion path.


Our framework of local reversibility also paves new paths for understanding machine learning from a physics perspective.
Notably, Markov length offers a refined notion of data phase transition by exploring the \emph{spatial locality} in the data structure. Earlier works have established the reverse generation process as a symmetry-breaking phase transition \cite{biroli_dynamical_2024, raya_spontaneous_2023, li_critical_2024, sclocchi_probing_2024, sclocchi_phase_2024, li_blink_2025}. The final Gaussian distribution is ``high-temperature" and contains only one valley in the energy landscape, whereas the data distribution is ``low-temperature" and possesses a complex energy landscape with many local minima. All these traditional views are ignorant of the spatial information in the pixels: they apply to images flattened to a $K$-dimensional vector. On the other hand, the Markov length constructions rely on the spatial information, offering a finer-grained approach to understanding phases of data distributions. An intriguing open question is whether these two types of phase transitions coincide in real-world data, and if so, whether they are driven by the same mechanism. 

In practice, we always need certain probes to diagnose the phase transition, based on which we can determine the radius $r_n$ for guiding the design of neural networks.
As mentioned in Sec.\,\ref{sec:TR}, the CMI is not the sole indicator for probing the locality of the denoiser. We may employ other methods to diagnose the phase transitions. 
For example, we can monitor the score function along the diffusion path or even train a highly efficient network for predicting phase transitions.
Moreover, as the last two guiding tools of the three we previously pointed out, we can further investigate the connection between phase transition and the noise schedule, as well as explore the training-free local diffusion models.

\rev{The phase perspective on data distributions we proposed naturally raises the question of more general noise choices in diffusion models, beyond standard pixel-wise Gaussian noise.
Allowing alternative noise models -- such as non-standard Gaussian noise or discrete Poisson-type noise -- enlarges the space of possible diffusion paths connecting the same pair of data distributions.
From this viewpoint, different noise choices correspond to different trajectories in probability distribution space, along which the locality of the associated denoisers may differ.}
Indeed, there may exist multiple paths connecting $P_{t=0}$ and $P_{t=1}$ such that one path has a Markov length divergence, but the Markov length is always finite along the other path (see an example in SM \ref{apxsec:LVPT}).
\rev{According to our definition in Sec.\,\ref{sec:phase}, these two distribution are still in the same phase.}
\rev{This highlights a fundamental feature in our framework: while the definition of a phase depends only on the existence of a local connecting path between two distributions, the presence or absence of a phase transition can depend on the particular path chosen.}
This scenario in diffusion models is analogous to the \textit{liquid-vapor phase transition} of water by bypassing the critical point.
\rev{From this perspective, the liquid–vapor analogy provides a useful conceptual guide for diffusion models: it suggests that some diffusion paths may avoid phase transitions and therefore require less nonlocal denoising.
Recent studies on phases of matter have already constructed efficient algorithms to search for such a generative denoising path using only local operations, for both the cases of quantum mixed states and classical distributions, as long as the state (distribution) is in the same phase as the product state (distribution), regardless of the existence of liquid-vapor-type phase boundary \cite{landau_2025_learning, kim_2024_learning, hu_2026_learning}.
}


Finally, the distinction between creativity and hallucination in diffusion models remains an active and important open question in the AI community.
Recent studies have suggested a link between creativity (or generalization) and two fundamental inductive biases in neural networks: locality and equivariance \cite{kamb2024analytic, lukoianov2025locality}. 
In essence, diffusion models generate creative outputs by randomly assembling small patches whose boundaries are locally consistent. 
While such images often appear coherent at a local level, the assembly of these patches still results in a lack of the same global correlations as the desired data.
We speculate that this absence of global correlations is what differentiates hallucination from genuine creativity \cite{Farquhar2024, aithal2024understanding}.
In contrast, states within the same phase exhibit similar global correlations. Accordingly, we propose that sampling within a phase provides a more refined notion of creativity -- where appropriate global or long-range correlations emerge near phase transitions through non-local networks -- thereby distinguishing meaningful creativity from undesirable hallucination.

\section*{Data and Code Availability}

The data generated for numerical results in this article and the code used in this study have been deposited in the GitHub repository under the accession link: \url{https://github.com/Fangjun-Hu/Local-Diffusion-Models-and-Phases-of-Data-Distributions}.

\section*{Acknowledgement}

F.\,H., G.\,L., Y.\,F.\,Z.\,and X.\,G.\,would like to thank Shengqi Sang for detailed technical discussions of the phases of quantum mixed states.
The authors acknowledge Yizhuang You, Xinyu Liu, and Yubei Chen for detailed discussions on topics of quantum and classical diffusion models.
The authors also thank Hakan Türeci, Sarang Gopalakrishnan, Soonwon Choi, Sitan Chen, Weiyuan Gong, Di Luo, Chi-Fang Chen, Liang Jiang, Su-un Lee, Bikun Li, Guo Zheng, Siddhant Midha, Rose Yu, Shengtao Wang, Milan Kornjača, and Pedro Lopes for stimulating discussions about the work that went into this manuscript.
This work was completed during F.\,H.'s Ph.\,D.\,Program at Princeton, prior to graduation, with support from AFOSR MURI award FA9550-22-1-0203.
Y.\,Z.\,acknowledges support from NSF QuSEC-TAQS OSI 2326767.
G.\,L.\,and X.\,G.\,acknowledge support from NSF PFC grant No.\ PHYS 2317149. 

\bibliography{bibtex}





\clearpage

\appendix


\begin{widetext}



\begin{center}
    {\large \textbf{Supplementary Materials: Local Diffusion Models and Phases of Data Distributions}}\\[1em]
\end{center}

\startcontents[appendices]
\printcontents[appendices]{l}{1}{\section*{CONTENTS}\setcounter{tocdepth}{2}}

\setcounter{page}{1}
\setcounter{equation}{0}
\setcounter{figure}{0}
\counterwithout{equation}{section}
\renewcommand{\tablename}{Table}
\renewcommand{\figurename}{Fig.}
\renewcommand{\thetable}{S\arabic{table}}
\renewcommand{\thefigure}{S\arabic{figure}}
\renewcommand{\theequation}{S\arabic{equation}}
\renewcommand{\thetheorem}{S\arabic{theorem}}

\renewcommand{\theHtable}{S\arabic{table}}
\renewcommand{\theHfigure}{S\arabic{figure}}
\renewcommand{\theHequation}{S\arabic{equation}}
\renewcommand{\theHtheorem}{S\arabic{theorem}}

\renewcommand\thesection{S\arabic{section}}
\renewcommand\thesubsection{\Alph{subsection}}
\renewcommand\thesubsubsection{\arabic{subsubsection}}

\titleformat{\section}
  {\centering\normalfont\bfseries} 
  {\thesection}{1em}{\MakeTextUppercase} 

\titleformat{\subsection}
  {\centering\normalfont\bfseries} 
  {\thesubsection}{1em}{} 

\cftsetindents{section}{0em}{3em}
\cftsetindents{subsection}{3em}{3em}

\setlength{\parindent}{0em}

\makeatletter
\let\toc@pre\relax
\let\toc@post\relax
\makeatother



\section{Derivation of Score-based Denoising from Bayes Formula}
\label{apxsec:Bayes}

\subsection{Denoising for the continuous variable}
\label{apxsec:cv}

In this appendix, we only consider the simplest 1D diffusion model with forward diffusion process: $\partial_t P = - \partial_x (\mu P) + \frac{1}{2} \partial^2_x P$ and backward denoising is $\partial_t Q = - \partial_x ((-\mu+\partial_x \ln P) Q) + \frac{1}{2} \partial^2_x P$. 
The simplest way to prove this backward Fokker-Planck equation is to substitute $Q(t)$ with $P(T-t)$. Notice that $Q_t(x) = P_{T-t}(x)$ implies that $\partial_t Q_t(x) = - \partial_t P_{T-t}(x)$. Then, we have
\begin{equation}
    \partial_t Q = - \partial_t P = \partial_x (\mu P ) - \frac{1}{2} \partial^2_x P = \partial_x (\mu P - \partial_x P) + \frac{1}{2} \partial^2_x P = - \partial_x ((-\mu+\partial_x \ln P) P) + \frac{1}{2} \partial^2_x P.
\end{equation}

Here, we also provide a different way to derive the score function in the backward Fokker-Planck equation of diffusion models by directly taking the time-continuous limit of the Bayes recovery channel of the forward diffusion channel. This perspective of derivation can provide a useful tool for generalization when we derive the stochastic differential equation of local Bayes recovery channels in Section \ref{apxsec:local_bayes_sde}.

Let $P : \mathcal{X} \rightarrow \mathbb{R}$ be a probability distribution  with continuous space $\mathcal{X}$, and $\mathcal{N} (y |  x) : \mathcal{X} \rightarrow \mathcal{X}$ is a noisy channel. It induces the transformation
\begin{equation}
    \mathcal{N} (P) (y) = \int_{\mathcal{X}} \dd x \, \mathcal{N} (y |  x) P (x) .
\end{equation}
Then the Bayes recovery channel $\mathcal{B}_{\mathcal{N}, P} (x |  y) : \mathcal{X} \rightarrow \mathcal{X}$ of $\mathcal{N}$ with reference probability $P$ is defined as
\begin{equation}
    \mathcal{B}_{\mathcal{N}, P} (x |  y) = \frac{\mathcal{N} (y |  x) P (x)}{\mathcal{N} (P) (y)} .
\end{equation}
For a infinitesimal transformation $\mathcal{N}_{\dt} (y | x)$, the transformed probability $\mathcal{N}_{\dt} (P) (y)$ is generated by the \textit{Fokker-Planck equation}:
\begin{equation}
    \mathcal{N}_{\dt} (P) (y) = P (y) + \dt \left[ - \frac{\partial}{\partial y} (\mu (y) P (y)) + \frac{1}{2} \frac{\partial^2 P}{\partial y^2} (y) \right] +\mathcal{O} (\dt^2) .
\end{equation}
Also, the adjoint generator acts on any test function $g (y)$ is
\begin{equation}
    \int_{\mathcal{X}} \dd y \, g (y) \mathcal{N}_{\dt} (y |  x) = g (x) + \dt \left[ \mu (x) \frac{\partial g}{\partial x} (x) + \frac{1}{2} \frac{\partial^2 g}{\partial y^2} (x) \right] +\mathcal{O} (\dt^2) .
\end{equation}
Now, we can compute the Bayes channel $\mathcal{B}_{\mathcal{N}_{\dt}, P} (x | y)$ for $\mathcal{N}_{\dt}$. Consider an arbitrary probability distribution $Q : \mathcal{X} \rightarrow \mathbb{R}$, we have
\begin{align}
    \mathcal{B}_{\mathcal{N}_{\dt}, P} (Q) (x) & = \int_{\mathcal{X}} \dd y \,  \frac{\mathcal{N} (y |  x) P (x)}{\mathcal{N} (P) (y)} Q (y) \nonumber\\
    & = P (x) \left( \frac{Q (x)}{\mathcal{N} (P) (x)} + \dt \left[ \mu (x) \frac{\partial}{\partial x} \left( \frac{Q (x)}{\mathcal{N} (P) (x)} \right) + \frac{1}{2} \frac{\partial^2}{\partial y^2} \left( \frac{Q (x)}{\mathcal{N} (P) (x)} \right) \right] +\mathcal{O} (\dt^2) \right) \nonumber\\
    & = P \cdot \left( \frac{Q}{P + \dt \left( - \frac{\partial}{\partial x} (\mu P) + \frac{1}{2} \frac{\partial^2 P}{\partial x^2} \right)} + \dt \left( \mu \frac{\partial}{\partial x} \left( \frac{Q}{P} \right) + \frac{1}{2} \frac{\partial^2}{\partial x^2} \left( \frac{Q}{P} \right) \right) +\mathcal{O} (\dt^2) \right) \nonumber\\
    & = P \cdot \left( \frac{Q}{P} - \dt \frac{Q}{P^2} \left( - \frac{\partial}{\partial x} (\mu P) + \frac{1}{2} \frac{\partial^2 P}{\partial x^2} \right) + \dt \left( \mu \frac{\partial}{\partial x} \left( \frac{Q}{P} \right) + \frac{1}{2} \frac{\partial^2}{\partial x^2} \left( \frac{Q}{P} \right) \right) \right) +\mathcal{O} (\dt^2) \nonumber\\
    & = Q (x) + \dt \left[ - \frac{\partial}{\partial x} \left( \left( - \mu (x) + \frac{\partial}{\partial x} (\ln P (x)) \right) Q \right) + \frac{1}{2} \frac{\partial^2 Q}{\partial x^2} \right] +\mathcal{O} (\dt^2).
\end{align}
We can read out the standard denoising Fokker-Planck equation:
\begin{equation}
    \frac{\partial Q}{\partial t} = - \frac{\partial}{\partial x} \left( \left( - \mu (x) + \frac{\partial}{\partial x} (\ln P (x)) \right) Q \right) + \frac{1}{2} \frac{\partial^2 Q}{\partial x^2} ,
\end{equation}
where the function $s(x):= \partial_x (\ln P (x))$ is usually called the \textit{score function}. 
We also emphasize that there is a degree of freedom in diffusion models. The solution to the Fokker-Planck equation $\partial_t Q = - \partial_x ((-\mu + \frac{\eta^2 + 1}{2} \partial_x \ln P) Q) + \frac{\eta^2}{2} \partial^2_x P$ is always $Q_t(x) = P_{T-t}(x)$ for any constant $\eta \geq 0$. 
We remark that $\eta = 1$ corresponds to the standard denoising diffusion probabilistic models, while $\eta=0$ corresponds to the standard \textit{denoising diffusion implicit models} (DDIM) models \cite{song_ddim_2021}.
Our derivation here shows that $\eta$ must be $1$ if the recovery channel is constructed by Bayes' theorem.

\subsection{Denoising for the discrete variable}
\label{apxsec:dv}

Even though diffusion models are usually defined in continuous variables, we note that they can also be applied in the case where variables are discrete.
We will encounter this scenario in the 2D classical toric code example in SM \ref{apxsec:LVPT}.

Let $P : \mathcal{X} \rightarrow [0, 1]$ be a probability distribution with discrete space $\mathcal{X}$, and $\mathcal{N} (y |  x) : \mathcal{X} \rightarrow \mathcal{X}$ is a stochastic channel. It induces the transformation
\begin{equation}
  \mathcal{N} (P) (y) = \sum_{x \in \mathcal{X}} \mathcal{N} (y |  x) P (x) .
\end{equation}
Then the Bayes recovery for $P$ is again $\mathcal{B}_{\mathcal{N}, P} (x |  y) = \frac{\mathcal{N} (y |  x) P (x)}{\mathcal{N} (P) (y)}$. For a infinitesimal transformation $\mathcal{N}_{\dt} (y |  x)$, the transformed probability $\mathcal{N} (P) (y)$ is generated by the master equation:
\begin{equation}
    \mathcal{N}_{\dt} (P) (y) = P (y) + \dt \sum_{x \in \mathcal{X}} \mathcal{L} (y |  x) P (x) +\mathcal{O} (\dt^2) .
\end{equation}
One constraint for $\mathcal{L} (y |  x)$ is that $\mathcal{L} (x | x) = - \sum_{y \neq x} \mathcal{L} (y |  x)$. Also, the adjoint generator acts on any test function $g (y)$ can be derived by $(g^T e^{\dt \mathcal{L}})^T = e^{\dt \mathcal{L}^T} g$ for any vector $g$:
\begin{equation}
    \sum_{y \in \mathcal{X}} g (y) \mathcal{N}_{\dt} (y |  x) = g (x) + \dt \sum_{y \in \mathcal{X}} \mathcal{L} (y |  x) g (y) +\mathcal{O} (\dt^2) .
\end{equation}
Now, we can compute the Bayes recovery $\mathcal{B}_{\mathcal{N}_{\dt}, P} (x | y)$ for $\mathcal{N}_{\dt}$. Consider an arbitrary probability distribution $Q : \mathcal{X} \rightarrow [0, 1]$, we have
\begin{align}
    \mathcal{B}_{\mathcal{N}_{\dt}, P} (Q) (x) & = \sum_{y \in \mathcal{X}}  \frac{\mathcal{N} (y |  x) P (x)}{\mathcal{N} (P) (y)} Q (y) \nonumber\\
    & = P (x) \left[ \frac{Q (x)}{\mathcal{N} (P) (x)} + \dt \sum_{y \in \mathcal{X}} \mathcal{L} (y |  x) \frac{Q (y)}{\mathcal{N} (P) (y)} +\mathcal{O} (\dt^2) \right] \nonumber\\
    & = P (x) \cdot \left( \frac{Q (x)}{P (x) + \dt \sum_{y \in \mathcal{X}} \mathcal{L} (x |  y) P (y)} + \dt \sum_{y \in \mathcal{X}} \mathcal{L} (y |  x) \frac{Q (y)}{P (y)} +\mathcal{O} (\dt^2) \right) \nonumber\\
    & = P (x) \cdot \left( \frac{Q (x)}{P (x)} - \dt \frac{Q (x)}{P^2 (x)} \sum_{y \in \mathcal{X}} \mathcal{L} (x |  y) P (y) + \dt \sum_{y \in \mathcal{X}} \mathcal{L} (y |  x) \frac{Q (y)}{P (y)} \right) +\mathcal{O} (\dt^2) \nonumber\\
    & = Q (x) + \dt \left[ \left( - \sum_{y \neq x} \mathcal{L} (x | y) \frac{P (y)}{P (x)} \right) Q (x) + \sum_{y \neq x} \left( \left( \mathcal{L} (y |  x) \frac{P (x)}{P (y)} \right) Q (y) \right) \right] +\mathcal{O} (\dt^2) . 
\end{align}
Namely, the denoising in discrete space is given by the transition strength $\mathcal{L} (y |  x) \frac{P (x)}{P (y)}$ for jump $y \rightarrow x$ with $y \neq x$ and $- \sum_{y \neq x} \mathcal{L} (x |  y) \frac{P (y)}{P (x)}$ for jump $x \rightarrow x$. This denosing process is well-known in machine learning literature \cite{anderson_smoothing_1983, sun_score_2022}.


\section{Recovery via Local Bayes Channels}
\label{apxsec:local_bayes}

\subsection{Bounds of errors in any local Bayes recovery channels}
\label{apxsec:local_bayes_cmi}

The ultimate goal of this work is to find a way of learning the backward dynamics without using the whole spatial information of $X_{t=t_n}$. For achieving this goal, we first introduce a very powerful tool in information theory called the \textit{classical Fawzi-Renner inequality}, which describes a generic upper bound of approximated recovery.

Formally speaking, let $P, Q : \mathcal{X} \rightarrow \mathbb{R}$ be two probability distributions, and $\mathcal{N} (y |  x):  \mathcal{X} \rightarrow \mathcal{X}$ is a \textit{noisy channel}. It induces the transformation
\begin{align}
    \mathcal{N} (P) (y) & = \sum_{x \in \mathcal{X}} \mathcal{N} (y | x) P (x), \\
    \mathcal{N} (Q) (y) & = \sum_{x \in \mathcal{X}} \mathcal{N} (y | x) Q (x) . 
\end{align}
Here, for simplicity, we assume that $\mathcal{X}$ is a discrete space and $\mathcal{N} (y |  x)$ is a stochastic transition matrix. Then the Bayes recovery channel for $Q$ is
\begin{equation}
    \mathcal{B}_{\mathcal{N}, Q} (x |  y) = \frac{\mathcal{N} (y |  x) Q (x)}{\mathcal{N} (Q) (y)} .
\end{equation}
Define the approximately recovered probability
\begin{equation}
    \hat{P} (x) := (\mathcal{B}_{\mathcal{N}, Q} \circ \mathcal{N} (P)) (x) = \sum_y \mathcal{B}_{\mathcal{N}, Q} (x |  y) \mathcal{N} (P) (y) .
\end{equation}

Now, we can state the \textit{classical Fawzi-Renner inequality}: for any two probability distributions $P, Q$, when we use Bayes recovery channel $\mathcal{B}_{\mathcal{N}, Q}$ to recover $\mathcal{N} (P)$, it always holds that 
\begin{equation}
    D_{\mathrm{KL}} (P \| Q ) - D_{\mathrm{KL}} (\mathcal{N} (P) \| \mathcal{N} (Q) ) \geq D_{\mathrm{KL}} (P \| \hat{P}) , \label{eq:cFR}
\end{equation}
where $\hat{P}(x) := \int \dd y \, \mathcal{B}_{\mathcal{N}, Q} (x |  y) \mathcal{N} (P) (y)$ is the distribution after recovery, and $D_{\mathrm{KL}} (P \| Q ) = \int \dd x P(x) \ln (P(x)/Q(x))$ is the \textit{Kullback-Leibler (KL) divergence}. 
The channel $\mathcal{B}_{\mathcal{N}, Q}$ can perfectly recover $Q$ from $\mathcal{N}(Q)$. But when we apply this recovery channel on $\mathcal{N}(P)$, Eq.\,(\ref{eq:cFR}) ensures that the KL-divergence between $P$ and $\hat{P}$ is at most the relative KL-divergence decreasing between $P$ and $Q$ after applying the channel $\mathcal{N}$. 
We refer the proof of Eq.\,(\ref{eq:cFR}) to the Lemma' 1 in Ref.\,\cite{li_squashed_2018}.
The inequality Eq.\,(\ref{eq:cFR}) is called ``classical'' because it can alternatively be obtained by decohering the Fawzi-Renner inequality in quantum information theory \cite{Sutter2016, fawzi_quantum_2015, li_squashed_2018}.

Now, suppose we partition the data $x$ into three spatial parts $A,B$, and $C$. Then the variable $X$ (before noise channel $\mathcal{N}$) and $Y$ (after) can also be partitioned into three parts: $X = X_A X_B X_C$ and $Y = Y_A Y_B Y_C$. We consider a local noisy channel $\mathcal{N}$ only acting on $A$ (that is $X_B X_C = Y_B Y_C$). 
Then, we set $P (x) = P_X (x_A, x_B, x_C)$ and $Q (x) = P_{X_A X_B} (x_A, x_B) P_{X_C} (x_C)$ in classical Fawzi-Renner inequality.
We emphasize that $\mathcal{N}$ only acting on $A$ means $P_{Y_B Y_C} = P_{X_B X_C}$: since $P_Y (y_A, x_B, x_C) = \int \dd x_A \mathcal{N} (y_A | x_A) P_X (x_A, x_B, x_C)$, we have
\begin{align}
    p_{Y_B Y_C} (x_B, x_C) & = \int \dd y_A P_Y (y_A, x_B, x_C) = \int \dd x_A \dd y_A \mathcal{N} (y_A |  x_A) P_X (x_A, x_B, x_C) \nonumber\\
    & = \int \dd x_A P_X (x_A, x_B, x_C) = P_{X_B X_C} (x_B, x_C) . 
\end{align}
The Bayes recovery $\mathcal{B}_{\mathcal{N}, Q}$ with $Q (x) = P_{X_A X_B} (x_A, x_B) P_{X_C} (x_C)$ can be simplified by
\begin{align}
    \mathcal{B}_{\mathcal{N}, Q} (x_A, x_B, x_C |  y_A, x_B, x_C) & = \frac{\mathcal{N} (y_A |  x_A) Q (x)}{\mathcal{N} (Q) (y)} = \frac{\mathcal{N} (y_A |  x_A) P_{X_A X_B} (x_A, x_B) P_{X_C} (x_C)}{\int \dd x_A \mathcal{N} (y_A |  x_A) P_{X_A X_B} (x_A, x_B) P_{X_C} (x_C)} \nonumber\\
    & = \frac{\mathcal{N} (y_A |  x_A) P_{X_A X_B} (x_A, x_B)}{\int \dd x_A \mathcal{N} (y_A |  x_A) P_{X_A X_B} (x_A, x_B)} \quad \left( \text{independent from } x_C \right) . 
\end{align}
Such $x_C$-independence means that we can well define:
\begin{definition}[\textbf{Local Bayes recovery channel}]
    Given the $A,B,C$ spatial partitions of the data $x = (x_A, x_B, x_C)$, for any noisy channel $\mathcal{N} (y_A | x_A)$ on $A$ and marginal distribution $P_{A B} (x_A, x_B)$ on $AB$ (here $P_{A B}$ is the abbreviation of marginal distribution $P_{X_A X_B}$ when it does not cause confusion), the local Bayes recovery is 
    \begin{equation}
        \mathcal{B}_{\mathcal{N}, P_{A B}} (x_A, x_B |  y_A, x_B) = \frac{\mathcal{N} (y_A | x_A) P_{A B} (x_A, x_B)}{\int \dd x_A \mathcal{N} (y_A |  x_A) P_{A B} (x_A, x_B)} . \label{apxeq:local_bayes}
\end{equation}
\end{definition}

By the definition of mutual information
\begin{align}
    D_{\mathrm{KL}} (P \| Q ) & = I (X_A X_B : X_C), \\
    D_{\mathrm{KL}} (\mathcal{N} (P) \| \mathcal{N} (Q) ) & = I (Y_A Y_B : Y_C) .
\end{align}
We will leverage the relation between mutual information and conditional mutual information:
\begin{align}
  I (X_A X_B : X_C) & = I (X_A : X_C |  X_B) + I (X_B : X_C), \\
  I (Y_A Y_B : X_C) & = I (Y_A : Y_C |  Y_B) + I (Y_B : Y_C), 
\end{align}
where $I (X_B : X_C) = I (Y_B : Y_C)$ because $P_{X_B X_C} (x_B, x_C) = P_{Y_B Y_C} (x_B, x_C)$. We can now bound the KL-divergence $D_{\mathrm{KL}} (P \| \hat{P} )$
\begin{equation}
    D_{\mathrm{KL}} (P \| \hat{P} ) \leq D_{\mathrm{KL}} (P \| Q ) - D_{\mathrm{KL}} (\mathcal{N} (P) \| \mathcal{N} (Q) ) = I (X_A : X_C |  X_B) - I (Y_A : Y_C |  Y_B) \leq I (X_A : X_C | X_B) . \label{apxeq:I-I}
\end{equation}


For bounding the error of multi-step denoising as what we will show in SM \ref{apxeq:local_bayes}, we need to introduce the \textit{total variance} $\mathrm{TV} (P, \hat{P}) = \frac{1}{2} \sum_x | P (x) - \hat{P} (x) |$.
According to Pinsker's inequality $2 \mathrm{TV} ( P, \hat{P} )^2 \leq D_{\mathrm{KL}} (P \| \hat{P} )$, we have 
\begin{equation}
    2 \mathrm{TV} ( P, \hat{P} )^2 \leq D_{\mathrm{KL}} (P \| \hat{P} ) \leq I (X_A : X_C |  X_B) .
\end{equation}

\textbf{Remark}. The converse of the Fawzi-Renner inequality is also true for conditional sampling \cite{fawzi_quantum_2015}. Suppose any local recovery channel $\mathcal{B}$ which recovery $(X_A, X_B)$ from $X_A$, and recovered state $\hat{P}(x_A, x_B, x_C) := \mathcal{B} (P_{X_B X_C} (x_B, x_C))$, one must have
\begin{equation}
    I (X_A : X_C |  X_B)_P \leq 7 n \, \mathrm{TV} (P, \hat{P} )^{1/2}. \label{eq:conFR}
\end{equation}
The proof of Eq.\,(\ref{eq:conFR}) can be found in the Eq.\,(10) of Ref.\,\cite{fawzi_quantum_2015}.

\subsection{Stochastic differential equation of local Bayes recovery channels}
\label{apxsec:local_bayes_sde}

Now, let us derive the continuous time version of Eq.\,(\ref{apxeq:local_bayes}) with local forward channel only acting on $A$. If the forward process only acts on $A$, then the forward SDE is:
\begin{align}
    \dd X_A & = \mu (X_A, t) \dd t + \sigma (t) \dd W_A, \\
    \dd X_B & = \dd X_C = 0. 
\end{align}
The forward Fokker-Planck equation is
\begin{equation}
    \frac{\partial P}{\partial t} (x, t) = - \frac{\partial}{\partial x_A} (\mu (x_A, t) P (x, t)) + \frac{1}{2} \nabla_{x} \cdot D (t) \nabla_{x} P (x, t),
\end{equation}
where $D (t) = \left(\begin{array}{ccc}
    \sigma (t)^2 & 0 & 0 \\
    0 & 0 & 0 \\
    0 & 0 & 0
\end{array}\right)$. According to SM \ref{apxsec:cv}, the backward Fokker-Planck equation for local Bayes denoising on $A \cup B$ is:
\begin{align}
    \left(\begin{array}{c}
            \dd Y_A \\
            \dd Y_B
        \end{array}\right) & = \left[ - 
        \left(\begin{array}{c}
            \mu (Y_A, T-t)\\
            0
        \end{array}\right) + 
        \left(\begin{array}{cc}
            \sigma (T-t)^2 & 0\\
            0 & 0
        \end{array}\right) 
        \left(\begin{array}{c}
            \frac{\partial \ln P_{AB}}{\partial x_A} (Y_A, Y_B, T-t)\\
            \frac{\partial \ln P_{AB}}{\partial x_B} (Y_A, Y_B, T-t)
        \end{array}\right) \right] \dd t + 
        \left(\begin{array}{c}
            D (T-t) \dd W_A\\
            0
        \end{array}\right) \nonumber\\
    & = \left(\begin{array}{c}
            \left( - \mu (Y_A, T-t) + \sigma (T-t)^2 \frac{\partial \ln P_{AB}}{\partial x_A} (Y_A, Y_B, T-t) \right) \dd t + D (T-t) \dd W_A\\
            0
        \end{array}\right), 
\end{align}
or equivalently,
\begin{align}
    \dd Y_A & = \left( - \mu (Y_A, T-t) + \sigma (T-t)^2 \frac{\partial \ln P_{AB}}{\partial x_A} (Y_A, Y_B, T-t) \right) \dd t + \sigma (T-t) \dd W_A, \label{apxeq:local_sde} \\
    \dd Y_B & = \dd Y_C = 0. 
\end{align}
Therefore, even if the local Bayes channel Eq.\,(\ref{apxeq:local_bayes}) acts on the marginal probability distribution on $A \cup B$, this process requires knowledge about $A \cup B$ while only operating on $A$.

\subsection{Bound of total variance for non-overlapping local Bayes recovery channels}
\label{apxsec:local_bayes_overall}

Recall that the reorganized diffusion process forward with $N = T/\dt$ steps is
\begin{align}
    \mathcal{N}_{\mathrm{tot}} & := \mathcal{N}_{n = N} \circ \cdots \circ \mathcal{N}_{n = 2} \circ \mathcal{N}_{n = 1}, \\
    \mathcal{N}_{n} & := \prod_l \mathcal{N}_{n, l} . 
\end{align}
Now we consider overall local recovery channels with:
\begin{align}
    \mathcal{B}_{\mathrm{tot}} & := \mathcal{B}_{n = 1} \circ \mathcal{B}_{n = 2} \circ \cdots \circ \mathcal{B}_{n = N}, \\
    \mathcal{B}_n & := \prod_l \mathcal{B}_{n, l}, \\
    \mathcal{B}_{n, l} & := \mathcal{B}_{\mathcal{N}_{n, l}, P_{A_{n, l} {B_{n, l}} }}.
\end{align}
Here $P_{A_{n, l} {B_{n, l}} }$ is the abbreviation of $P_{X_{A_{n, l}} X_{B_{n, l}}}$. In this sub-section, we assume that for a given $n$, all regions $\{B_{n, l}\}_{l}$ are non-overlapping. We leave the proof of the case with more generic $\{B_{n, l}\}_{l}$ in SM \ref{apxsec:local_bayes_reorg}. 

The recovery error of any one single forward-backward evolution step $\mathcal{B}_n \circ \mathcal{N}_n$ acting on any $P_{n - 1}$ (due to non-overlapping of $\{B_{n, l}\}_{l}$) is bounded by:
\begin{align}
    \mathrm{TV} (\mathcal{B}_n \circ \mathcal{N}_n (P_{n - 1}), P_{n - 1}) & = \left| \sum_{l = 1}^{l_{\max} - 1} \mathrm{TV} (\mathcal{B}_{n, l} \circ \mathcal{N}_{n, l} (\mathcal{B}_{n, < l} (P_{n - 1})), \mathcal{B}_{n, < l} (P_{n - 1})) \right| \nonumber\\
    & \overset{\text{(i)}}{\leq} \sum_{l = 1}^{l_{\max} - 1} | \mathrm{TV} (\mathcal{B}_{n, l} \circ \mathcal{N}_{n, l} (\mathcal{B}_{n, < l} (P_{n - 1})), \mathcal{B}_{n, < l} (P_{n - 1})) | \nonumber\\
    & \overset{\text{(ii)}}{=} \sum_{l = 1}^{l_{\max} - 1} | \mathrm{TV} (\mathcal{B}_{n, < l} (\mathcal{B}_{n, l} \circ \mathcal{N}_{n, l} (P_{n - 1})), \mathcal{B}_{n, < l} (P_{n - 1})) | \nonumber\\
    & \overset{\text{(iii)}}{\leq} \sum_{l = 1}^{l_{\max} - 1} | \mathrm{TV} (\mathcal{B}_{n, l} \circ \mathcal{N}_{n, l} (P_{n - 1}), P_{n - 1}) |, 
\end{align}
where $\mathcal{B}_{n, < l} := \prod_{l' < l} \mathcal{B}_{n, l'} \circ \mathcal{N}_{n, l'}$. The inequality (i) is from triangle inequality of total variance, the equality (ii) is from the commutativity between $\mathcal{B}_{n, l} \circ \mathcal{N}_{n, l}$ and $\mathcal{B}_{n, < l}$, and the inequality (iii) is from contractivity of total variance under noisy channels $\mathrm{TV} (\mathcal{C} (P), \mathcal{C} (Q)) \leq \mathrm{TV} (P, Q)$.

We define $\mathcal{B}_{\{ 1, \cdots, n \}} := \mathcal{B}_1 \circ \cdots \circ \mathcal{B}_n$. We have the following iteration relation:
\begin{align}
    \mathcal{B}_{\{ 1, \cdots, n \}} (P_n) & = \mathcal{B}_{\{ 1, \cdots, n - 1 \}} (\mathcal{B}_n (P_n)) =\mathcal{B}_{\{ 1, \cdots, n - 1 \}} (\mathcal{B}_n \circ \mathcal{N}_n (P_{n - 1})) \nonumber\\
    & = \mathcal{B}_{\{ 1, \cdots, n - 1 \}} (P_{n - 1}) +\mathcal{B}_{\{ 1, \cdots, n - 1 \}} (\mathcal{B}_n \circ \mathcal{N}_n (P_{n - 1}) - P_{n - 1}) . 
\end{align}
Then the overall error of the denoising process is
\begin{align}
    \mathrm{TV} (\mathcal{B}_{\mathrm{tot}} \circ \mathcal{N}_{\mathrm{tot}} (P_0), P_0) & = \mathrm{TV} (\mathcal{B}_1 \circ \cdots \circ \mathcal{B}_N \circ \mathcal{N}_1 \circ \cdots \circ \mathcal{N}_N (P), P) = \frac{1}{2} | \mathcal{B}_{\{ 1, \cdots, N \}} (P_N) - P_0 |_1 \nonumber\\
    & \overset{\text{(i)}}{=} \frac{1}{2} \left| \sum_{n = 1}^{N - 1} \mathcal{B}_{\{ 1, \cdots, n - 1 \}} (\mathcal{B}_n \circ \mathcal{N}_n (P_{n - 1}) - P_{n - 1}) \right|_1 \nonumber\\
    & \overset{\text{(ii)}}{\leq} \frac{1}{2} \sum_{n = 1}^{N - 1} | \mathcal{B}_{\{ 1, \cdots, n - 1 \}} (\mathcal{B}_n \circ \mathcal{N}_n (P_{n - 1}) - P_{n - 1}) |_1 \nonumber\\
    & \overset{\text{(iii)}}{\leq} \frac{1}{2} \sum_{n = 1}^{N - 1} | \mathcal{B}_n \circ \mathcal{N}_n (P_{n - 1}) - P_{n - 1} |_1 \nonumber\\
    & \overset{\text{(iv)}}{\leq} \sum_{n = 1}^{N - 1} \sum_{l = 1}^{l_{\max} - 1} \mathrm{TV} (\mathcal{B}_{n, l} \circ \mathcal{N}_{n, l} (P_{n - 1}), P_{n - 1}), \label{apxeq:TVB}
\end{align}
where equality (i) is from the iteration relation, inequality (ii) is from the triangle inequality of 1-norm, inequality (iii) is from the contractivity of 1-norm under noisy channels, and inequality (iv) is from the error bound of a single forward-backward evolution step. 

\subsection{Bound of total variance for generic local Bayes recovery channels via reorganization trick}
\label{apxsec:local_bayes_reorg}

In SM \ref{apxsec:local_bayes_overall}, we let each $\mathcal{N}_{n, l}$ in $\mathcal{N}_{\mathrm{tot}} = \mathcal{N}_{N} \circ \cdots \circ \mathcal{N}_{2} \circ \mathcal{N}_{1}$ acts on a region $A_{n,l}$ and $\{A_{n,l}\}$ do not overlap with each other. However, to undo the effect of $\mathcal{N}_{n,l}$, one usually has to apply a local Bayes channel $\mathcal{B}_{n,l}$ in a larger region $A_{n,l} \cup B_{n,l}$. In general, it is not guaranteed that these $\{B_{n,l}\}$ are non-overlapping for a given $n$. Now, we introduce a reorganization trick proposed in Ref.\,\cite{sang_statbility_2025} to handle the generic case where the newly constructed $\{B_{n,l}\}$ are non-overlapping. 

Roughly speaking, we just need to reorganize the forward diffusion process a little bit to make sure that when each $A_{n,l}$ is expanded into $A_{n,l} \cup B_{n,l}$, those $B_{n,l}$ are non-overlapping.
To be more specific, for the $n$-th diffusion step, we reorganize these $\mathcal{N}_{n, l}$ into $M_n$ diffusion sub-steps $t_{n} = t_{n, 0} < \cdots < t_{n, M_n} = t_{n+1}$, such that the local noisy channels within each new sub-step are at least distance $2 r_n$ separated from each other, with $r_n$ at each diffusion to be determined later in Eq.\,(\ref{eq:bayes_radius}). The sub-step number scales as $M_n = O(r_n^d)$, which is also shown to be $\mathrm{polylog}(L)$ later. See the 2D schematic of reorganization in Fig.\,\ref{fig:reorg}a and Fig.\,\ref{fig:reorg}b.

\begin{figure}[t]
    \centering
    \includegraphics[width=0.66\linewidth]{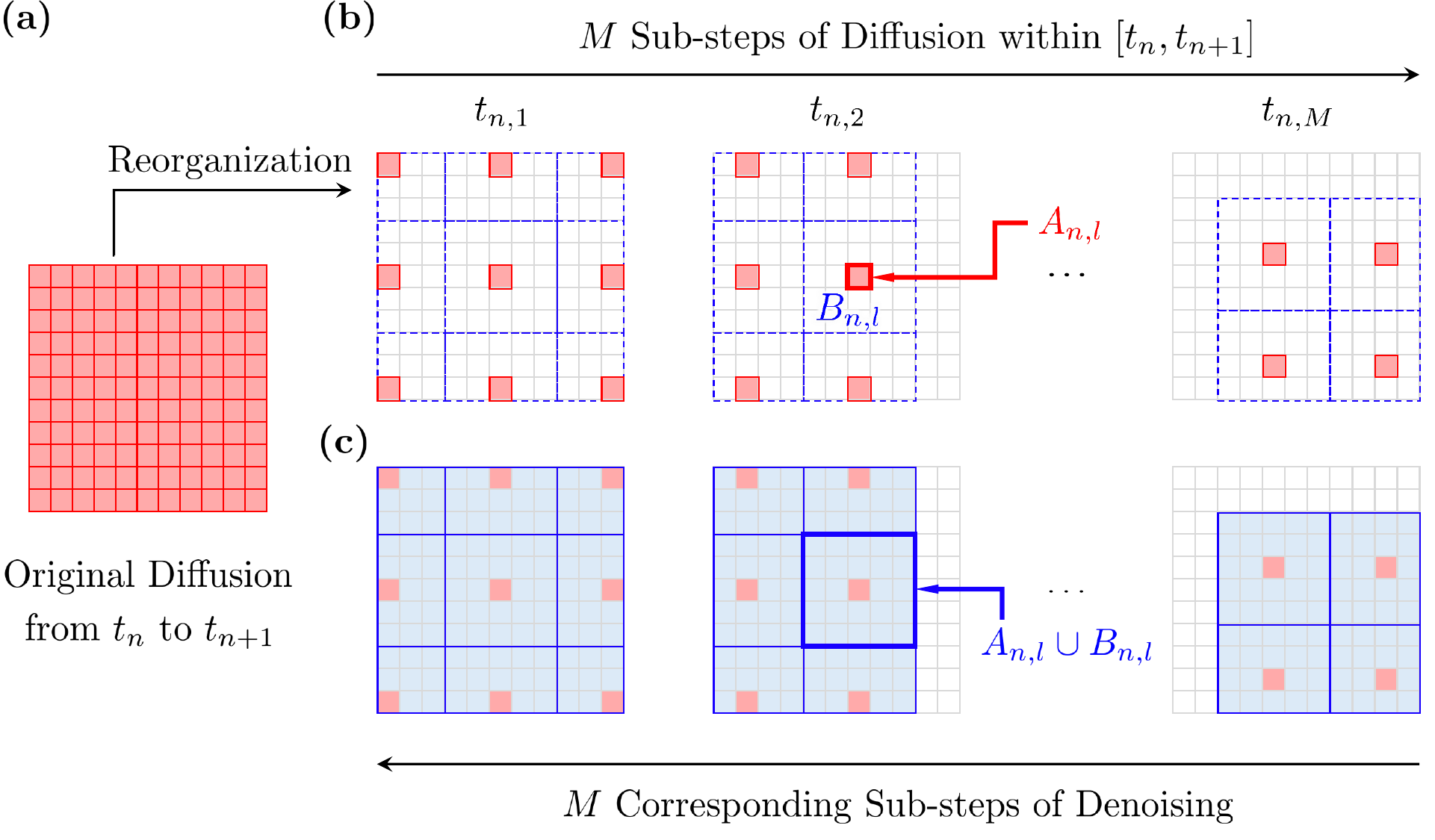}
    \caption{Schematic of reorganization with example $L=11$, $d=2$, $k=1$ and $r=2$. (a) In the $n$-th step of the original diffusion, the noise channel is added in parallel on $O(L^d)$ many $k$-sized regions. (b) Dividing the $n$-th step into $M$ sub-steps. In each sub-step, the local channels $\mathcal{N}_{n, l}$ act on local regions $A_{n,l}$ (with solid red boundary) that are separated by distance $2r$. Their $r$-distance surrounding regions are denoted as $B_{n,l}$ (with dashed blue boundary). (c) Reversal channels $\mathcal{B}_{n, l}$ of $\mathcal{N}_{n, l}$. The channel $\mathcal{B}_{n, l}$ acts on quasi-local regions $A_{n,l} \cup B_{n,l}$ (with solid blue boundary), but the operation of $\mathcal{B}_{n, l}$'s SDE only acts on $A_{n, l}$ locally.}
    \label{fig:reorg}
\end{figure}

From now on, we will always assume the diffusion process has been reorganized through Fig.\,\ref{fig:reorg}b. Again, let the overall local recovery channel be $\mathcal{B}_{\mathrm{tot}} = \mathcal{B}_{1} \circ \mathcal{B}_{2} \circ \cdots \circ \mathcal{B}_{N}$, where $\mathcal{B}_n = \prod_l \mathcal{B}_{n, l}$ and $\mathcal{B}_{n, l} = \mathcal{B}_{\mathcal{N}_{n, l}, P_{A_{n, l} {B_{n, l}} }}$. 
$B_{n,l}$ is a region surrounding $A_{n, l}$ with width $r_n$. Thanks to the reorganization trick, these $\mathcal{B}_{\mathcal{N}_{n, l}}$ within the $n$-th denoising step are also non-overlapping, because all $\{A_{n, l}\}$ are separated from each other by a distance at least $2 r_n$.

According to Eq.\,(\ref{apxeq:TVB}) in SM \ref{apxsec:local_bayes_overall}, we obtain that the overall error $\mathrm{TV} (\mathcal{B} \circ \mathcal{N} (P_0), P_0)$ of the denoising process is at most
\begin{equation}
    \sum_{n = 1}^{N - 1} \sum_{l = 1}^{l_{\max} - 1} \mathrm{TV} (\mathcal{B}_{n, l} \circ \mathcal{N}_{n, l} (P_{n - 1}), P_{n - 1}). \label{eq:local_bayes_overall}
\end{equation}

Finally, let us bound the total variance of generation in the case where the distribution $P_n$ after the $n$-th diffusion step always has a Markov length $\xi_n$.
According to Eq.\,(\ref{eq:local_bayes}) and Eq.\,(\ref{eq:fml}), finite Markov length at any time implies that each term in summation of Eq.\,(\ref{eq:local_bayes_overall}) is bounded by $N K \cdot \gamma^{1/2} e^{-r_n/2\xi_n}$. Therefore, for achieving the generation error $\mathrm{TV} (\mathcal{B} \circ \mathcal{N} (P_0), P_0) < \varepsilon$, we only need to take the width of $B_{n, l}$ for all $l$ be:
\begin{equation}
    r_n \geq 2 \xi_n \cdot \ln \left( \gamma^{1/2} N K / \varepsilon \right).
\end{equation}
Because $K = L^d$ is $\mathrm{poly}(L)$, the condition of $r_n$ also ensures that the sub-step number has a scaling $M_n = O(r_n^d) = \mathrm{polylog}(L)$.

On the other hand, the critical distance of $r_n$ in Eq.\,(\ref{eq:bayes_radius}) is explicit related to $N = O (\dt^{-1})$. But intuitively, the critical distance should not diverge when taking $\dt \to 0$. The same problem occurs in open quantum systems \cite{sang_statbility_2025}. Resolving this divergence requires an improved characterization of the CMI temporal decreasing $I(X_A : X_C |  X_B)_{P_n} - I(X_A : X_C |  X_B)_{\mathcal{N}_n(P_{n})} \propto \dt$. To the best of our knowledge, this is still an open question.

\subsection{Difference between CMI and two point correlation}
\label{apxsec:cmi-2pc}

\rev{In this subsection, we provide several examples to illustrate the difference between two-point correlation and CMI. This may help to understand why CMI actually captured the nature of local reversibility, instead of correlation.

\textbf{Large two-point correlation but zero CMI}.
Let a three-bit 1D random variable $X_A$-$X_B$-$X_C$ with $A=B=C$ and $X_A \sim \mathrm{Bernoulli}(1/2)$. It is easy to verify that it has a zero CMI $I(X_A:X_C|X_B)=0$ but a maximally non-zero correlation $I(X_A:X_C)=1$ and $\mathrm{Cov}(X_A, X_C)=1$. This zero CMI implies that both $A$ and $C$ can be locally determined and recovered by their neighbour $B$.

Another nontrivial example is the Gibbs distribution of the classical 2D Ising model near the critical temperature. Its correlation length is long-range, but its CMI is strictly zero for any $r=\mathrm{dist}(A,C) \geq 2$ due to the well-known Hammersley–Clifford theorem (which asserts that being a Gibbs is equivalent to Markovianity for classical distribution) \cite{clifford1971markov}. It is also known that such a Gibbs distribution is always locally learnable and recoverable \cite{Montanari2015, vuffray_interaction_2015}, reiterating the connection between short-range CMI and local reversibility.

\textbf{Large CMI but zero two-point correlation}.
Let $X_A$-$X_B$-$X_C$ with $X_A, X_C \sim \mathrm{Bernoulli}(1/2)$ and $X_B = X_A + X_C$ (binary addition). We can show that correlation vanishes anywhere $I(X_A:X_B)=I(X_B:X_C)=I(X_A:X_C)=0$ (and all covariances are zeros as well) but CMI $I(X_A:X_C|X_B)=1$. Since $X_A = X_B + X_C$, the lack of any one bit of $X_B$ and $X_C$ will make the recovery of $X_A$ impossible.
Such a non-zero CMI captures the essence that recovery of local information requires the knowledge of global information.

In the main text, we have already presented that for MNIST dataset, CMI has a phase transition peak at $t_c \approx 0.3 \sim 0.4$ (see Fig.\,\ref{fig:mnist}a). Also, in Sec.\,\ref{sec:numerics} of the main text, we mentioned that two-point correlation should monotonically decrease during diffusion process with pixel-wise noise. In Fig.\,\ref{fig:cov}, we numerically compute the two-point covariance between two pixels $A_1$ and $A_2$ with distance $r$, and and verify that it monotonically decrease during diffusion.

\begin{figure}[h]
    \centering
    \includegraphics[width=0.3\linewidth]{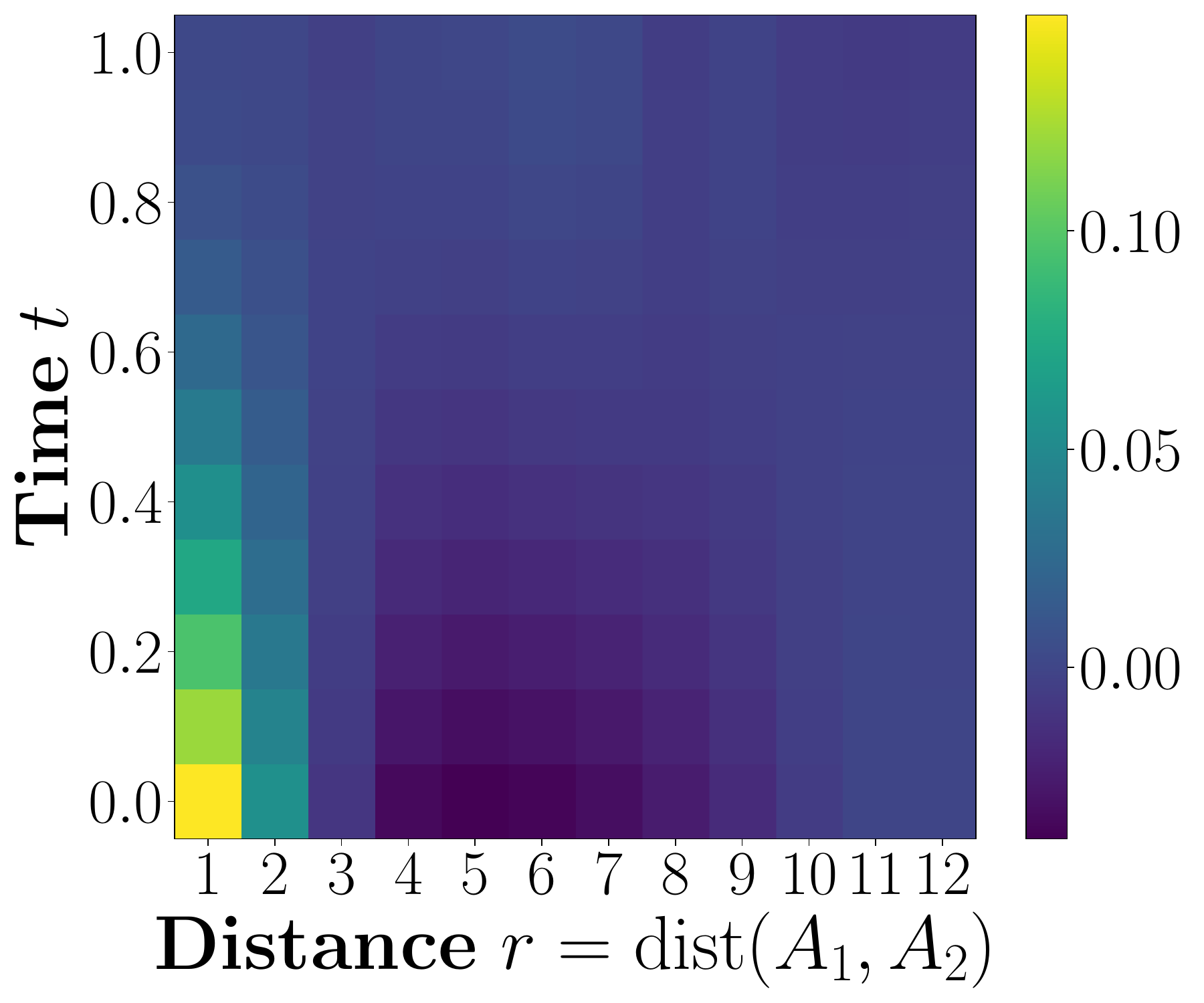}
    \caption{Covariance between the central pixel $A_1$ and another pixel $A_2$ that is $r$-distance away from $A_1$, at different diffusion time $t$. As predicted by the data processing inequality, the two-point covariance monotonically decreases during diffusion.}
    \label{fig:cov}
\end{figure}
}



\section{Numerical Details}
\label{apxsec:mnist}

\subsection{Mutual information neural estimator for CMI of MNIST}
\label{apxsec:mine}

In this section, we provide the details of evaluating the CMI of MNIST diffusion. As mentioned in the main text, we rewrite the CMI into the form of mutual information difference $I (X_A : X_C | X_B) = I (X_A : X_B X_C) - I (X_A : X_B)$. Here, $A$ is the central pixels of the images, $B$ is the neighbourhood of $A$ with a width $r$, and $C$ is the rest of the images (for well-define the center pixel $A$, we remove the first row and first column of MNIST. This turns out to make no difference in CMI calculation because the edge of MNIST are all almost close to $0$).
Therefore, we only need to resolve the mutual information in the form of $I (X_A : X_S)$ between $A$ and its surroundings. There are two types of surroundings, the first one is $X_S \leftarrow (X_B, X_C)$, which directly gives $I (X_A : X_S) = I (X_A : X_B X_C)$. The second one is $X_S = (X_B, \bm{0})$ with $| C |$ many repeated zeros as padding. This type of surroundings gives $I (X_A : X_S) = I (X_A : (X_B, \bm{0})) = I (X_A : X_B)$.

Now, we elaborate on how the mutual information neural estimator (MINE) works. The theoretical foundation of MINE is the Donsker-Varadhan dual representation of the KL divergence \cite{donsker_asymptotic_1983}. For any two distributions $P, Q$, one has
\begin{equation}
    D_{\mathrm{KL}} (P | | Q) = \sup_T \left(\mathbb{E}_P [T] - \ln (\mathbb{E}_Q [e^T]) \right) .
\end{equation}
where the supremum is taken over all functions $T$ such that the two expectations are finite. The supremum is achievable when $T^{\star}$ satisfies $\dd P = \frac{e^{T^{\star}}}{\mathbb{E}_Q [e^{T^{\star}}]} \dd Q$. Therefore, for mutual information $I (X_A : X_S) = D (P_{A S} | | P_A \otimes P_S)$, we have
\begin{equation}
    I (X_A : X_S) \geq I_{\mathrm{MINE}} (X_A : X_S) := \sup_{\theta} (\mathbb{E}_{P_{A S}} [T_{\theta}] - \ln (\mathbb{E}_{P_A \otimes P_S} [e^{T_{\theta}}])),
\end{equation}
where $T_{\theta}$ is a function represented by some neural network. The sampling of $P_A \otimes P_S$ is straightforward by just sampling the marginal distribution of $X_S$. See the pseudo-code for the MINE algorithm details of computing $I_{\mathrm{MINE}} (X_A : X_S)$.

Inspired by Ref.\,\cite{lu_tensor_2025}, we use a convolutional neural network (CNN) as the $T_{\theta}$. The CNN consists of four layers: one convolution layer, one average pooling layer with a ReLU activation and dropout with probability $p = 0.1$, one fully-connected layer with a ReLU activation and dropout with probability $p = 0.3$, and another final fully-connected output layer. The convolutional layer has a kernel size $3$. The average pooling layer has a kernel size $2$ and a stride of $2$. The dropout is a regularization technique that is used to prevent overfitting during training \cite{srivastava_dropout_2014}. 

For all times $t$ and all distances $r$, we use the same CNN architecture and keep all the following settings and hyperparameters the same. We use AdamW optimizer \cite{kingma_adam_2015, loshchilov2019decoupled} to train $T_{\theta}$. We set a batch size of $100$, a learning rate $10^{- 4}$, and a weight decay of $10^{-4}$. The total training dataset contains $60,000$ MNIST images. We train for $500$ epochs, namely a total training iteration number $300, 000$.
We also leverage the moving average trick presented in Ref.\,\cite{belghazi_mutual_2015, lee_benchmark_2025}, with a moving average rate $0.001$, to mitigate bias in minibatch sampling. 

We benchmark our numerical result by computing $I (X_A : X_B X_C)$ at $t = 0$ and $k = 1$ (that is $k/L=1/28$ in noiseless MNIST images). 
Our numerics show that, in this scenario, it yields a mutual information $I (X_A : X_B X_C) = 1.05$. This agrees with the $I(\mathrm{C}:\mathrm{S})$ at $\mathbf{L}/\mathbf{L}_{\max} = 1/28$, presented in th Fig.\,B.2b of Ref.\,\cite{lu_tensor_2025}. 

\RestyleAlgo{ruled}
\SetKwComment{Comment}{/* }{ */}
\SetKwInOut{input}{Input}
\SetKwInOut{output}{Output}
\SetKwFor{For}{For}{}{EndFor}
\SetKwFor{If}{If}{}{EndIf}
\begin{algorithm}[t]
    \caption{MINE Algorithm for $I(X_A:X_S)$}\label{alg:mine}
    \input{Dataset $\{(X_A^{(i)}, X_S^{(i)})\}_{i \in N_{\mathrm{data}}}$}
    \output{Mutual information estimator $I_{\mathrm{MINE}} (X_A : X_S)$}
    \For{$n \gets 1$ to $N_{\mathrm{iteration}}$}{
        Draw $N_{\mathrm{batch}}$ minibatch samples from joint distribution $(X_A^{(1)}, X_S^{(1)}), \cdots, (X_A^{(N_{\mathrm{batch}})}, X_S^{(N_{\mathrm{batch}})}) \sim P_{AS}$ \;
        Draw $N_{\mathrm{batch}}$ minibatch samples from marginal distribution $(\bar{X}_S^{(1)}, \cdots, \bar{X}_S^{(N_{\mathrm{batch}})} ) \sim P_S$ \;
        Evaluate $I_{\mathrm{MINE}} (X_A : X_S) = \frac{1}{N_{\mathrm{batch}}} \sum_{i=1}^{N_{\mathrm{batch}}} T_{\theta} (X_A^{(i)}, X_S^{(i)}) - \ln \left( \sum_{i=1}^{N_{\mathrm{batch}}} e^{T_{\theta} (X_A^{(i)}, \bar{X}_S^{(i)})} \right)$ \;
        Compute the gradient and update the parameters $\theta$
        \Comment*[r]{Moving average trick is needed, see Ref.\,\cite{belghazi_mutual_2015}}
    }
\end{algorithm}

\subsection{Global and local denoisers of MNIST with U-Nets}
\label{apxsec:unet}

For global denoisers, we employ a U-Net-based architecture~\cite{ronneberger2015unet}. 
A U-Net is a special convolutional neural network that allows global connections via pooling and skip connections between the encoder and decoder. 
In our numerics, each U-Net encoder block comprises two convolutional layers, group normalization, and SiLU activations, followed by $2\times2$ max pooling for down-sampling. 
The decoder mirrors this structure, using transposed convolutions for up-sampling and concatenating encoder features via skip connections. 
The channel width increases by a factor of two at each encoder stage, starting from $64$, and decreases by half at each corresponding decoder stage. Three pairs of encoders and decoders are used in this work. 
The bottleneck consists of a convolutional block with increased channel width. All convolutional blocks use a standard kernel size of 3, a stride of 1, and a padding of 1.

For timestep embedding, we use sinusoidal embeddings followed by a two-layer MLP with SiLU activations. These embeddings are injected into each encoder and decoder block using feature-wise affine transformations (FiLM)~\cite{perez2017f}.

We train the model by using the \textit{flow matching} \cite{lipman2022flow}.
Specifically, we predict the difference between the clean image and the noise, using a linear interpolation between the image and noise at a randomly sampled timestep. This schedule is also known as the $\alpha$-(de)Blending schedule~\cite{Heitz2023alphaBlending}. At each iteration, a clean image $X_0$ is sampled from the dataset, and a standard Gaussian noise vector $Z\sim \mathcal{N}(0, I)$ is randomly generated. A random timestep $t\in[0,1]$ is sampled from a standard logit-normal distribution (a random variable is standard \textit{logit-normal} if it is a sigmoid of a standard Gaussian variable). The noisy image is constructed as $X_t = (1 - t)X_0 + t Z$.
Then, the network receives data $X_t \in \mathbb{R}^K$ and time $t \in [0, 1]$. The network is trained to predict $X_0 - Z$. The loss function is the mean squared error $\mathrm{MSE}(V_t, X_0 - Z) $ between the model output $V_t$ and the target $X_0 -Z$.
Optimization is performed using AdamW with a learning rate of $10^{-3}$ and weight decay of $10^{-3}$. The model is trained for 15 epochs with a batch size of 512. 



During inference, image generation begins by sampling a batch of standard Gaussian noise $Y_0 \sim \mathcal{N}(0, I)$. We divide the denoising into $N=32$ steps. In each time step, the current image estimate $Y_t$ is passed to the U-Net model, along with the current timestep $t \in [0, 1]$. The model predicts the denoising flow direction, $V_t \approx Z-X_0$, which is scaled by the step size $1/N$ and added to $Y_t$ to produce the next estimate $Y_{t+1/N} = Y_t - \frac{1}{N}V_t$. This process is repeated, progressively reducing the noise and reconstructing image structure, until $t$ reaches zero. 

The local denoisers are essentially U-Nets but with the pooling layers removed so that we can constrain the receptive field to be small. Then, we control the kernel size in each layer to control the overall receptive field. We employ a three-layer U-Net with the kernel radius taking values between zero and two ($\text{kernel size} = 2\times \text{kernel radius}+1$). We also keep the kernel radius the same for the matching down and up layers. This gives the possible receptive field radius of $0,2,\cdots,12$. To test the odd receptive field radius, we add one more convolutional layer at the beginning with the kernel radius being zero or one. The training of the local denoisers follows from the training of the global denoiser. 

For the local recovery numerics shown in Fig.\,\ref{fig:mnist} of the main text, recall that we select a clean image $X_0$, diffuse $X_0$ to $X_{T_f}$, and then denoise $X_{T_f}$ to $Y_{T_f, r}$ with local denoisers whose kernel size is $2r+1$. 
As a complement, we compute the MSE between $Y_{T_f, r}$ and $X_{0}$. This error quantitatively reflects the fidelity of the learned flows through local denoisers, see Fig.\,\ref{fig:mnist-apx}a. We also scan the denoised images $Y_{T_f, r}$ in the interval $[0.2, 0.5]$, showing a more accurate phase tansition point $t_c = 0.38 \sim 0.41$, such that for any $T_f > t_c$, the denoised images are always significantly different from the original images for any $r$ (see Fig.\,\ref{fig:mnist-apx}b).

\begin{figure}[t]
    \centering
    \includegraphics[width=\linewidth]{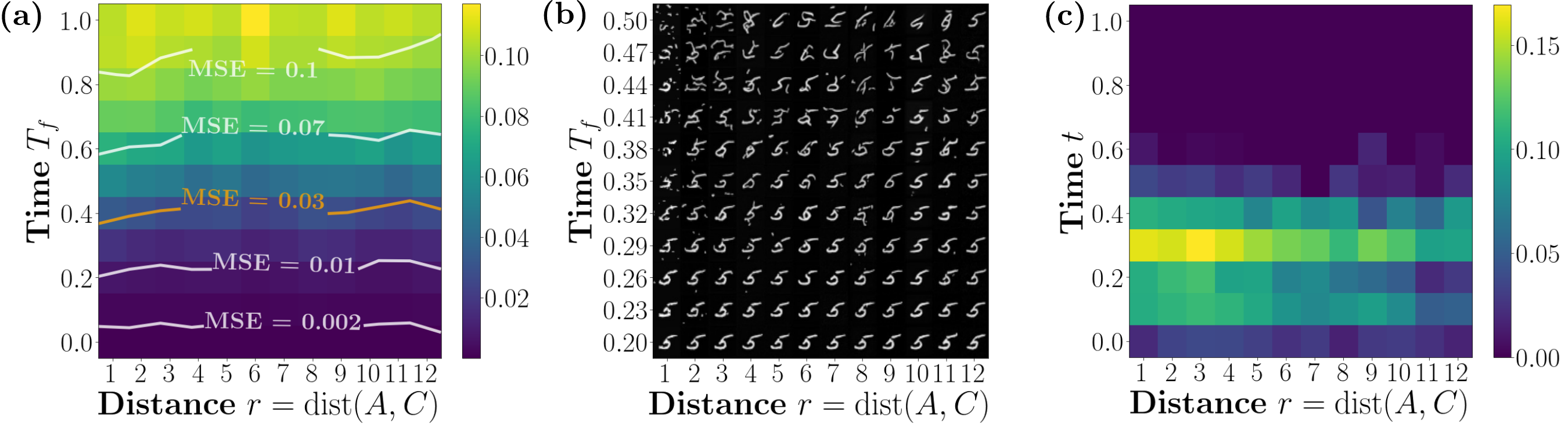}
    \caption{
    (a) MSE between $X_0$ (the clean images) and $Y_{T_f, r}$ (the images locally denoised from corrupted images at $t=T_f$). Each local denoiser acts on region $A \cup B$ with diameter $2r+1$. Contours show different MSE values, where $\mathrm{MSE}=0.03$ (in orange) qualitatively represents the threshold of $T_f$, after which the denoised images are always significantly different from the original images for any $r$. 
    (b) Scan of denoised images within the phase transition window $[0.2, 0.5]$.
    \rev{(c) CMI $I(X_A:X_C|X_B)$ of $27 \times 27$ MNIST with $A$ being the non-central pixel on the seventh row and seventh column (the upper left quarter of the image, along the diagonal).}
    }
    \label{fig:mnist-apx}
\end{figure}

\subsection{Numerical results of Fashion-MNIST}
\label{apxsec:fashion-mnit}

In this section, we present a similar numerical result of the phase transition during the diffusion of Fashion-MNIST, a large database of fashion images. Just as MNIST images, Fashion-MNIST images are $28 \times 28$.
We select the pixels at the 15th row and the 15th column of the image to be $A$ so that $k=1$. Then, the CMI as a function of distance $r = \mathrm{dist} (A, C)$ at different time steps $t$ is shown in Fig.\,\ref{fig:mnist}c. 
In the limit case of $t=1$ and $t=0$, we observed that both CMI values are small even for a small distance $r$. These two distributions show the trivial phase and data phase, respectively. 
At $t_c \approx 0.3 \sim 0.4$, we still observe a significant CMI barrier, which indicates the existence of the phase transition.

    

This phase transition can also be validated via the CMI by testing the efficacy of local denoisers. 
All denoised images with different $T_f$ (forward dynamics duration) and $r$ (width of $B$) are depicted in Fig.\,\ref{fig:mnist}d
We observe that all local denoisers perform poorly when $T_f > 0.4$, as they consistently fail after the phase transition occurs.

\rev{

\section{Liquid-vapor-type Phase Transitions}
\label{apxsec:LVPT}

In this section, we give a toy example of a liquid-vapor-type phase transition.
Recall that a liquid-vapor-type phase transition is that: there are two paths connecting two distributions $P_0$ and $P_1$ such that there exists Markov length divergence in one path, but the Markov length is always finite along the other path (see Fig.\,\ref{fig:lvpt}). 

Suppose a 1D chain $\Lambda$ with $L$ sites, and each site supports a spin taking binary random values from $\{ 0, 1 \}$.
The data $X$, residing on sites, takes values from the sample space $\{ 0, 1 \}^L$.
For notational simplicity, we define the following three distributions: 
$P_{\mathrm{zero}}$ represents that $X$ is deterministically the all zero string, namely $P_{\mathrm{zero}}(x)$ iff $x=00\cdots0$; $P_{\mathrm{uni}}$ is the uniform distribution of all possible bit strings $P_{\mathrm{uni}}(x) = \frac{1}{2^L}$ for all $x \in \{0, 1\}^L$; $P_{\mathrm{even}}$ is the uniform distribution of all bit strings with even parity
\begin{equation}
    P_{\mathrm{even}}(x) =
    \left\{
    \begin{array}{cr}
        \frac{1}{2^{L-1}}, & \text{even number of 1's in }x, \\
        0, & \text{odd number of 1's in }x.
    \end{array}
    \right.
\end{equation}
An important fact of the distribution $P_{\mathrm{even}}$ is that: \textit{it exhibits a long-range CMI}.
To see this, we notice that $P_{\mathrm{even}}$ contains bit strings with even parity. For any bit $A$ and any tripartition $\Lambda = ABC$, it is impossible that $\Pr[X_A|X_B] = \Pr[X_A|X_B, X_C]$, because $X_A$ is determined if $X_B, X_C$ is fixed, but $X_A$ is totally random if the knowledge of $X_C$ is missing.
Therefore, for any tripartition $\Lambda = ABC$, we always have $I(X_A : X_C | X_B) \equiv 1$.
This even parity distribution is a special case of the well-known \textit{strong-to-weak spontaneous symmetry breaking} (SWSSB) states \cite{lee_2025_symmetry, lee_2023_quantum, lessa_2025_strong, kuno_2025_strong}.

To define the evolution dynamics, we introduce the transition matrix $S_p$ describing the single-bit flip with probability $p$, and $D_p$ describing the bit-pair flip with probability $p$:
\begin{equation}
  S_p = \left(\begin{array}{cc}
    1 - p & p\\
    p & 1 - p
  \end{array}\right) \quad \text{and} \quad
  D_p = \left(\begin{array}{cccc}
    1 - p & 0 & 0 & p\\
    0 & 1-p & p & 0 \\
    0 & p & 1 - p & 0 \\
    p & 0 & 0 & 1-p
  \end{array}\right) . \label{eq:rbf}
\end{equation}

Now, we suppose two noise path that transform from $P_0 = P_{\mathrm{zero}}$ to $P_1 = P_{\mathrm{uni}}$:
\begin{eqnarray}
    & P_{\mathrm{zero}} \xrightarrow{\text{independently flip each bit } \left( p = \frac{1}{2} \right)} P_{\mathrm{uni}}, & \label{eq:path1} \\
    & P_{\mathrm{zero}} \xrightarrow{\text{independently flip each nearest-neighbor pair } \left( p = \frac{1}{2} \right)} P_{\mathrm{even}} \xrightarrow{\text{independently flip each bit } \left( p = \frac{1}{2} \right)} P_{\mathrm{uni}}. & \label{eq:path2}
\end{eqnarray}
We remark that any bit-flip process described above can be realized in a time-continuous manner via local master-equation evolution. 
In fact, by taking the generator of $S_{\dt/2} = I + \dt \mathcal{L}_{\mathrm{bit}} + O(\dt^2)$, we obtain a master equation of bit-flip for each bit:
\begin{equation}
  \frac{\dd}{\dd t} \left(\begin{array}{c}
    P_{\mathrm{up}}\\
    P_{\mathrm{down}}
  \end{array}\right) = \frac{1}{2} \left(\begin{array}{cc}
    - 1 & 1\\
    1 & - 1
  \end{array}\right) \left(\begin{array}{c}
    P_{\mathrm{up}}\\
    P_{\mathrm{down}}
  \end{array}\right) = : \mathcal{L}_{\mathrm{bit}} P
    ~\Rightarrow~
  \left(\begin{array}{c}
    P_{\mathrm{up}} (t)\\
    P_{\mathrm{down}} (t)
  \end{array}\right) = \left(\begin{array}{cc}
    \frac{1 + e^{- t}}{2} & \frac{1 - e^{- t}}{2}\\
    \frac{1 - e^{- t}}{2} & \frac{1 + e^{- t}}{2}
  \end{array}\right) \left(\begin{array}{c}
    P_{\mathrm{up}}(0)\\
    P_{\mathrm{down}}(0)
  \end{array}\right) .
\end{equation}
This matches the transition matrix $S_p$ in Eq.\,(\ref{eq:rbf}) for $t = - \ln (1 - 2 p)$, and $p$ is approximately $1/2$ if we choose a large but fixed $t=T$. 
Similarly, the generator of $D_{\dt/2} = I + \dt \mathcal{L}_{\mathrm{pair}} + O(\dt^2)$ for each bit-pair is (and we have $t = - \ln (1 - 2 p)$ as well)
\begin{equation}
  \mathcal{L}_{\mathrm{pair}} = \frac{1}{2} \left(\begin{array}{cccc}
    -1 & 0 & 0 & 1\\
    0 & -1 & 1 & 0 \\
    0 & 1 & -1 & 0 \\
    1 & 0 & 0 & -1
  \end{array}\right) . \label{eq:rbf}
\end{equation}

For the first path Eq.\,(\ref{eq:path1}), the master equation $\partial_t P = \sum^{L}_{i=1} \mathcal{L}_{\mathrm{bit}, i} P$ is $1$-local, where $\mathcal{L}_{\mathrm{bit}, i}$ are $2^L$-by-$2^L$ matrices, characterizing the flip of bit $i$.
Both $P_{\mathrm{zero}}$ and $P_{\mathrm{uni}}$ are product distributions, and all bits are fully independent; thus, CMI is always zero throughout the process Eq.\,(\ref{eq:path1}).

However, for the second path Eq.\,(\ref{eq:path2}): in the first stage, the master equation $\partial_t P = \sum^{L-1}_{i=1} \mathcal{L}_{\mathrm{pair}, \langle i, i+1 \rangle} P$ consists of $2$-local interaction, where $\mathcal{L}_{\mathrm{pair}, \langle i, i+1 \rangle}$ are $2^L$-by-$2^L$ matrices characterizing the flip of the nearest-neighbor bit-pair $\langle i, i+1 \rangle$; and in the second stage, the master equation $\partial_t P = \sum^{L}_{i=1} \mathcal{L}_{\mathrm{bit}, i}  P$ is again $1$-local.
The dynamics are still governed by a local Fokker-Planck equation. However, the intermediate distribution $P_{\mathrm{even}}$ has a long-range CMI, which cannot be locally denoised for any tripartition $\Lambda = ABC$.
This proves that $P_{\mathrm{zero}}$ and $P_{\mathrm{uni}}$ are in the same phase, even if they can be connected through one noise path that crosses the phase boundary.

}

\begin{figure}[t]
    \centering
    \includegraphics[width=0.4\linewidth]{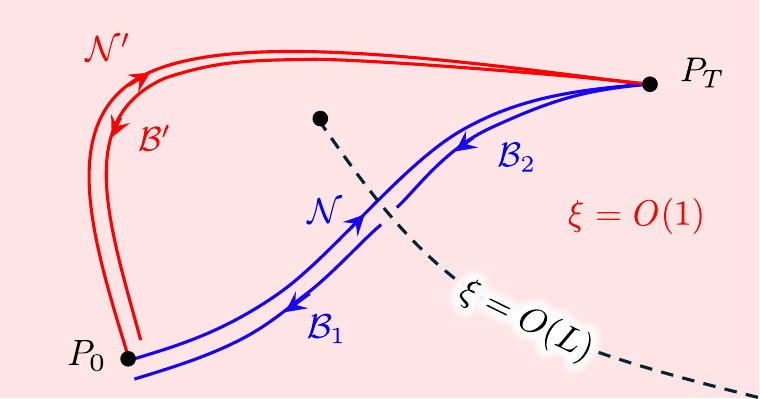
    }
    \caption{Schematic of data distribution phases for a liquid-vapor-type phase transition. There are two paths connecting the two distributions $P_0$ and $P_1$. The Markov length diverges in one path (blue), but the Markov length is always finite along the other path (red).
    The local Fokker-Planck evolution $\mathcal{N}'$ from $P_0$ to $P_T$ can be locally reversed by a $\mathcal{B}'$ if the Markov length remains finite along the forward path. If there is a Markov length divergence as in the case of $\mathcal{N}$, then local denoisers $\mathcal{B}_1$ and $\mathcal{B}_2$ exist on both sides of the phase boundary, but a global denoiser is required at the phase boundary.}
    \label{fig:lvpt}
\end{figure}

\section{Local Quantum Diffusion Models: Continuous-time Petz Maps}
\label{apxsec:cPetz}

For any quantum mixed state $\rho$ and quantum channel $\mathcal{N}$, it is well-known that the perfect recovery from $\mathcal{N}(\rho)$ to $\rho$ can be implemented by the \textit{Petz map} \cite{petz_sufficient_1986}
\begin{equation}
    \mathcal{P}_{\mathcal{N}, \rho}(\sigma) = \rho^{1/2} \mathcal{N}^{\dagger}(\mathcal{N} (\rho)^{-1/2} \sigma \mathcal{N} (\rho)^{-1/2}) \rho^{1/2}. 
\end{equation}
One can verify that $\mathcal{P}_{\mathcal{N},\rho}(\mathcal{N}(\rho)) = \rho$. In this sense, Petz map is regarded as a quantum version of Bayes formula \cite{leifer_towards_2013, khatri_principles_2020}. 
However, unlike in the classical case where the Bayes map is the unique perfect recovery channel, the Petz map is not the only perfect recovery channel. In fact, given any $\theta$, one can introduce an isometric map $\mathcal{U}_{\rho,\theta}(\sigma)=\rho^{\mathi \theta} \sigma \rho^{-\mathi \theta}$ to define a \textit{rotated Petz map}
\begin{equation}
    \mathcal{R}_{\mathcal{N},\rho,\theta} = \mathcal{U}_{\rho,-\theta/2} \circ \mathcal{P}_{\mathcal{N}, \rho} \circ \mathcal{U}_{\mathcal{N}(\rho),\theta/2}.
\end{equation}
It is easy to verify that $\mathcal{R}_{\mathcal{N},\rho, \theta}( \mathcal{N}(\rho) ) = \rho$ are also perfect recovery channels \cite{mark_quantum_2016}. 

However, neither $\mathcal{P}_{\mathcal{N},\rho}$ nor $\mathcal{R}_{\mathcal{N},\rho, \theta}$ is local because $\rho^{1/2}$ is essentially a very global quantity. To construct a local recovery map, it was found that a map called \textit{twirled Petz map}
\begin{equation}
    \mathcal{T}_{\mathcal{N},\rho} : = \int \dd \theta f(\theta) \mathcal{R}_{\mathcal{N},\rho, \theta} \label{apxeq:TPRM}
\end{equation}
can be leveraged to construct a local reversal channel \cite{junge_universal_2018}. Here, $f (\theta) = \pi / (2 \cosh (\pi \theta) + 2)$ is a probability distribution function of angles $\theta$. It was shown that if the quantum channel $\mathcal{N}$ acts only on $A$, then twirled Petz map with local reference state $\rho_{AB}$ yields a recovery error at most $| \mathcal{T}_{\mathcal{N},\rho_{AB}} \circ \mathcal{N} (\rho) - \rho |^2_1 \leq 2 \ln 2 \cdot I(A:B|C)_{\rho}$ \cite{junge_universal_2018, mark_quantum_2016}. 

Given all the essential background of local Lindbladian reversibility in open quantum systems, we will first give the expression of the continuous-time twirled Petz map in SM \ref{apxsec:cPetz}, and then we will prove that the continuous-time twirled Petz map is a quantum generalization of diffusion models in SM \ref{apxsec:classical_limit}.

Let us consider any Lindblad equation $\dot{\rho} = \mathcal{L} (\rho) = \mathcal{D} [a] \rho$ where $ \mathcal{D} [a] \rho = a \rho a^{\dagger} - \frac{1}{2} (a^{\dagger} a \rho + \rho a^{\dagger} a)$ and $\rho = \sum_i P_i \ket{i} \bra{i}$, the continuous time limit of any rotated Petz map $\mathcal{R}_{e^{\dt \mathcal{L}},\rho, \theta} (\sigma)$ must have form of
\begin{equation}
    \dot{\sigma} = - \mathi [ R_{\theta}, \sigma] + \mathcal{D} [b_{\theta}] \sigma,
\end{equation}
where $b_{\theta} = \rho^{(1 - \mathi \theta)/2} a^{\dagger} \rho^{(- 1 + \mathi \theta)/2}$ is the backward jump operator and $R_{\theta}$ is the backward Hamiltonian
\begin{equation}
    R_{\theta} = \mathi \sum_{i, j} \! \frac{2 P_i^{\frac{1 + \mathi \theta}{2}} \!\! P_j^{\frac{1 - \mathi \theta}{2}} \!-\! P_i \!-\! P_j}{2 (P_i \!-\! P_j)} \! \bra{i} b_{\theta}^{\dagger} b_{\theta} \! \ket{j} \! \ket{i} \! \bra{j} + \mathi \sum_{i, j} \! \frac{2 P_i^{\frac{1 - \mathi \theta}{2}} \!\! P_j^{\frac{1 + \mathi \theta}{2}} \!-\! P_i \!-\! P_j}{2 (P_i \!-\! P_j)} \! \bra{i} a^{\dagger} a \ket{j} \ket{i} \! \bra{j}.
\end{equation}

This result can be utilized to construct a quantum version of diffusion models -- \textit{local quantum diffusion models}.
Suppose any jump operator $a$ only acting on $A$, we have the following result:
\begin{theorem}[\textbf{Lindbladian of local quantum denoisers}]
    \label{thm:lqd}
    Suppose a forward quantum diffusion process $\dot{\rho} = \mathcal{L} (\rho) = \mathcal{D} [a] \rho$ (with $a$ only acting on local $A$) and the eigen-decomposition of the reduced density matrix $\rho_{AB, t} = \sum_i P_i \ket{i} \bra{i}$, the continuous time limit of any rotated Petz map $\mathcal{R}_{e^{\dt \mathcal{L}}, \rho_{AB, t}, \theta} (\sigma)$ must have a form of time-dependent Lindbladian
    \begin{equation}
        \dot{\sigma} = - \mathi [ R_{AB, \theta} (t), \sigma] + \mathcal{D} [b_{AB, \theta}(t)] \sigma,
    \end{equation}
    where $b_{AB, \theta} (t) = \rho_{AB,t}^{(1 - \mathi \theta)/2} a^{\dagger} \rho_{AB,t}^{(- 1 + \mathi \theta)/2}$ is the local backward jump operator and $R_{AB, \theta}$ is the local backward Hamiltonian
    \begin{equation}
        R_{AB, \theta} (t) = \mathi \sum_{i, j} \! \frac{2 P_i^{\frac{1 + \mathi \theta}{2}} \!\! P_j^{\frac{1 - \mathi \theta}{2}} \!-\! P_i \!-\! P_j}{2 (P_i \!-\! P_j)} \! \bra{i} b_{AB, \theta}^{\dagger} b_{AB, \theta} \! \ket{j} \! \ket{i} \! \bra{j} + \mathi \sum_{i, j} \! \frac{2 P_i^{\frac{1 - \mathi \theta}{2}} \!\! P_j^{\frac{1 + \mathi \theta}{2}} \!-\! P_i \!-\! P_j}{2 (P_i \!-\! P_j)} \! \bra{i} a^{\dagger} a \ket{j} \ket{i} \! \bra{j}. \label{apxeq:R_theta}
    \end{equation}
\end{theorem}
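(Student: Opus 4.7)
The plan is to expand $\mathcal{R}_{e^{\dt\mathcal{L}},\rho_{AB,t},\theta}(\sigma)$ to first order in $\dt$ and match the generator to $-\mathi[R_{AB,\theta},\sigma]+\mathcal{D}[b_{AB,\theta}]\sigma$. Writing $\alpha=(1-\mathi\theta)/2$ and $\beta=(1+\mathi\theta)/2$ so that $\alpha+\beta=1$ and $\alpha^{*}=\beta$, composing the three ingredients of $\mathcal{R}_{\mathcal{N},\rho_{AB,t},\theta}=\mathcal{U}_{\rho_{AB,t},-\theta/2}\circ\mathcal{P}_{\mathcal{N},\rho_{AB,t}}\circ\mathcal{U}_{\mathcal{N}(\rho_{AB,t}),\theta/2}$ yields the compact form
\begin{equation}
    \mathcal{R}_{\mathcal{N},\rho_{AB,t},\theta}(\sigma)=\rho_{AB,t}^{\alpha}\,\mathcal{N}^{\dagger}\!\bigl(\mathcal{N}(\rho_{AB,t})^{-\alpha}\sigma\,\mathcal{N}(\rho_{AB,t})^{-\beta}\bigr)\,\rho_{AB,t}^{\beta},
\end{equation}
which equals $\sigma$ at $\dt=0$. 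I would then expand $\mathcal{N}^{\dagger}=\mathcal{I}+\dt\,\mathcal{D}^{\dagger}[a]+O(\dt^{2})$ with $\mathcal{D}^{\dagger}[a]X=a^{\dagger}Xa-\tfrac{1}{2}\{a^{\dagger}a,X\}$, and apply first-order matrix perturbation in the eigenbasis $\rho_{AB,t}=\sum_{i}P_{i}\ket{i}\!\bra{i}$ to obtain
\begin{equation}
    \bra{i}\mathcal{N}(\rho_{AB,t})^{-s}\ket{j}=P_{i}^{-s}\delta_{ij}+\dt\,\frac{P_{i}^{-s}-P_{j}^{-s}}{P_{i}-P_{j}}\,\bra{i}\mathcal{D}[a]\rho_{AB,t}\ket{j}+O(\dt^{2}),
\end{equation}
with the degenerate limit $P_{i}=P_{j}$ handled by the derivative $-sP_{i}^{-s-1}$.

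Collecting the $O(\dt)$ contributions, the triple product $\rho_{AB,t}^{\alpha}a^{\dagger}\rho_{AB,t}^{-\alpha}\cdot\sigma\cdot\rho_{AB,t}^{-\beta}a\,\rho_{AB,t}^{\beta}$ is recognized as $b_{AB,\theta}\,\sigma\,b_{AB,\theta}^{\dagger}$ with $b_{AB,\theta}(t)=\rho_{AB,t}^{\alpha}a^{\dagger}\rho_{AB,t}^{-\alpha}$, so adding and subtracting $\tfrac{1}{2}\{b_{AB,\theta}^{\dagger}b_{AB,\theta},\sigma\}$ isolates $\mathcal{D}[b_{AB,\theta}]\sigma$ and leaves a residue of the form $M\sigma+\sigma M^{\dagger}$ that must be shown to equal $-\mathi[R_{AB,\theta},\sigma]$ with $R_{AB,\theta}$ as in Eq.\,(\ref{apxeq:R_theta}). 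To this end I would work entirely in the eigenbasis of $\rho_{AB,t}$ and use two algebraic identities obtained by inserting its eigenresolution: $\bra{i}a\rho_{AB,t}a^{\dagger}\ket{j}=P_{i}^{\beta}P_{j}^{\alpha}\bra{i}b_{AB,\theta}^{\dagger}b_{AB,\theta}\ket{j}$ and $\bra{i}\{a^{\dagger}a,\rho_{AB,t}\}\ket{j}=(P_{i}+P_{j})\bra{i}a^{\dagger}a\ket{j}$. Substituting these into the divided-difference factor $(P_{i}^{-\alpha}-P_{j}^{-\alpha})/(P_{i}-P_{j})$ and invoking $\alpha+\beta=1$, the coefficient of $\bra{i}b_{AB,\theta}^{\dagger}b_{AB,\theta}\ket{j}$ collapses to $(2P_{i}^{\beta}P_{j}^{\alpha}-P_{i}-P_{j})/[2(P_{i}-P_{j})]$ and that of $\bra{i}a^{\dagger}a\ket{j}$ to $(2P_{i}^{\alpha}P_{j}^{\beta}-P_{i}-P_{j})/[2(P_{i}-P_{j})]$, reproducing $\bra{i}{-\mathi R_{AB,\theta}}\ket{j}$ exactly.

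Hermiticity of $R_{AB,\theta}$, and equivalently the consistency of the left- and right-acting residues, follows from the complex-conjugation rule $\alpha^{*}=\beta$ applied to both the perturbation formula and the definitions of $b_{AB,\theta}$ and $b_{AB,\theta}^{\dagger}$. The locality claim is then automatic: because $a$ acts only on $A$ and the reference state $\rho_{AB,t}$ lives on $A\cup B$, every factor entering $b_{AB,\theta}(t)$ and $R_{AB,\theta}(t)$ is supported on $A\cup B$. The main technical obstacle is the bookkeeping of the matrix-function derivatives with complex exponents $\alpha,\beta$, including the degenerate case $P_{i}=P_{j}$, together with the algebraic simplification that collapses the divided-difference kernels into the symmetric form of Eq.\,(\ref{apxeq:R_theta}); this simplification is only possible because the constraint $\alpha+\beta=1$ ties the four exponents $\pm\alpha,\pm\beta$ together.
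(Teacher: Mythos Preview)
Your proposal is correct and follows essentially the same route as the paper: expand the rotated Petz map to first order in $\dt$, compute the derivative of $\mathcal{N}(\rho_{AB,t})^{-s}$ via divided differences in the eigenbasis, identify the dissipator $\mathcal{D}[b_{AB,\theta}]$ from the $a^{\dagger}(\cdot)a$ term, and reduce the residue to $-\mathi[R_{AB,\theta},\sigma]$ using the identities $\bra{i}a\rho_{AB,t}a^{\dagger}\ket{j}=P_i^{\beta}P_j^{\alpha}\bra{i}b_{AB,\theta}^{\dagger}b_{AB,\theta}\ket{j}$ and $\alpha+\beta=1$. The only cosmetic difference is that the paper reaches the divided-difference formula indirectly, by differentiating the constraint $\chi_{\theta}\chi_{\theta}^{\dagger}=\mathcal{N}(\rho)$ with $\chi_{\theta}=\mathcal{N}(\rho)^{(1-\mathi\theta)/2}$ and packaging the result as an operator $L_{\rho^{1/2},\theta}$, whereas you invoke the standard matrix-function perturbation formula directly; the content is the same.
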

All the remaining part of the SM \ref{apxsec:cPetz} is about the derivation of the result in Theorem \ref{thm:lqd}. 

We note that quantum versions of diffusion models have been previously studied in the literature, e.g.\,the quantum denoising diffusion probabilistic models (QuDDPM) \cite{zhang_generative_2024}. We also refer to a closely related work that also generalizes diffusion models in the quantum regime through the Petz map \cite{liu2025measurement}.

\subsection{Continuous-time Petz map}

Suppose a quantum channel with infinitesimal time $\epsilon \to 0$
\begin{equation}
    \mathcal{N} (\rho) = e^{\epsilon \mathcal{L}} \rho.
\end{equation}
We can introduce the \textit{Petz map}:
\begin{equation}
    \mathcal{P}_{\mathcal{N}, \rho} (\sigma) = \rho^{\frac{1}{2}} \mathcal{N}^{\dagger} \left( \mathcal{N} (\rho)^{- \frac{1}{2}} \sigma \mathcal{N} (\rho)^{- \frac{1}{2}} \right) \rho^{\frac{1}{2}} .
\end{equation}
From now on, without loss of generality, we assume $\rho$ has eigen-decomposition $\rho \ket{i} = P_i \ket{i}$ with $P_i > 0$ for all $i$. This appendix sub-section aims to compute $\left. \frac{\partial}{\partial \epsilon} (\mathcal{P}_{\mathcal{N}, \rho} (\sigma)) \right|_{\epsilon = 0}$ for the given $\mathcal{N}= e^{\epsilon \mathcal{L}}$.

\subsubsection{Derivative of $\mathcal{N} (\rho)^{- \frac{1}{2}}$}

Now we let $\chi =\mathcal{N} (\rho)^{\frac{1}{2}}$, namely $\chi^2=\mathcal{N} (\rho)$. We notice that $ \chi |_{\epsilon = 0} =
\rho^{\frac{1}{2}}$ and
\begin{equation}
    \left. \frac{\partial}{\partial \epsilon} (\chi^{- 1}) \right|_{\epsilon = 0} = - ( \chi |_{\epsilon = 0})^{- 1} \left( \left. \frac{\partial \chi}{\partial \epsilon} \right|_{\epsilon = 0} \right) ( \chi |_{\epsilon = 0})^{- 1} = - \rho^{- \frac{1}{2}} \left( \left. \frac{\partial \chi}{\partial \epsilon} \right|_{\epsilon = 0} \right) \rho^{- \frac{1}{2}} .
\end{equation}
Then we only need to compute $\left. \frac{\partial \chi}{\partial \epsilon}\right|_{\epsilon = 0}$. Since $\chi^2 =\mathcal{N} (\rho)$, we have
\begin{equation}
    \chi \frac{\partial \chi}{\partial \epsilon} + \frac{\partial \chi}{\partial \epsilon} \chi = \frac{\partial}{\partial \epsilon} (\mathcal{N} (\rho)) .
\end{equation}
Here comes to the symmetric division at $\epsilon = 0$: the relation $\frac{1}{2} \left\{ \rho^{\frac{1}{2}}, \left. \frac{\partial \chi}{\partial \epsilon} \right|_{\epsilon = 0} \right\} = \frac{1}{2} \left. \frac{\partial}{\partial \epsilon} (\mathcal{N} (\rho)) \right|_{\epsilon = 0}$ implies
\begin{equation}
    \left. \frac{\partial \chi}{\partial \epsilon} \right|_{\epsilon = 0} = L_{\rho^{1 / 2}} \left( \frac{1}{2} \mathcal{L} (\rho) \right),
\end{equation}
where $Z = L_X (Y)$ is the well-known \textit{symmetric division} which uniquely satisfies $\frac{1}{2} \{ X, Z \} = Y$. More explicitly, for any $X$ with eigen-decomposition $X \ket{i} = \lambda_i \ket{i}$,
\begin{equation}
    L_X (Y) := \sum_{i, j} \frac{2}{\lambda_i + \lambda_j} \bra{i} Y \ket{j} \ket{i} \bra{j} .
\end{equation}
Then, after obtaining $\left. \frac{\partial \chi}{\partial \epsilon} \right|_{\epsilon = 0}$, we also immediately have
\begin{equation}
    \left. \frac{\partial}{\partial \epsilon} \left( \mathcal{N} (\rho)^{- \frac{1}{2}} \right) \right|_{\epsilon = 0} = - \rho^{- \frac{1}{2}} L_{\rho^{1 / 2}} \left( \frac{1}{2} \mathcal{L} (\rho) \right) \rho^{- \frac{1}{2}},
\end{equation}
or equivalently,
\begin{equation}
    \mathcal{N} (\rho)^{- \frac{1}{2}} = \rho^{- \frac{1}{2}} - \epsilon \rho^{- \frac{1}{2}} L_{\rho^{1 / 2}} \left( \frac{1}{2} \mathcal{L} (\rho) \right) \rho^{- \frac{1}{2}} + O (\epsilon^2) .
\end{equation}

\subsubsection{Derivative of $\mathcal{N}^{\dagger}$}

This part is easy. For any operator $\tau$, we have
\begin{equation}
    \left. \frac{\partial}{\partial \epsilon} (\mathcal{N}^{\dagger} (\tau)) \right|_{\epsilon = 0} =\mathcal{L}^{\dagger} (\tau) = a^{\dagger} \tau a - \frac{1}{2} (a^{\dagger} a \tau + \tau a^{\dagger} a) .
\end{equation}

\subsubsection{Derivative of $\mathcal{P}_{\mathcal{N}, \rho}$}

Now we can expand $\mathcal{P}_{\mathcal{N}, \rho} (\sigma)$ into
\begin{align}
    \mathcal{P}_{\mathcal{N}, \rho} (\sigma) =~& \rho^{\frac{1}{2}} \left( \mathcal{N} (\rho)^{- \frac{1}{2}} \sigma \mathcal{N} (\rho)^{- \frac{1}{2}} + \epsilon \mathcal{L}^{\dagger} \left( \mathcal{N} (\rho)^{- \frac{1}{2}} \sigma \mathcal{N} (\rho)^{- \frac{1}{2}} \right) + O (\epsilon^2) \right) \rho^{\frac{1}{2}} \nonumber\\
    =~& \rho^{\frac{1}{2}} \left( \rho^{- \frac{1}{2}} - \epsilon \rho^{- \frac{1}{2}} L_{\rho^{1 / 2}} \left( \frac{1}{2} \mathcal{L} (\rho) \right) \rho^{- \frac{1}{2}} + O (\epsilon^2) \right) \sigma \left( \rho^{- \frac{1}{2}} - \epsilon \rho^{- \frac{1}{2}} L_{\rho^{1 / 2}} \left( \frac{1}{2} \mathcal{L} (\rho) \right) \rho^{- \frac{1}{2}} + O (\epsilon^2) \right) \rho^{\frac{1}{2}} \nonumber\\
    & + \epsilon \rho^{\frac{1}{2}} \mathcal{L}^{\dagger} \left( \mathcal{N} (\rho)^{- \frac{1}{2}} \sigma \mathcal{N} (\rho)^{- \frac{1}{2}} \right) \rho^{\frac{1}{2}} + O (\epsilon^2) \nonumber\\
    =~& \sigma + \epsilon \left( - L_{\rho^{1 / 2}} \left( \frac{1}{2} \mathcal{L} (\rho) \right) \rho^{- \frac{1}{2}} \sigma - \sigma \rho^{- \frac{1}{2}} L_{\rho^{1 / 2}} \left( \frac{1}{2} \mathcal{L} (\rho) \right) + \rho^{\frac{1}{2}} \mathcal{L}^{\dagger} \left( \rho^{- \frac{1}{2}} \sigma \rho^{- \frac{1}{2}} \right) \rho^{\frac{1}{2}} \right) + O (\epsilon^2) . 
\end{align}
Therefore,
\begin{equation}
    \left. \frac{\partial}{\partial \epsilon} (\mathcal{P}_{\mathcal{N}, \rho} (\sigma)) \right|_{\epsilon = 0} = - L_{\rho^{1 / 2}} \left( \frac{1}{2} \mathcal{L} (\rho) \right) \rho^{- \frac{1}{2}} \sigma - \sigma \rho^{- \frac{1}{2}} L_{\rho^{1 / 2}} \left( \frac{1}{2} \mathcal{L} (\rho) \right) + \rho^{\frac{1}{2}} \mathcal{L}^{\dagger} \left( \rho^{- \frac{1}{2}} \sigma \rho^{- \frac{1}{2}} \right) \rho^{\frac{1}{2}} .
\end{equation}
Let us do some further simplification, by introducing $b = \rho^{\frac{1}{2}} a^{\dagger} \rho^{- \frac{1}{2}}$:
\begin{align}
    \rho^{\frac{1}{2}} \mathcal{L}^{\dagger} \left( \rho^{- \frac{1}{2}} \sigma \rho^{- \frac{1}{2}} \right) \rho^{\frac{1}{2}} & = \rho^{\frac{1}{2}} \left( a^{\dagger} \rho^{- \frac{1}{2}} \sigma \rho^{- \frac{1}{2}} a - \frac{1}{2} \left( a^{\dagger} a \rho^{- \frac{1}{2}} \sigma \rho^{- \frac{1}{2}} + \rho^{- \frac{1}{2}} \sigma \rho^{- \frac{1}{2}} a^{\dagger} a \right) \right) \rho^{\frac{1}{2}} \nonumber\\
    & = \rho^{\frac{1}{2}} a^{\dagger} \rho^{- \frac{1}{2}} \sigma \rho^{- \frac{1}{2}} a \rho^{\frac{1}{2}} - \frac{1}{2} \left( \rho^{\frac{1}{2}} a^{\dagger} a \rho^{- \frac{1}{2}} \sigma + \sigma \rho^{- \frac{1}{2}} a^{\dagger} a \rho^{\frac{1}{2}} \right) \nonumber\\
    & = b \sigma b^{\dagger} - \frac{1}{2} \left( \rho^{\frac{1}{2}} a^{\dagger} a \rho^{- \frac{1}{2}} \sigma + \sigma \rho^{- \frac{1}{2}} a^{\dagger} a \rho^{\frac{1}{2}} \right) . 
\end{align}
We define a Hermitian $R$ satisfying
\begin{align}
    - \mathi R =~& - L_{\rho^{1 / 2}} \left( \frac{1}{2} \mathcal{L} (\rho) \right) \rho^{- \frac{1}{2}} - \frac{1}{2} \rho^{\frac{1}{2}} a^{\dagger} a \rho^{- \frac{1}{2}} + \frac{1}{2} b^{\dagger} b \nonumber\\
    =~& - L_{\rho^{1 / 2}} \left( \frac{1}{2} \mathcal{L} (\rho) \rho^{- \frac{1}{2}} \right) - \frac{1}{2} \rho^{\frac{1}{2}} a^{\dagger} a \rho^{- \frac{1}{2}} + \frac{1}{2} \rho^{- \frac{1}{2}} a \rho a^{\dagger} \rho^{- \frac{1}{2}} \nonumber\\
    =~& - L_{\rho^{1 / 2}} \left( \frac{1}{2} \mathcal{L} (\rho) \rho^{- \frac{1}{2}} \right) + L_{\rho^{1 / 2}} \left( \frac{1}{2} \left\{ \rho^{\frac{1}{2}}, - \frac{1}{2} \rho^{\frac{1}{2}} a^{\dagger} a \rho^{- \frac{1}{2}} + \frac{1}{2} \rho^{- \frac{1}{2}} a \rho a^{\dagger} \rho^{- \frac{1}{2}} \right\} \right) \nonumber\\
    =~& L_{\rho^{1 / 2}} \left( - \frac{1}{2} a \rho a^{\dagger} \rho^{- \frac{1}{2}} + \frac{1}{4} a^{\dagger} a \rho^{\frac{1}{2}} + \frac{1}{4} \rho a^{\dagger} a \rho^{- \frac{1}{2}} - \frac{1}{4} \rho a^{\dagger} a \rho^{- \frac{1}{2}} + \frac{1}{4} a \rho a^{\dagger} \rho^{- \frac{1}{2}} - \frac{1}{4} \rho^{\frac{1}{2}} a^{\dagger} a + \frac{1}{4} \rho^{- \frac{1}{2}} a \rho a^{\dagger} \right) \nonumber\\
    =~& L_{\rho^{1 / 2}} \left( - \frac{1}{4} a \rho a^{\dagger} \rho^{- \frac{1}{2}} + \frac{1}{4} a^{\dagger} a \rho^{\frac{1}{2}} - \frac{1}{4} \rho^{\frac{1}{2}} a^{\dagger} a + \frac{1}{4} \rho^{- \frac{1}{2}} a \rho a^{\dagger} \right) \nonumber\\
    =~& \frac{1}{4} L_{\rho^{1 / 2}} \left( \mathi \left[ \rho^{\frac{1}{2}}, a^{\dagger} a + \rho^{- \frac{1}{2}} a \rho a^{\dagger} \rho^{- \frac{1}{2}} \right] \right),
\end{align}
or more explicitly,
\begin{equation}
    R = - \frac{\mathi}{2} \sum_{i, j} \frac{\sqrt{P_i} - \sqrt{P_j}}{\sqrt{P_i} + \sqrt{P_j}} \bra{i} a^{\dagger} a + b^{\dagger} b \ket{j} \ket{i} \bra{j} .
\end{equation}
Therefore,
\begin{align}
    \left. \frac{\partial}{\partial \epsilon} (\mathcal{P}_{\mathcal{N}, \rho} (\sigma)) \right|_{\epsilon = 0} & = - \mathi R \sigma + \mathi \sigma R + b \sigma b^{\dagger} - \frac{1}{2} b^{\dagger} b \sigma - \frac{1}{2} \sigma b^{\dagger} b \nonumber\\
    & = - \mathi [R, \sigma] +\mathcal{D} [b] \sigma . 
\end{align}
This derivation also shows that the jump operator of Petz map must be $b = \rho^{\frac{1}{2}} a^{\dagger} \rho^{- \frac{1}{2}}$.

\subsection{Continuous-time twirled Petz map}

The \textit{twirled Petz map} is defined as
\begin{equation}
    \mathcal{T}_{\mathcal{N}, \rho} (\sigma) = \int_{- \infty}^{\infty} f (\theta) \rho^{\frac{1 - \mathi \theta}{2}} \mathcal{N}^{\dagger} \left[ \mathcal{N} (\rho)^{\frac{- 1 + \mathi \theta}{2}} \sigma \mathcal{N} (\rho)^{\frac{- 1 - \mathi \theta}{2}} \right] \rho^{\frac{1 + \mathi \theta}{2}} ,
\end{equation}
where $f (\theta) = \frac{\pi}{2 (\cosh (\pi \theta) + 1)}$. Also, we let $\left. \frac{\partial}{\partial \epsilon} (\mathcal{N} (\rho)) \right|_{\epsilon = 0} =\mathcal{L} (\rho) =\mathcal{D} [a] \rho = a \rho a^{\dagger} - \frac{1}{2} (a^{\dagger} a \rho + \rho a^{\dagger} a)$. We can re-write $\mathcal{T}_{\mathcal{N}, \rho}$ into
\begin{equation}
    \mathcal{T}_{\mathcal{N}, \rho} (\sigma) = \int_{- \infty}^{\infty} \dd \theta \, f (\theta) \mathcal{R}_{\mathcal{N}, \rho, \theta} (\sigma),
\end{equation}
where $\mathcal{R}_{\mathcal{N}, \rho} (\sigma) = \rho^{\frac{1 - \mathi \theta}{2}} \mathcal{N}^{\dagger} \left[ \mathcal{N} (\rho)^{\frac{- 1 + \mathi \theta}{2}} \sigma \mathcal{N} (\rho)^{\frac{- 1 - \mathi \theta}{2}} \right] \rho^{\frac{1 + \mathi \theta}{2}}$ is called the \textit{rotated Petz map}. This appendix sub-section aims to compute $\left. \frac{\partial}{\partial \epsilon} (\mathcal{T}_{\mathcal{N}, \rho} (\sigma)) \right|_{\epsilon = 0}$ for the given $\mathcal{N}= e^{\epsilon \mathcal{L}}$.

\subsubsection{Derivative of $\mathcal{N} (\rho)^{\frac{- 1 + \mathi \theta}{2}}$}

Now we let $\chi_{\theta} =\mathcal{N} (\rho)^{\frac{1 - \mathi \theta}{2}}$, namely $\chi_{\theta} \chi^{\dagger}_{\theta} =\mathcal{N} (\rho)$. We notice that $ \chi_{\theta} |_{\epsilon = 0} = \rho^{\frac{1 - \mathi \theta}{2}}$ and
\begin{equation}
    \left. \frac{\partial}{\partial \epsilon} (\chi_{\theta}^{- 1}) \right|_{\epsilon = 0} = - ( \chi_{\theta} |_{\epsilon = 0})^{- 1} \left( \left. \frac{\partial \chi_{\theta}}{\partial \epsilon} \right|_{\epsilon = 0} \right) ( \chi_{\theta} |_{\epsilon = 0})^{- 1} = - \rho^{\frac{- 1 + \mathi \theta}{2}} \left( \left. \frac{\partial \chi_{\theta}}{\partial \epsilon} \right|_{\epsilon = 0} \right) \rho^{\frac{- 1 + \mathi \theta}{2}} .
\end{equation}
Then we only need to compute $\kappa_{\theta} = \left. \frac{\partial \chi_{\theta}}{\partial \epsilon} \right|_{\epsilon = 0}$. Since $\chi_{\theta} \chi^{\dagger}_{\theta} = \chi^{\dagger}_{\theta} \chi_{\theta} =\mathcal{N} (\rho)$, we have
\begin{eqnarray}
    \chi_{\theta} \frac{\partial \chi^{\dagger}_{\theta}}{\partial \epsilon} + \frac{\partial \chi_{\theta}}{\partial \epsilon} \chi^{\dagger}_{\theta} & = \frac{\partial}{\partial \epsilon} (\mathcal{N} (\rho)), \\
    \chi^{\dagger}_{\theta} \frac{\partial \chi_{\theta}}{\partial \epsilon} + \frac{\partial \chi^{\dagger}_{\theta}}{\partial \epsilon} \chi_{\theta} & = \frac{\partial}{\partial \epsilon} (\mathcal{N} (\rho)) . 
\end{eqnarray}
Let $\epsilon = 0$, we get
\begin{align}
    \rho^{\frac{1 - \mathi \theta}{2}} \kappa_{\theta}^{\dagger} + \kappa_{\theta} \rho^{\frac{1 + \mathi \theta}{2}} & = \mathcal{L} (\rho), \\ 
    \rho^{\frac{1 + \mathi \theta}{2}} \kappa_{\theta} + \kappa_{\theta}^{\dagger} \rho^{\frac{1 - \mathi \theta}{2}} & = \mathcal{L} (\rho). 
\end{align}
For $\rho = \sum P_i \ket{i} \bra{i}$, we have (notice that $\bra{i} \rho^{\frac{1 - \mathi \theta}{2}} = P_i^{\frac{1 - \mathi \theta}{2}} \bra{i}$)
\begin{align}
    P_i^{\frac{1 - \mathi \theta}{2}} \bra{i} \kappa_{\theta}^{\dagger} \ket{j} + P_j^{\frac{1 + \mathi \theta}{2}} \bra{i} \kappa_{\theta} \ket{j} & = \bra{i} \mathcal{L} (\rho) \ket{j}, \\
    P_i^{\frac{1 + \mathi \theta}{2}} \bra{i} \kappa_{\theta} \ket{j} + P_j^{\frac{1 - \mathi \theta}{2}} \bra{i} \kappa_{\theta}^{\dagger} \ket{j}  & = \bra{i} \mathcal{L} (\rho) \ket{j} . 
\end{align}
The solution is
\begin{equation}
    \bra{i} \kappa_{\theta} \ket{j} = \frac{P_i^{\frac{1 - \mathi \theta}{2}} - P_j^{\frac{1 - \mathi \theta}{2}}}{P_i - P_j} \bra{i} \mathcal{L} (\rho) \ket{j} .
\end{equation}

Let us define
\begin{equation}
    L_{\rho^{1 / 2}, \theta} (X) = \sum_{i, j} \frac{P_i^{\frac{1 - \mathi \theta}{2}} - P_j^{\frac{1 - \mathi \theta}{2}}}{P_i - P_j} \bra{i} X \ket{j} \ket{i} \bra{j},
\end{equation}
such that $\kappa_{\theta} = L_{\rho^{1 / 2}, \theta} (\mathcal{L} (\rho))$. Finally,
\begin{equation}
    \left. \frac{\partial}{\partial \epsilon} \left( \mathcal{N} (\rho)^{\frac{- 1 \pm \mathi \theta}{2}} \right) \right|_{\epsilon = 0} = - L_{\rho^{1 / 2}, \pm \theta} \left( \rho^{\frac{- 1 \pm \mathi \theta}{2}} \mathcal{L} (\rho) \rho^{\frac{- 1 \pm \mathi \theta}{2}} \right),
\end{equation}
or equivalently,
\begin{equation}
    \mathcal{N} (\rho)^{\frac{- 1 \pm \mathi \theta}{2}} = \rho^{\frac{- 1 \pm \mathi \theta}{2}} - \epsilon L_{\rho^{1 / 2}, \pm \theta} \left( \rho^{\frac{- 1 \pm \mathi \theta}{2}} \mathcal{L} (\rho) \rho^{\frac{- 1 \pm \mathi \theta}{2}} \right) + O (\epsilon^2) .
\end{equation}

\subsubsection{Derivative of $\mathcal{T}_{\mathcal{N}, \rho}$}

Let $\mathcal{T}_{\mathcal{N}, \rho} (\sigma) = \int_{- \infty}^{\infty} \dd \theta \, f (\theta) \mathcal{R}_{\mathcal{N}, \rho, \theta} (\sigma)$, where $\mathcal{R}_{\mathcal{N}, \rho} (\sigma)$ is the rotated Petz map,
\begin{align}
    & \mathcal{R}_{\mathcal{N}, \rho, \theta} (\sigma) \nonumber\\
    =~& \rho^{\frac{1 - \mathi \theta}{2}} \left( \mathcal{N} (\rho)^{\frac{- 1 + \mathi \theta}{2}} \sigma \mathcal{N} (\rho)^{\frac{- 1 - \mathi \theta}{2}} + \epsilon \mathcal{L}^{\dagger} \left( \mathcal{N} (\rho)^{\frac{- 1 + \mathi \theta}{2}} \sigma \mathcal{N} (\rho)^{\frac{- 1 - \mathi \theta}{2}} \right) + O (\epsilon^2) \right) \rho^{\frac{1 + \mathi \theta}{2}} \nonumber\\
    =~& \rho^{\frac{1 - \mathi \theta}{2}} \left( \rho^{\frac{- 1 + \mathi \theta}{2}} - \epsilon L_{\rho^{1 / 2}, \theta} \left( \rho^{\frac{- 1 + \mathi \theta}{2}} \mathcal{L} (\rho) \rho^{\frac{- 1 + \mathi \theta}{2}} \right) + O (\epsilon^2) \right) \sigma \left( \rho^{\frac{- 1 - \mathi \theta}{2}} - \epsilon L_{\rho^{1 / 2}, - \theta} \left( \rho^{\frac{- 1 - \mathi \theta}{2}} \mathcal{L} (\rho) \rho^{\frac{- 1 - \mathi \theta}{2}} \right) + O (\epsilon^2) \right) \rho^{\frac{1 + \mathi \theta}{2}} \nonumber\\
    & + \epsilon \rho^{\frac{1 - \mathi \theta}{2}} \mathcal{L}^{\dagger} \left( \mathcal{N} (\rho)^{\frac{- 1 + \mathi \theta}{2}} \sigma \mathcal{N} (\rho)^{\frac{- 1 - \mathi \theta}{2}} \right) \rho^{\frac{1 + \mathi \theta}{2}} + O (\epsilon^2) \nonumber\\
    =~& \sigma + \epsilon \left( - L_{\rho^{1 / 2}, \theta} \left( \mathcal{L} (\rho) \rho^{\frac{- 1 + \mathi \theta}{2}} \right) \sigma - \sigma L_{\rho^{1 / 2}, - \theta} \left( \rho^{\frac{- 1 - \mathi \theta}{2}} \mathcal{L} (\rho) \right) + \rho^{\frac{1 - \mathi \theta}{2}} \mathcal{L}^{\dagger} \left( \rho^{\frac{- 1 + \mathi \theta}{2}} \sigma \rho^{\frac{- 1 - \mathi \theta}{2}} \right) \rho^{\frac{1 + \mathi \theta}{2}} \right) + O (\epsilon^2) . 
\end{align}
Let us do some further simplification, by introducing $b_{\theta} = \rho^{\frac{1 - \mathi \theta}{2}} a^{\dagger} \rho^{\frac{- 1 + \mathi \theta}{2}}$ and $b^{\dagger}_{\theta} = \rho^{\frac{- 1 - \mathi \theta}{2}} a \rho^{\frac{1 + \mathi \theta}{2}}$:
\begin{align}
    \rho^{\frac{1 - \mathi \theta}{2}} \mathcal{L}^{\dagger} \left( \rho^{\frac{- 1 + \mathi \theta}{2}} \sigma \rho^{\frac{- 1 - \mathi \theta}{2}} \right) \rho^{\frac{1 + \mathi \theta}{2}} & = \rho^{\frac{1 - \mathi \theta}{2}} \left( a^{\dagger} \rho^{\frac{- 1 + \mathi \theta}{2}} \sigma \rho^{\frac{- 1 - \mathi \theta}{2}} a - \frac{1}{2} \left( a^{\dagger} a \rho^{\frac{- 1 + \mathi \theta}{2}} \sigma \rho^{\frac{- 1 - \mathi \theta}{2}} + \rho^{\frac{- 1 + \mathi \theta}{2}} \sigma \rho^{\frac{- 1 - \mathi \theta}{2}} a^{\dagger} a \right) \right) \rho^{\frac{1 + \mathi \theta}{2}} \nonumber\\
    & = \rho^{\frac{1 - \mathi \theta}{2}} a^{\dagger} \rho^{\frac{- 1 + \mathi \theta}{2}} \sigma \rho^{\frac{- 1 - \mathi \theta}{2}} a \rho^{\frac{1 + \mathi \theta}{2}} - \frac{1}{2} \left( \rho^{\frac{1 - \mathi \theta}{2}} a^{\dagger} a \rho^{\frac{- 1 + \mathi \theta}{2}} \sigma + \sigma \rho^{\frac{- 1 - \mathi \theta}{2}} a^{\dagger} a \rho^{\frac{1 + \mathi \theta}{2}} \right) \nonumber\\
    & = b_{\theta} \sigma b_{\theta}^{\dagger} - \frac{1}{2} \left( \rho^{\frac{1 - \mathi \theta}{2}} a^{\dagger} a \rho^{\frac{- 1 + \mathi \theta}{2}} \sigma + \sigma \rho^{\frac{- 1 - \mathi \theta}{2}} a^{\dagger} a \rho^{\frac{1 + \mathi \theta}{2}} \right) . 
\end{align}
We define a Hermitian $R_{\theta}$ satisfying
\begin{align}
    R_{\theta} =~& - \mathi L_{\rho^{1 / 2}, \theta} \left( \mathcal{L} (\rho) \rho^{\frac{- 1 + \mathi \theta}{2}} \right) - \frac{\mathi}{2} \rho^{\frac{1 - \mathi \theta}{2}} a^{\dagger} a \rho^{\frac{- 1 + \mathi \theta}{2}} + \frac{\mathi}{2} b_{\theta}^{\dagger} b_{\theta} \nonumber\\
    =~& - \mathi L_{\rho^{1 / 2}, \theta} \left( \mathcal{L} (\rho) \rho^{\frac{- 1 + \mathi \theta}{2}} \right) - \frac{\mathi}{2} \rho^{\frac{1 - \mathi \theta}{2}} a^{\dagger} a \rho^{\frac{- 1 + \mathi \theta}{2}} + \frac{\mathi}{2} \rho^{\frac{- 1 - \mathi \theta}{2}} a \rho a^{\dagger} \rho^{\frac{- 1 + \mathi \theta}{2}} \nonumber\\
    =~& - \mathi \left( \sum_{i, j} \frac{P_i^{\frac{1 - \mathi \theta}{2}} P_j^{\frac{- 1 + \mathi \theta}{2}} - 1}{P_i - P_j} \bra{i} \mathcal{L} (\rho) \ket{j} \ket{i} \bra{j} \right) - \frac{\mathi}{2} \rho^{\frac{1 - \mathi \theta}{2}} a^{\dagger} a \rho^{\frac{- 1 + \mathi \theta}{2}} + \frac{\mathi}{2} \rho^{\frac{- 1 - \mathi \theta}{2}} a \rho a^{\dagger} \rho^{\frac{- 1 + \mathi \theta}{2}} \nonumber\\
    =~& \mathi \sum_{i, j} \left( - \frac{P_i^{\frac{1 - \mathi \theta}{2}} P_j^{\frac{- 1 + \mathi \theta}{2}} - 1}{P_i - P_j} + \frac{1}{2} P_i^{\frac{- 1 - \mathi \theta}{2}} P_j^{\frac{- 1 + \mathi \theta}{2}} \right) \bra{i} a \rho a^{\dagger} \ket{j} \ket{i} \bra{j} \nonumber\\
    & + \mathi \sum_{i, j} \left( \frac{P_i^{\frac{1 - \mathi \theta}{2}} P_j^{\frac{- 1 + \mathi \theta}{2}} - 1}{P_i - P_j} \cdot \frac{P_i + P_j}{2} - \frac{1}{2} P_i^{\frac{1 - \mathi \theta}{2}} P_j^{\frac{- 1 + \mathi \theta}{2}} \right) \bra{i} a^{\dagger} a \ket{j} \ket{i} \bra{j} \nonumber\\
    =~& \mathi \sum_{i, j} \frac{2 - P_i^{\frac{1 - \mathi \theta}{2}} P_j^{\frac{- 1 + \mathi \theta}{2}} - P_i^{\frac{- 1 - \mathi \theta}{2}} P_j^{\frac{1 + \mathi \theta}{2}}}{2 (P_i - P_j)} \bra{i} a \rho a^{\dagger} \ket{j} \ket{i} \bra{j} + \mathi \sum_{i, j} \frac{2 P_i^{\frac{1 - \mathi \theta}{2}} P_j^{\frac{1 + \mathi \theta}{2}} - P_i - P_j}{2 (P_i - P_j)} \bra{i} a^{\dagger} a \ket{j} \ket{i} \bra{j} \nonumber\\
    =~& \mathi \sum_{i, j} \frac{2 P_i^{\frac{1 + \mathi \theta}{2}} P_j^{\frac{1 - \mathi \theta}{2}} - P_i - P_j}{2 (P_i - P_j)} \bra{i} b_{\theta}^{\dagger} b_{\theta} \ket{j} \ket{i} \bra{j} + \mathi \sum_{i, j} \frac{2 P_i^{\frac{1 - \mathi \theta}{2}} P_j^{\frac{1 + \mathi \theta}{2}} - P_i - P_j}{2 (P_i - P_j)} \bra{i} a^{\dagger} a \ket{j} \ket{i} \bra{j} . 
\end{align}
Therefore, for $\mathcal{R}_{\mathcal{N}, \rho} (\sigma) = \int_{- \infty}^{\infty} \dd \theta \, f (\theta) \mathcal{R}_{\mathcal{N}, \rho, \theta} (\sigma)$ with $f (\theta) = \frac{\pi}{2 (\cosh (\pi \theta) + 1)}$, and $b_{\theta} = \rho^{\frac{1 - \mathi \theta}{2}} a^{\dagger} \rho^{\frac{- 1 + \mathi \theta}{2}}$,
\begin{align}
    \left. \frac{\partial}{\partial \epsilon} (\mathcal{R}_{\mathcal{N}, \rho, \theta} (\sigma)) \right|_{\epsilon = 0} & = - \mathi R_{\theta} \sigma + \mathi \sigma R_{\theta} + b_{\theta} \sigma b_{\theta}^{\dagger} - \frac{1}{2} b_{\theta}^{\dagger} b_{\theta} \sigma - \frac{1}{2} \sigma b_{\theta}^{\dagger} b_{\theta} \nonumber\\
    & = - \mathi [R_{\theta}, \sigma] +\mathcal{D} [b_{\theta}] \sigma, 
\end{align}
and
\begin{equation}
    \left. \frac{\partial}{\partial \epsilon} (\mathcal{T}_{\mathcal{N}, \rho} (\sigma)) \right|_{\epsilon = 0} = - \mathi \left[ \int_{- \infty}^{\infty} \dd \theta \, f (\theta) R_{\theta}, \sigma \right] + \int_{- \infty}^{\infty} \dd \theta \, f (\theta) \mathcal{D} [b_{\theta}] \sigma .
\end{equation}


\section{Decoherence Limit of Petz Map and Twirled Petz Map are Diffusion Models}
\label{apxsec:classical_limit}

The most natural way of thinking of the probability distribution as the classical counterpart of a quantum state is through the Wigner distribution. Consider a state with Wigner distribution $W (x, p) = \frac{1}{2 \pi} P (x)$. Its corresponding density matrix is
\begin{equation}
  \hat{\rho} = \int \dd x P (x) \ket{x} \bra{x} .
\end{equation}
On the other hand, it is well known that the $\mathcal{D} [\hat{a}] \hat{\rho} =\mathcal{D} [\hat{p}] \hat{\rho}$, using the momentum operator as the jump operator induces transformation on the Wigner distribution,
\begin{equation}
    \mathcal{D} [\hat{p}] \hat{\rho} = \frac{1}{2} \int \dd x \frac{\partial^2 P}{\partial x^2} (x) \ket{x} \bra{x} .
\end{equation}
Therefore, we can treat the process $\dot{\hat{\rho}} =\mathcal{D} [\hat{p}] \hat{\rho}$ with $\hat{\rho} = \int \dd x P (x) \ket{x} \bra{x}$, as the classical decoherence of quantum Lindbladian evolution. Now, a natural question is: what do the continuous-time Petz map and the continuous-time twirled Petz map look like for such a quantum channel? In this appendix section, we provide that answer: any continuous-time rotated Petz map (namely, including the original continuous-time Petz map) is simply the standard denoising Fokker-Planck equation!

In and only in this appendix section, we always denote the quantum operator $\hat{L}$ (with hat) on the Hilbert space of states, and denote its corresponding differential operator $L$ (without hat) on function space.

\subsection{Differential operator representation}

For calculation convenience, we first state the differential operator representation $L$ for any operator $\hat{L}$. Here, $\hat{L}$ is an operator defined on the Hilbert space of states. Under basis of $\left\{ \ket{x} \right\}_{x \in \mathbb{R}}$, $\hat{L}$ has form of
\begin{equation}
    \hat{L} = \int \dd x \dd x'  \bra{x} \hat{L} \ket{x'} \ket{x} \bra{x'} .
\end{equation}
We define the kernel:
\begin{equation}
    K (x, x') := \bra{x} \hat{L} \ket{x'} .
\end{equation}
Let $\hat{L}$ acting on $\ket{\psi} = \int \dd x \psi (x) \ket{x}$, where the wavefunction $\psi (x)$ can be expressed by
\begin{equation}
    \braket{x}{\psi} = \int \dd x' \psi (x') \braket{x}{x'} = \int \dd x' \psi (x') \braket{x}{x'} = \int \dd x' \psi (x') \delta (x - x') = \psi (x) .
\end{equation}
Therefore,
\begin{align}
    \hat{L} \ket{\psi} & = \int \dd x \dd x' \dd x''  \bra{x} \hat{L} \ket{x'} \psi (x'') \ket{x} \braket{x'}{x''} \nonumber\\
    & = \int \dd x \dd x'  \bra{x} \hat{L} \ket{x'} \psi (x') \ket{x} \nonumber\\
    & = \int \dd x \left( \int \dd x' K (x, x') \psi (x') \right) \ket{x} . 
\end{align}
If we define a differential operator $L$, acting on any wavefunction $\psi$, such that
\begin{equation}
    L \psi (x) := \int \dd x' K (x, x') \psi (x'),
\end{equation}
we get
\begin{equation}
    \hat{L} \ket{\psi} = \int \dd x L \psi (x) \ket{x} .
\end{equation}
This means that $\hat{L}$ acting on $\ket{\psi}$ in state space corresponds to the differential operator $L$ acting on the wavefunction $\psi$. We denote this correspondence
\begin{equation}
    \hat{L} \ket{\psi} \leftrightarrow L \psi .
\end{equation}
On the other hand, we can easily check that the operator product between any $\hat{L}_1$ and $\hat{L}_2$ corresponds to the differential operator composite between $L_1$ and $L_2$. In fact, let $K_1 (x, x') = \bra{x} \hat{L}_1 \ket{x'}$ and $K_2 (x, x') = \bra{x} \hat{L}_2 \ket{x'}$, we have
\begin{align}
    \hat{L}_1 \hat{L}_2 \ket{\psi} & = \int \dd x_1 \dd x_1' \dd x_2 \dd x_2'  \bra{x_1} \hat{L}_1 \ket{x_1'} \bra{x_2} \hat{L}_2 \ket{x_2'} \ket{x_1} \braket{x_1'}{x_2} \braket{x_2'}{\psi} \nonumber\\
    & = \int \dd x_1 \dd x_1' \dd x_2'  \bra{x_1} \hat{L}_1 \ket{x_1'} \bra{x_1'} \hat{L}_2 \ket{x_2'} \ket{x_1} \braket{x_2'}{\psi} \nonumber\\
    & = \int \dd x_1 \dd x_1' \dd x_2' K_1 (x_1, x'_1) K_2 (x_1', x'_2) \psi (x_2') \ket{x_1} \nonumber\\
    & = \int \dd x_1 \dd x_1' K_1 (x_1, x'_1) \left( \int \dd x_2' K_2 (x_1', x'_2) \psi (x_2') \right) \ket{x_1} \nonumber\\
    & = \int \dd x_1 \left( \int \dd x_1' K_1 (x_1, x'_1) L_2 \psi (x_1') \right) \ket{x_1} \nonumber\\
    & = \int \dd x_1 (L_1 L_2 \psi (x_1)) \ket{x_1}, 
\end{align}
namely, $\hat{L}_1 \hat{L}_2 \ket{\psi} \leftrightarrow L_1 L_2 \psi$.

Also, let us recall that $\hat{p} \leftrightarrow - \mathi \partial_x$ and $\hat{p}^2 \leftrightarrow - \partial_x^2$ have kernels $\mathi \frac{\partial}{\partial x'} \delta (x - x')$ and $- \frac{\partial^2}{{\partial x'}^2} \delta (x - x')$, this is because
\begin{align}
    \int \dd x'  \left( \mathi \frac{\partial }{\partial x'} \delta (x - x') \right) \psi (x') & = - \mathi \int \dd x'  \left( \frac{\partial }{\partial x'} \psi (x') \right) \delta (x - x') = \mathi \partial_x \psi (x), \\
    - \int \dd x'  \left( \frac{\partial^2}{{\partial x'}^2} \delta (x - x') \right) \psi (x') & = - \int \dd x'  \left( \frac{\partial^2}{{\partial x'}^2} \psi (x') \right) \delta (x - x') = - \partial^2_x \psi (x) . 
\end{align}

\subsection{Forward process is classical diffusion}
\label{apxsec:diag_forward}

Now let us consider a diagonal state $\hat{\rho} = \int \dd x P (x) \ket{x} \bra{x}$, its differential operator is simply a function multiplier:
\begin{equation}
    \hat{\rho} \ket{\psi} = \int \dd x P (x) \ket{x} \braket{x}{\psi} = \int \dd x P (x) \psi (x) \ket{x} .
\end{equation}
Also for $\hat{p}$, it is well known that $\hat{p} \leftrightarrow - \mathi \partial_x$. We can derive that, $\mathcal{D} [\hat{a}] \hat{\rho} =\mathcal{D} [\hat{p}] \hat{\rho}$ in the forward process is
\begin{align}
    \mathcal{D} [\hat{p}] \hat{\rho} \ket{\psi} & \leftrightarrow (- \mathi \partial_x) p (- \mathi \partial_x) \psi + \frac{1}{2} \partial^2_x (P \psi) + \frac{1}{2} p \partial^2_x (\psi) \nonumber\\
    & = - (P' \psi' + P \psi'') + \left( \frac{1}{2} P'' \psi + P' \psi' + \frac{1}{2} P \psi'' \right) + \frac{1}{2} P \psi'' = \frac{1}{2} P'' \psi . 
\end{align}
Here, we adopt the abbreviation $f' (x) = \frac{\partial f}{\partial x} (x)$ and $f'' (x) = \frac{\partial^2 f}{\partial x^2} (x)$ for any function $f$. Here $\frac{1}{2} P'' (x) = \frac{1}{2} \frac{\partial^2 P}{\partial x^2} (x)$ is a simple function multiplication, that is
\begin{equation}
    \mathcal{D} [\hat{p}] \hat{\rho} = \int \dd x \left( \frac{1}{2} \frac{\partial^2 P}{\partial x^2} (x) \right) \ket{x} \bra{x} .
\end{equation}
This is exactly the standard diffusion term in the classical diffusion model.

\subsection{Continuous-time Petz map under decoherence limit}
\label{apxsec:diag_petz}

\subsubsection{Dissipative term in continuous-time Petz map}

Now we can compute $\mathcal{D} [\hat{b}] \hat{\sigma} =\mathcal{D} \! \left[ \hat{\rho}^{\frac{1}{2}} \hat{p} \hat{\rho}^{- \frac{1}{2}} \right] \hat{\sigma}$ where $\hat{\sigma} = \int \dd x \, Q (x) \ket{x} \bra{x}$. Firstly,
\begin{equation}
    \hat{\rho}^{\frac{1}{2}} \hat{p} \hat{\rho}^{- \frac{1}{2}} \ket{\psi} \leftrightarrow \sqrt{P} (- \mathi \partial_x) \frac{1}{\sqrt{P}} \psi 
    = - \mathi \left( \partial_x - \frac{1}{2} (\partial_x \ln P) \right) \psi, 
\end{equation}
namely $\hat{b} \leftrightarrow b = - \mathi \left( \partial_x - \frac{1}{2} (\partial_x \ln P) \right)$. Similarly,
\begin{equation}
    \hat{\rho}^{- \frac{1}{2}} \hat{p} \hat{\rho}^{\frac{1}{2}} \ket{\psi} \leftrightarrow \frac{1}{\sqrt{P}} (- \mathi \partial_x) \sqrt{P} \psi 
    = - \mathi \left( \partial_x + \frac{1}{2} (\partial_x \ln P) \right) \psi, 
\end{equation}
namely $\hat{b}^{\dagger} \leftrightarrow b = - \mathi \left( \partial_x + \frac{1}{2} (\partial_x \ln P) \right)$. From now on, let us introduce the score function
\begin{equation}
    s (x) := \partial_x  (\ln P (x)) = \frac{P' (x)}{P (x)} .
\end{equation}
\textit{Term $\hat{b} \hat{\sigma} \hat{b}^{\dagger}$}: for test function $\psi$,
\begin{equation}
    \hat{b} \hat{\sigma} \hat{b}^{\dagger} \ket{\psi} \leftrightarrow - \left( \partial_x - \frac{1}{2} s \right) Q \left( \partial_x + \frac{1}{2} s \right) \psi = \left( - Q \partial_x^2 - Q' \partial_x + \left( - \frac{1}{2} s Q' - \frac{1}{2} s' Q + \frac{1}{4} s^2 Q \right) \right) \psi . 
\end{equation}
\textit{Term $\hat{b}^{\dagger} \hat{b} \hat{\sigma}$}: for test function
$\psi$,
\begin{equation}
    \hat{b}^{\dagger} \hat{b} \hat{\sigma} \ket{\psi} \leftrightarrow - \left( \partial_x + \frac{1}{2} s \right) \left( \partial_x - \frac{1}{2} s \right) Q \psi = \left( - Q \partial_x^2 - 2 Q' \partial_x + \left( - Q'' + \frac{1}{2} s' Q + \frac{1}{4} s^2 Q \right) \right) \psi . 
\end{equation}
\textit{Term $\hat{\sigma} \hat{b}^{\dagger} \hat{b}$}: for test function
$\psi$,
\begin{equation}
    \hat{\sigma} \hat{b}^{\dagger} \hat{b} \ket{\psi} \leftrightarrow - Q \left( \partial_x + \frac{1}{2} s \right) \left( \partial_x - \frac{1}{2} s \right) \psi = \left( - Q \partial_x^2 + \frac{1}{2} s' Q + \frac{1}{4} s^2 Q \right) \psi . 
\end{equation}
Eventually,
\begin{equation}
    \mathcal{D} [\hat{b}] \hat{\sigma} \ket{\psi} = \hat{b} \hat{\sigma} \hat{b}^{\dagger} \ket{\psi} - \frac{1}{2} \hat{b}^{\dagger} \hat{b} \hat{\sigma} \ket{\psi} - \frac{1}{2} \hat{\sigma} \hat{b}^{\dagger} \hat{b} \ket{\psi} \leftrightarrow \left( - \frac{1}{2} s q' - s' q + \frac{1}{2} q'' \right) \psi . 
\end{equation}
Here $- \frac{1}{2} s (x) \frac{\partial Q}{\partial x} (x) - \frac{\partial s}{\partial x} (x) Q (x) + \frac{1}{2} \frac{\partial^2 Q}{\partial x^2} (x)$ is a simple function multiplication, that is
\begin{equation}
    \mathcal{D} [\hat{b}] \hat{\sigma} = \int \dd x \left( - \frac{1}{2} s (x) \frac{\partial Q}{\partial x} (x) - \frac{\partial s}{\partial x} (x) Q (x) + \frac{1}{2} \frac{\partial^2 Q}{\partial x^2} (x) \right) \ket{x} \bra{x} .
\end{equation}

\subsubsection{Hamiltonian term in continuous-time Petz map}

Before computing $- \mathi [\hat{R}, \sigma]$, we recall that
\begin{equation}
    \hat{R} = - \frac{\mathi}{2} \int \dd x \dd x'  \frac{\sqrt{P (x)} - \sqrt{P (x')}}{\sqrt{P (x)} + \sqrt{P (x')}} \bra{x} \hat{p}^2 + \hat{b}^{\dagger} \hat{b} \ket{x'} \ket{x} \bra{x'} .
\end{equation}
We first check that
\begin{align}
    \hat{p}^2 \ket{\psi} & \leftrightarrow - \partial_x^2 \psi, \\
    \hat{b}^{\dagger} \hat{b} \ket{\psi} & \leftrightarrow \left( - \partial_x^2 + \frac{1}{2} s' + \frac{1}{4} s^2 \right) \psi . 
\end{align}
We notice that
\begin{equation}
    \int \dd x \dd x'  \frac{\sqrt{P (x)} - \sqrt{P (x')}}{\sqrt{P (x)} + \sqrt{P (x')}} \bra{x} \left( \frac{1}{2} s' (x) + \frac{1}{4} s (x)^2 \right) \ket{x'} \ket{x} \bra{x'} = 0.
\end{equation}
Remember $\hat{p}^2 \leftrightarrow - \partial_x^2$ has kernel $- \frac{\partial^2}{{\partial x'}^2} \delta (x - x')$. Then, for computing $\hat{R}$, we just need to compute
\begin{align}
    \hat{R} \ket{\psi} & = - \mathi \int \dd x \dd x'  \frac{\sqrt{P (x)} - \sqrt{P (x')}}{\sqrt{P (x)} + \sqrt{P (x')}} \left( - \frac{\partial^2}{{\partial x'}^2} \delta (x - x') \right) \psi (x') \ket{x} \nonumber\\
    & = \mathi \int \dd x \dd x'  \frac{\partial^2}{{\partial x'}^2}  \left( \frac{\sqrt{P (x)} - \sqrt{P (x')}}{\sqrt{P (x)} + \sqrt{P (x')}} \psi (x') \right) \delta (x - x') \ket{x} \nonumber\\
    & = \mathi \int \dd x \left( - \frac{P'}{2 P} \psi' + \frac{{P'}^2 - P P''}{4 P^2} \psi \right) \ket{x} \nonumber\\
    & = \mathi \int \dd x \left( - \frac{s}{2} \partial_x - \frac{1}{4} s' \right) \psi \ket{x} .
\end{align}
This means that the Hermitian
\begin{equation}
    \hat{R} \leftrightarrow R = - \frac{\mathi}{2} s \partial_x - \frac{\mathi}{4} s' .
\end{equation}
Eventually,
\begin{equation}
    - \mathi [\hat{R}, \hat{\sigma}] \ket{\psi} \leftrightarrow - \frac{1}{2} s \partial_x (Q \psi) + \frac{1}{2} s Q \partial_x \psi = \left( - \frac{1}{2} s Q' \right) \psi .
\end{equation}
Here $- \frac{1}{2} s (x) \frac{\partial Q}{\partial x} (x)$ is a simple function multiplication, that is
\begin{equation}
    - \mathi [\hat{R}, \hat{\sigma}] = \int \dd x \left( - \frac{1}{2} s (x) \frac{\partial Q}{\partial x} (x) \right) \ket{x} \bra{x} .
\end{equation}

\subsubsection{Final expression of continuous-time Petz map under decoherence limit}

Finally, we have (remember $s = \partial_x  (\ln P (x))$ is the score function)
\begin{align}
    - \mathi [\hat{R}, \hat{\sigma}] \ket{\psi} & \leftrightarrow \left( - \frac{1}{2} s Q' \right) \psi, \\ 
    \mathcal{D} [\hat{b}] \hat{\sigma} \ket{\psi} & \leftrightarrow \left( - \frac{1}{2} s Q' - s' Q + \frac{1}{2} Q'' \right) \psi, \\
    (- \mathi [\hat{R}, \hat{\sigma}] +\mathcal{D} [\hat{b}] \hat{\sigma}) \ket{\psi} & \leftrightarrow \left( - \partial_x (s Q) + \frac{1}{2} Q'' \right) \psi . 
\end{align}
We note here that both $- \mathi [\hat{R}, \hat{\sigma}]$ and $\mathcal{D} [\hat{b}]$ are not trace-class, but their summation is trace-class. Finally, for momentum jump operator $\hat{p}$, and for any state $\hat{\rho} = \int \dd x \, P (x) \ket{x} \bra{x}$, $\hat{\sigma} = \int \dd x \, Q (x) \ket{x} \bra{x}$, we have
\begin{equation}
     - \mathi [\hat{R}, \hat{\sigma}] +\mathcal{D} [\hat{b}] \hat{\sigma} = \int \dd x \left( - \frac{\partial}{\partial x} \left( \left( \frac{\partial}{\partial x} \ln P (x) \right) Q (x) \right) + \frac{1}{2} \frac{\partial^2}{\partial x^2} (Q (x)) \right) \ket{x} \bra{x} .
\end{equation}
This is exactly the standard denoising term in the classical diffusion model.

\subsection{Continuous-time rotated and twirled Petz map under decoherence limit}
\label{apxsec:diag_rotated}

Now consider jump operator $\hat{b}_{\theta} = \hat{\rho}^{\frac{1 - \mathi \theta}{2}} \hat{p} \hat{\rho}^{\frac{- 1 + \mathi \theta}{2}}$, we have
\begin{equation}
    \hat{b}_{\theta} \ket{\psi} = P^{\frac{1 - \mathi \theta}{2}} (- i \partial_x) P^{\frac{- 1 + \mathi \theta}{2}} \psi = \left( - i \partial_x + \frac{i + \theta}{2} (\partial_x \ln P) \right) \psi .
\end{equation}
Similarly, for $\hat{b}^{\dagger}_{\theta} = \hat{\rho}^{\frac{- 1 - \mathi \theta}{2}} \hat{p} \hat{\rho}^{\frac{1 + \mathi \theta}{2}}$, we have
\begin{equation}
    \hat{b}^{\dagger}_{\theta} \ket{\psi} = P^{\frac{1 - \mathi \theta}{2}} (- i \partial_x) P^{\frac{- 1 + \mathi \theta}{2}} \psi = \left( - i \partial_x + \frac{- i + \theta}{2} (\partial_x \ln P) \right) \psi .
\end{equation}
Then, we can compute that
\begin{align}
    \mathcal{D} [\hat{b}_{\theta}] \hat{\sigma} \ket{\psi} & = \hat{b}_{\theta} \hat{\sigma} \hat{b}^{\dagger}_{\theta} \ket{\psi} - \frac{1}{2} \hat{b}^{\dagger}_{\theta} \hat{b}_{\theta} \hat{\sigma} \ket{\psi} - \frac{1}{2} \hat{\sigma} \hat{b}^{\dagger}_{\theta} \hat{b}_{\theta} \ket{\psi} \nonumber\\
    & \leftrightarrow \left( - \frac{1}{2} s Q' - s' Q + \frac{1}{2} Q'' \right) \psi . 
\end{align}
Recall in general $- \mathi \hat{R}_{\theta} = \sum_{i, j} \frac{2 P_i^{\frac{1 - \mathi \theta}{2}} P_j^{\frac{1 + \mathi \theta}{2}} - P_i - P_j}{2 (P_i - P_j)} \bra{i} \hat{a}^{\dagger} \hat{a} \ket{j} \ket{i} \bra{j} + \sum_{i, j} \frac{2 P_i^{\frac{1 + \mathi \theta}{2}} P_j^{\frac{1 - \mathi \theta}{2}} - P_i - P_j}{2 (P_i - P_j)} \bra{i} \hat{b}^{\dagger}_{\theta} \hat{b}_{\theta} \ket{j} \ket{i} \bra{j}$. We first get
\begin{equation}
    \hat{b}^{\dagger}_{\theta} \hat{b}_{\theta} \ket{\psi} \leftrightarrow \left( - \partial_x^2 - \mathi \theta s \partial_x + \frac{1 - \mathi \theta}{2} s' + \frac{1 + \theta^2}{4} s^2 \right) \psi . 
\end{equation}
In order to simplify $\hat{R}_{\theta}$, we take the expansion
\begin{align}
    \!\!\! \frac{2 P (x)^{\frac{1 \mp \mathi \theta}{2}} P (x')^{\frac{1 \pm \mathi \theta}{2}} - P (x) - P (x')}{2 (P (x) - P (x'))} & = \mp \frac{\mathi \theta}{2} - \frac{1 + \theta^2}{8} s (x) (x - x') + (1 + \theta^2) \left( \frac{3 s' (x) \pm \mathi \theta s (x)^2}{48} \right) (x - x')^2 + \cdots.
\end{align}
The kernel of $\hat{b}^{\dagger}_{\theta} \hat{b}_{\theta}$ is
\begin{equation}
    \bra{x} \hat{b}^{\dagger}_{\theta} \hat{b}_{\theta} \ket{x'} = \left( \frac{1 - \mathi \theta}{2} s' (x) + \frac{1 + \theta^2}{4} s (x)^2 \right) \delta (x - x') + \mathi \theta s \frac{\partial}{\partial x'} \delta (x - x') - \frac{\partial^2}{{\partial x'}^2} \delta (x - x') .
\end{equation}
We need to use the following relations:
\begin{align}
    \int \dd x' (x - x') \left( \frac{\partial}{\partial x'} \delta (x - x') \right) \psi (x') & = \psi, \\ \int \dd x' (x - x') \left( \frac{\partial^2}{{\partial x'}^2} \delta (x - x') \right) \psi (x') & = - 2 \partial_x \psi, \\
    \int \dd x' (x - x')^2 \left( \frac{\partial^2}{{\partial x'}^2} \delta (x - x') \right) \psi (x') & = 2 \psi . 
\end{align}
This yields
\begin{align}
    & \int \dd x'  \frac{2 P (x)^{\frac{1 - \mathi \theta}{2}} P (x')^{\frac{1 + \mathi \theta}{2}} - P (x) - P (x')}{2 (P (x) - P (x'))} \left( - \frac{\partial^2}{{\partial x'}^2} \delta (x - x') \right) \psi (x') \nonumber\\
    =~& - \int \dd x' \left( - \frac{\mathi \theta}{2} - \frac{1 + \theta^2}{8} s (x) (x - x') + (1 + \theta^2) \left( \frac{3 s' (x) + \mathi \theta s (x)^2}{48} \right) (x - x')^2 \right) \left( \frac{\partial^2}{{\partial x'}^2} \delta (x - x') \right) \psi (x') \nonumber\\
    =~& \frac{\mathi \theta}{2} \partial_x^2 \psi - \frac{1 + \theta^2}{4} s \partial_x \psi - (1 + \theta^2) \left( \frac{3 s' + \mathi \theta s^2}{24} \right) \psi . \\
    & \int \dd x'  \frac{2 P (x)^{\frac{1 + \mathi \theta}{2}} P (x')^{\frac{1 - \mathi \theta}{2}} - P (x) - P (x')}{2 (P (x) - P (x'))} \bra{x} \hat{b}^{\dagger}_{\theta} \hat{b}_{\theta} \ket{x'} \psi (x') \nonumber\\
    =~& \int \dd x' \left( \frac{\mathi \theta}{2} - \frac{1 + \theta^2}{8} s (x) (x - x') + (1 + \theta^2) \left( \frac{3 s' (x) - \mathi \theta s (x)^2}{48} \right) (x - x')^2 \right) \nonumber\\
    & \times \left( \left( \frac{1 - \mathi \theta}{2} s' (x) + \frac{1 + \theta^2}{4} s (x)^2 \right) \delta (x - x') + \mathi \theta s \frac{\partial}{\partial x'} \delta (x - x') - \frac{\partial^2}{{\partial x'}^2} \delta (x - x') \right) \psi (x') \nonumber\\
    =~& \frac{\mathi \theta}{2} \left( \frac{1 - \mathi \theta}{2} s' + \frac{1 + \theta^2}{4} s^2 \right) \psi - \mathi \theta s \left( \frac{\mathi \theta}{2} \partial_x + \frac{1 + \theta^2}{8} s \right) \psi + \left( - \frac{\mathi \theta}{2} \partial_x^2 - \frac{1 + \theta^2}{4} s \partial_x - (1 + \theta^2) \left( \frac{3 s' - \mathi \theta s^2}{24} \right) \right) \psi . 
\end{align}
Therefore, we add these two equations together
\begin{equation}
    \hat{R}_{\theta} \leftrightarrow - \frac{\mathi}{2} s \partial_x - \frac{\mathi + \theta}{4} s' .
\end{equation}
And immediately, $- \mathi [\hat{R}_{\theta}, \hat{\sigma}] \ket{\psi} \leftrightarrow \left( - \frac{1}{2} s q' \right) \psi$.

Finally, we have (remember $s = \partial_x  (\ln P (x))$ is the score function)
\begin{align}
    - \mathi [\hat{R}_{\theta}, \hat{\sigma}] \ket{\psi} & \leftrightarrow \left( - \frac{1}{2} s Q' \right) \psi, \\ 
    \mathcal{D} [\hat{b}] \hat{\sigma} \ket{\psi} & \leftrightarrow \left( - \frac{1}{2} s Q' - s' Q + \frac{1}{2} Q'' \right) \psi, \\ (- \mathi [\hat{R}_{\theta}, \hat{\sigma}] +\mathcal{D} [\hat{b}]
    \hat{\sigma}) \ket{\psi} & \leftrightarrow \left( - \partial_x (s Q) + \frac{1}{2} Q'' \right) \psi . 
\end{align}

In other word, for momentum jump operator $\hat{p}$, and for any state $\hat{\rho} = \int \dd x P (x) \ket{x} \bra{x}$, $\hat{\sigma} = \int\dd x \, Q (x) \ket{x} \bra{x}$, we have
\begin{equation}
    - \mathi [\hat{R}_{\theta}, \hat{\sigma}] +\mathcal{D} [\hat{b}] \hat{\sigma} = \int \dd x \left( - \frac{\partial}{\partial x} \left( \left( \frac{\partial}{\partial x} \ln P (x) \right) Q (x) \right) + \frac{1}{2} \frac{\partial^2}{\partial x^2} (Q (x)) \right) \ket{x} \bra{x} .
\end{equation}
Again, this is still exactly the standard denoising term in the classical diffusion model!

\subsection{Derivation of discrete variable diffusion models by Petz map}

Suppose $\hat{\rho} = \sum_i P_i \ket{i} \bra{i}$ and $\hat{\sigma} = \sum_i Q_i \ket{i} \bra{i}$, and jump operators $\hat{a}_{i j} = \lambda^{1 / 2}_{i j} \ket{i} \bra{j}$ with $\lambda_{i j} \geq 0$ and $i \neq j$. Then
\begin{equation}
    \mathcal{D} [\hat{a}_{i j}] \hat{\rho} = \lambda_{i j} \ket{i} \bra{j} \rho \ket{j} \bra{i} - \frac{\lambda_{i j}}{2} \left( \ket{j} \bra{j} \hat{\rho} + \hat{\rho} \ket{j} \bra{j} \right) = \lambda_{i j} P_j \ket{i} \bra{i} - \lambda_{i j} P_j \ket{j} \bra{j},
\end{equation}
The forward process of the diffusion model for the discrete variable is exactly the classical master equation
\begin{equation}
    \dot{P}_i = \bra{i} \left( \sum_{i' j'} \mathcal{D} [\hat{a}_{i' j'}] \hat{\rho} \right) \ket{i} = \sum_{j'} \lambda_{i j'} P_{j'} - \left( \sum_{i'} \lambda_{i' i} \right) P_i = \sum_{j \neq i} \lambda_{i j} P_j - \left( \sum_{j \neq i} \lambda_{j i} \right) P_i .
\end{equation}
Now, let us compute Petz map, which satisfies $-\mathcal{D} [\hat{a}] \hat{\rho} = - \mathi [R, \hat{\rho}] +\mathcal{D} [\hat{b}] \hat{\rho}$. First of all, $\hat{b}_{i j} = \hat{\rho}^{\frac{1}{2}} \hat{a}^{\dagger}_{i j} \hat{\rho}^{- \frac{1}{2}} = \sqrt{\lambda_{i j} P_j / P_i} \ket{j} \bra{i}$. Namely,
\begin{equation}
    \mathcal{D} [\hat{b}_{i j}] \hat{\rho} = \lambda_{i j} P_j \ket{j} \bra{j} - \lambda_{i j} P_j \ket{i} \bra{i} = -\mathcal{D} [\hat{a}_{i j}] \hat{\rho} .
\end{equation}
And for $\hat{a}_{i j}^{\dagger} \hat{a}_{i j} = \lambda_{i j} \ket{j} \bra{j}$ and $\hat{b}_{i j}^{\dagger} \hat{b}_{i j} = \lambda_{i j} P_j / P_i \ket{i} \bra{i}$, we have
\begin{equation}
    \hat{R}_{i j} = - \frac{\mathi}{2} \sum_{i', j'} \frac{\sqrt{P_{i'}} - \sqrt{P_{j'}}}{\sqrt{P_{i'}} + \sqrt{P_{j'}}} \bra{i'} (\hat{a}_{i j}^{\dagger} \hat{a}_{i j} + \hat{b}_{i j}^{\dagger} \hat{b}_{i j}) \ket{j'} \ket{i'} \bra{j'} .
\end{equation}
$\sqrt{P_{i'}} - \sqrt{P_{j'}}$ can be non-zero only if $i' \neq j'$. But then $i' \neq j'$ implies that $\bra{i'} (\hat{a}_{i j}^{\dagger} \hat{a}_{i j} + \hat{b}_{i j}^{\dagger} \hat{b}_{i j}) \ket{j'}$ must be zero (because $\hat{a}_{i j}^{\dagger} \hat{a}_{i j}, \hat{b}_{i j}^{\dagger} \hat{b}_{i j}$ are diagonal), and $\hat{R}_{i j} = 0$ for any $i, j$. 
Namely, the denoiser in discrete space is
\begin{equation}
    \dot{Q}_i = - \left( \sum_{j \neq i} \left( \lambda_{i j} \frac{P_j}{P_i} \right) \right) Q_i + \sum_{j \neq i} \left( \left( \lambda_{j i} \frac{P_i}{P_j} \right) Q_j \right) .
\end{equation}
Therefore, for any forward jump $j \rightarrow i$ with strength $\lambda_{i j}$ with $i \neq j$, the reversal process induced by the Petz map is a transition process $j \rightarrow i$ with strength $\lambda_{j i} P_i / P_j$. This exactly reproduces the result we derived in SM \ref{apxsec:dv}.

\end{widetext}

\end{document}